\newcommand{\hamed}[2][noinline]{\todo[color=blue!20,#1]{H: #2}}
\newtheorem{theorem}{Theorem}[section]
\newtheorem{corollary}[theorem]{Corollary}
\newtheorem{proposition}[theorem]{Proposition}
\newtheorem{lemma}[theorem]{Lemma}
\newcommand{\R}{\mathbb{R}}
\newcommand{\W}{\mathbf{W}}
\newcommand{\M}{\mathbf{M}}
\newcommand{\hh}{\hat{h}}
\newcommand{\Q}{\mathbf{Q}}
\newcommand{\K}{\mathbf{K}}
\newcommand{\V}{\mathbf{V}}
\newcommand{\Ho}{\mathbf{H}}
\newcommand{\Hh}{\widehat{\mathbf{H}}}
\newcommand{\Qh}{\widehat{\mathbf{Q}}}
\newcommand{\Kh}{\widehat{\mathbf{K}}}
\newcommand{\qh}{\hat{q}}
\newcommand{\kh}{\hat{k}}
\newcommand{\vh}{\hat{v}}
\newcommand{\Hb}{\Bar{\mathbf{H}}}
\newcommand{\Wh}{\widehat{\mathbf{W}}}
\newcommand{\llo}{^{(\ell)}}
\newcommand{\llh}{^{(\ell+1/2)}}
\newcommand{\lln}{^{(\ell+1)}}
\newcommand{\ah}{\widehat{a}}
\newcommand{\eps}{\varepsilon}
\newcommand{\norm}[1]{\lVert #1 \rVert}
\newcommand{\abs}[1]{\lvert #1 \rvert}
\newcommand{\tp}{^\mathsf{T}}
\DeclareMathOperator{\ReLU}{ReLU}
\title{Even Sparser Graph Transformers}
\author{%
  Hamed Shirzad \\
  University of British Columbia\\
  \texttt{shirzad@cs.ubc.ca}
  \And
  Honghao Lin \\
  Carnegie Mellon University\\
  \texttt{honghaol@andrew.cmu.edu}
  \And
  Balaji Venkatachalam \\
  Meta\thanks{Work done in part while at Google.}\\
  \texttt{bave@meta.com}
  \And
  Ameya Velingker \\
  Independent Researcher\footnotemark[1]\\
  \texttt{ameyav@gmail.com}
  \And
  David P. Woodruff \\
  CMU \& Google Research \\
  \texttt{dwoodruf@cs.cmu.edu}
  \And
  Danica J.\ Sutherland \\
  UBC \& Amii\\
  \texttt{dsuth@cs.ubc.ca}
}
\begin{document}

\maketitle

\begin{abstract}
  Graph Transformers excel in long-range dependency modeling, but generally require quadratic memory complexity in the number of nodes in an input graph, and hence have trouble scaling to large graphs. Sparse attention variants such as Exphormer can help, but may require high-degree augmentations to the input graph for good performance, and do not attempt to sparsify an already-dense input graph. As the learned attention mechanisms tend to use few of these edges, such high-degree connections may be unnecessary. We show (empirically and with theoretical backing) that attention scores on graphs are usually quite consistent across network widths, and use this observation to propose a two-stage procedure, which we call Spexphormer: first, train a narrow network on the full augmented graph. Next, use only the active connections to train a wider network on a much sparser graph. We establish theoretical conditions when a narrow network's attention scores can match those of a wide network, and show that Spexphormer achieves good performance with drastically reduced memory requirements on various graph datasets. Code can be found at \url{https://github.com/hamed1375/Sp_Exphormer}.
\end{abstract}

\section{Introduction}

The predominant story of the last half-decade of machine learning has been the runaway success of
Transformer models \citep{VaswaniSPUJGKP17}, across domains from natural language processing \citep{VaswaniSPUJGKP17, devlin2018bert, zaheer2020big} to computer vision \citep{dosovitskiy2020image} and, more recently, geometric deep learning \citep{DwivediBresson21, kreuzer2021rethinking, Ying2021DoTR, rampavsek2022recipe, shirzad2023exphormer, muller2023attending}.
Conventional (``full'') Transformers, however, have a time and memory complexity of $\mathcal{O}(nd^2+n^2d)$, where $n$ is the number of entities (nodes, in the case of graphs), and $d$ is the width of the network. Many attempts have been made to make Transformers more efficient (see \citet{TayDBM20} for a survey on efficient variants for \emph{sequence modeling}).
One major line of work involves \emph{sparsifying} the attention mechanism, constraining attention from all $\mathcal O(n^2)$ pairs to some smaller set of connections. For instance, for sequential data, BigBird~\citep{zaheer2020big} constructs a sparse attention mechanism by combining sliding windows, Erd\H{o}s-R\'{e}nyi auxiliary graphs, and universal connectors. Similarly, for graph data, Exphormer~\citep{shirzad2023exphormer} constructs a sparse interaction graph consisting of edges from the input graph, an overlay expander graph, and universal connections. 
We refer to such a network as a \emph{sparse attention network}. %

Exphormer %
reduces each layer's complexity from $\mathcal{O}(nd^2+n^2d)$ to $\mathcal{O}((m+n)d^2)$, where $n$ is the number of nodes, $m$ is the number of interaction edges in the sparse attention mechanism, and $d$ is the hidden dimension or width. Even so, training is still very memory-intensive for medium to large scale graphs. Also, for densely-connected input graphs with $\Theta(n^2)$ edges, there is no asymptotic improvement in complexity, as Exphormer uses all of the $\Theta(n^2)$ edges of the original input graph.
Our goal is to scale efficient graph Transformers, such as Exphormer, to even larger graphs.

One general approach for scaling models to larger graphs is based on {batching techniques}. Prominent approaches include egocentric subgraphs and random node subsets \citep{wu2022nodeformer, wu2023difformer, wu2024simplifying}. Egocentric subgraphs choose a node and include all of its $k$-hop neighbors; the expander graphs used in Exphormer, however, are exactly defined so that the size of these subgraphs grows exponentially in the number of layers -- prohibitively expensive for larger graphs. 
A similar issue arises with universally-connected nodes,
whose representation depends on all other nodes.
For uniformly-random subset batching, as the number $b$ of batches into which the graph is divided grows, each edge has chance $\frac{1}{b}$ to appear in a given step. Thus, $b$ cannot be very large without dropping important edges. A similar problem can happen in random neighbor sampling methods such as GraphSAGE \citep{hamilton2017inductive}. Although this model works well on message-passing neural networks (MPNNs) which only use the graph edges, using it for expander-augmented graphs will select only a small ratio of the expander edges, thereby breaking the universality properties provided by the expander graph.

Expander graphs enable global information propagation, and when created with Hamiltonian cycles and self-loops, produce a model that can provably approximate a full Transformer \citep[Theorem E.3]{shirzad2023exphormer}.
Yet not all of these edges turn out to be important in practice: we expect some neighboring nodes in the updated graph to have more of an effect on a given node than others. Thus, removing low-impact neighbors can improve the scalability of the model. The challenge is to identify low-impact edges without needing to train the (too-expensive) full model. \cref{fig:synthetic_task} illustrates other advantages of this batching approach; this is also discussed further in \cref{sec:discussion}.

One approach is to train a smaller network to identify which edges are significant. It is not obvious \emph{a priori} that attention scores learned from the smaller network will estimate those in the larger network, but we present an experimental study verifying that attention scores are surprisingly consistent as the network size reduces.
We also give theoretical indications that narrow networks are capable of expressing the same attention scores as wider networks of the same architecture.

 {\bf Our approach.} We first train a small-width network in order to estimate pairwise attention score patterns, which we then use to sparsify the graph and train a larger network. We first train the graphs without edge attributes. This reduces the complexity of Exphormer to $\mathcal{O}(md+nd^2)$ and then by training a much smaller width $d_s\ll d$  network, reduces the time and memory complexity by at least a factor of ${d}/{d_s}$. We also introduce two additions to the model to improve this consistency.
Training this initial network can still be memory-intensive, but as the small width implies the matrix multiplications are small, it is practical to train this initial model on a CPU node with sufficient RAM (typically orders of magnitude larger than available GPU memory), without needing to use distributed computation.
Once this initial model is trained, the attention scores can be used in creating a sparse graph, over which we train the second network.
These initial attention scores %
can be used as edge features for the second network.

As mentioned previously, we use the attention scores obtained from the trained low-width network to sparsify the graph. By selecting a fixed number $c$ of edges per attention layer for each node, we reduce the complexity of each layer to $\mathcal O(nd^2+ndc)$. This sparsification alleviates the effect of a large number of edges, and allows for initial training with a larger degree expander graph, since most of the expander edges will be filtered for the final network. This sparsification differs from conventional graph sparsification algorithms (for MPNNs) in three ways. First, we use expander edges, self-loops, and graph edges and sparsify the combination of these patterns together. Second, this sparsification is layer-wise, which means that in a multi-layer network the attention pattern will vary from layer to layer. Finally, our sampling uses a smaller network trained on the same task, identifying important neighbors based on the task, instead of approaches independent of the task such as sampling based on PageRank or a neighbor's node degree. 

Another advantage of this approach is that the fixed number of neighbors for each node enables regular matrix calculations instead of the edge-wise calculations used by \citet{kreuzer2021rethinking, shirzad2023exphormer}, greatly improving the speed of the model. After this reduction, batching can be done based on the edges over different layers, enabling Transformers to be effectively batched while still effectively approximating the main Transformer model, enabling modeling long-range dependencies. In batching large graphs, naive implementations of sampling without replacement from attention edges with varying weights can be very slow. This is especially true if the attention scores are highly concentrated on a small number of neighbors for most of the nodes.
We use reservoir sampling \citep{efraimidis2006weighted}, enabling parallel sampling with an easy, efficient GPU implementation, improving the sampling process significantly.

We only use the Transformer part of the Exphormer model, not the dual MPNN+Transformer architecture used by \cite{shirzad2023exphormer, rampavsek2022recipe}. Unlike the Exphormer approach, we do not assume that the expander graph is of degree $\mathcal{O}(1)$; we can see this as interpolating between MPNNs and full Transformers, where smaller degree expander graphs mostly rely on the graph edges and are more similar to MPNNs, while higher degree expander graphs can resemble full attention, in the most extreme case of degree $n-1$ exactly recovering a full Transformer.

To summarize, the contributions of this paper are as follows:
1) We experimentally and theoretically analyze the similarity of attention scores for networks of different widths, and propose two small architectural changes to improve this similarity.
2) We propose layer-wise sparsification, by sampling according to the learned attention scores, and do theoretical analysis on the sparsification guarantees of the attention pattern.
3) Our two-phase training process allows us to scale Transformers to larger datasets, as it has significantly smaller memory consumption, while maintaining competitive accuracy.

\begin{figure}[t!]
\centering
\includegraphics[width = 5.5in]{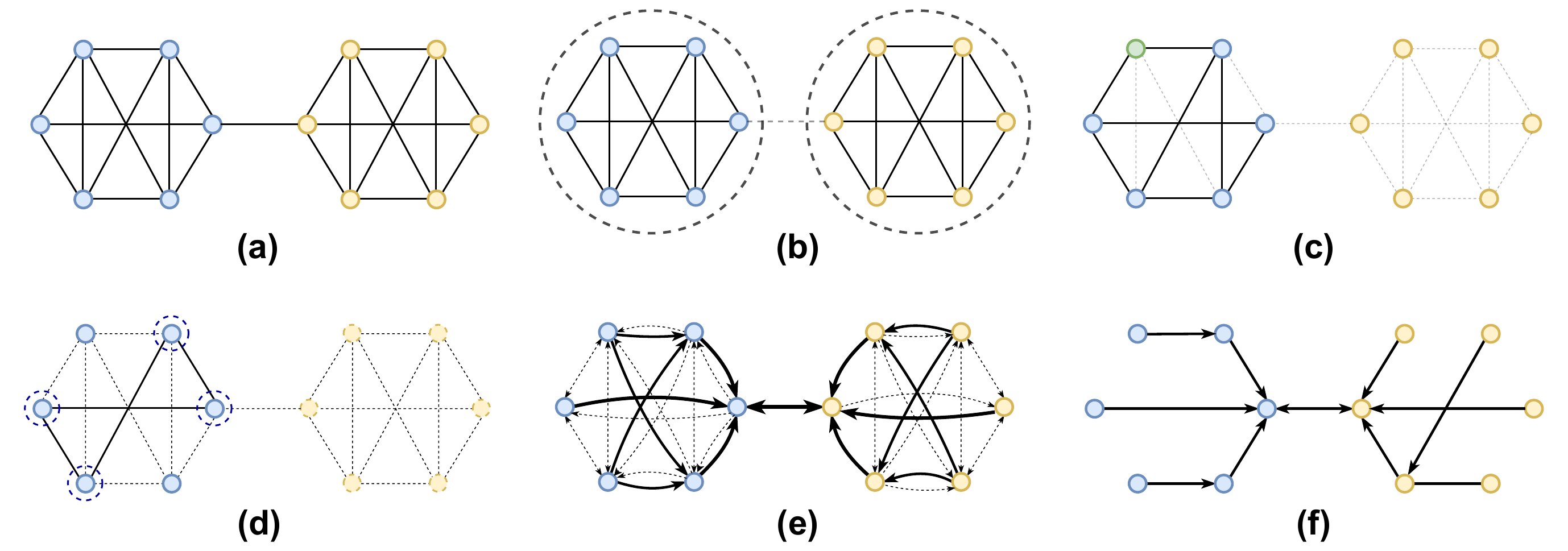}
\caption{Figure (a) shows a very simple synthetic graph where each node has a binary classification task of determining whether there exists a node of the opposite color in the same connected component. This task requires learning long-range dependencies. Figure (b) shows a natural clustering of the graph. This clustering would mean no node can do its task if models are trained only on one cluster at a time. Figure (c) shows a neighbor sampling starting from the green node, where random sampling fails to select the single important edge that bridges to the different-colored nodes. Figure (d) shows a random subset sampling strategy, where the task is solvable if and if only the two sides of the bridge between the two colors get selected. If we increase the size of each cluster, while keeping just one edge between two colors, the probability of selecting the bridge in any batch goes to zero, and thus the training will fail in this scenario. (e) shows attention scores between the nodes if trained with an attention-based network. Dashed lines have near zero attention scores, and thicker lines indicate a larger attention score. Knowing these attention scores will mean each node with just one directional edge can do the task perfectly. The attention edges are shown in (f). In case two nodes are equally informative; selecting either of them leads to the correct result.}
\label{fig:synthetic_task}
\end{figure}

\section{Related Work}

{\bf Graph Transformer Architectures.}
Attention mechanisms were proposed in early (message-passing) Graph Neural Network (GNN) architectures such as Graph Attention Networks (GAT)~\citep{velickovic2018graph}, where they guide node aggregation among neighbors, without using positional encodings. GraphBert~\citep{zhangGraphBert} finds node encodings based on the underlying graph structure. Subsequent work has proposed full-fledged graph Transformer models that generalize sequence Transformers~\citep{DwivediBresson21} and are not limited to message passing between nodes of the input graph; these include Spectral Attention Networks (SAN)~\citep{kreuzer2021rethinking}, Graphormer~\citep{Ying2021DoTR}, GraphiT~\citep{MialonCSM21}, etc. GraphGPS~\citep{rampavsek2022recipe} combines attention mechanisms with message passing, allowing the best of both worlds.

{\bf Efficient Graph Transformers.}
Several recent works have proposed various scalable graph transformer architectures. NAGphormer~\citep{nagphormer22} and Gophormer~\citep{gophormer2021} use a sampling-based approach. On the other hand, Difformer~\citep{wu2023difformer} proposes a continuous time diffusion-based transformer model. Exphormer~\citep{shirzad2023exphormer} proposes a sparse graph that combines the input graph with edges of an expander graph as well as virtual nodes. They show that their model works better than applying other sparse Transformer methods developed for sequences. Another work, NodeFormer~\citep{wu2022nodeformer}, which is inspired by Performer~\citep{ChoromanskiLDSG21}, uses the Gumbel-Softmax operator as a kernel to efficiently propagate information among all pairs of nodes. SGFormer~\citep{wu2024simplifying} shows that just using a one layer transformer network can sometimes improve the results of GCN-based networks and the low memory footprint can help scale to large networks.
Perhaps most conceptually similar to our work is Skeinformer~\citep{ChenZHJJY22}, which uses sketching techniques to accelerate self-attention.

{\bf Sampling and batching techniques.} Some sampling-based methods have been used to alleviate the problem of ``neighborhood explosion.'' For instance, sampling was used in GraphSAGE~\citep{hamilton2017inductive}, which used a fixed-size sample from a neighborhood in the node aggregation step. GraphSAINT~\citep{ZengZSKP20} scales GCNs to large graphs by sampling the training graph to create minibatches.

{\bf Other.} Expander graphs were used in convolutional networks by \citet{PrabhuVN18}.

\section{Preliminaries and Notation}

{\bf Exphormer.} 
\textsc{Exphormer} is an expander-based sparse attention mechanism for graph transformers that uses $O(|V|+|E|)$ computation, where $G=(V,E)$ is the underlying input graph.
Exphormer creates an interaction graph $H$ that consists of three main components: edges from the input graph, an overlaid expander graph, and virtual nodes (which are connected to all the original nodes).

For the \emph{expander graph} component, Exphormer uses a constant-degree random expander graph, with $\mathcal O(n)$ edges. Expander graphs have several useful theoretical properties related to spectral approximation and random walk mixing, which allow the propagation of information between pairs of nodes that are distant in the input graph $G$ without explicitly connecting all pairs of nodes. The expander edges introduce many alternative short paths between the nodes and avoid the information bottleneck that can be caused by the virtual nodes.

{\bf Our model.}
We use $H$ to denote the attention pattern,
and $\mathcal N_H(i)$ the neighbors of node $i$ under that pattern. Let $\mathbf{X} = (\mathbf{x}_1, \mathbf{x}_2, \dots, \mathbf{x}_n) \in \mathbb{R}^{d\times n}$ be the matrix of $d$-dimensional embeddings for all of the $n$ nodes.
Our primary ``driver'' is then $h$-head attention:
using $\odot$ for element-wise multiplication,
\[
 \textsc{Attn}_H(\mathbf{X})_{:,i} = \mathbf{x}_i + \sum_{j=1}^h \mathbf{V}_i^j \cdot 
 \sigma\left( \left(\mathbf{E}^j  \odot \mathbf{K}^j \right)^T \mathbf{Q}_i^j + \mathbf B^j \right)
,\]

where $\mathbf{V}_i^j = \mathbf{W}_V^j \mathbf{X}_{\mathcal{N}_H(i)}$,
$\mathbf{K} = \mathbf{W}_K^j \mathbf{X}_{\mathcal{N}_H(i)}$, and $\mathbf{Q}_i^j = \mathbf{W}_Q^j \mathbf{x}_i$,
are linear mappings of the node features for the neighbors $\mathbf X_{\mathcal N_H(i)}$,
and
$\mathbf{E}^j = \mathbf{W}_E^j \mathcal{E}_{\mathcal{N}_H(i)}$
and $\mathbf B^j = \mathbf W_B^j \mathcal E_{\mathcal N_H(i)}$
are linear maps of the edge features $\mathcal E$,
which is a $d_E \times \lvert \mathcal N_H(i) \rvert$ matrix of features for the edges coming in to node $i$. Exphormer uses learnable edge features for each type of added edge, and original edge features for the graph's edges. If the graph does not have any original edge features, it uses a learnable edge feature across all graph edges. Edge features help the model distinguish the type of attention edges. Here, $\sigma$ is an activation function. In both Exphormer and our work the activation function is $\ReLU$.

In the absence of edge features, which is the case for most of the transductive datasets, including the datasets that have been used in this paper, $\mathcal{E}_e$ for any attention edge $e$ can have one of three possible representations, and so $\mathbf{E}^j$ can be computed more simply by first mapping these three types of edge features with $\mathbf{W}_E^j$ for head $j$, and then replacing the mapped values for each edge type. This simple change reduces the complexity of the Exphormer from $\mathcal{O}(md^2+nd^2)$ to $\mathcal{O}(md+nd^2)$.

Compared to prior work, we introduce $\mathbf B^j$ as a simpler route for the model to adjust the importance of different edge types. Considering Exphormer as an interpolation between MPNNs and full Transformers, the $\mathbf B^j$ model has an easier path to allow for attention scores to be close to zero for all non-graph attention edges, without restricting the performance of the attention mechanism on graph edges. Consequently, it can function roughly as an MPNN (similar to GAT) by zeroing out the non-local attention paths.
We use $d_E = d$, and have each layer output features of the same width as its input, so that each of the $\mathbf W_\cdot^j$ parameter matrices except for $\mathbf W_B^j$ are $d \times d$, and $\mathbf W_B^j$ is $d \times 1$.

As a simple illustration that $\mathbf{E}^j$ is insufficient to allow near-zero attention scores, thus highlighting the importance of $\mathbf B^j$, note that if the columns of $\mathbf{K}$ and $\mathbf{Q}$ are distributed independently and uniformly on a unit ball (e.g., under a random initialization), there is no vector $\mathbf{E}^j$ which is identical for all edges of an expander graph that can make the attention scores for all the expander edges near-zero.

{\bf Our network compared to Exphormer.} 
We use Exphormer as the base model because it provides us the flexibility to adjust the sparsity of the attention graph and to interpolate between MPNNs and full Transformers. Exphormer can model many long-range dependencies that are not modeled by MPNNs and are very expensive to model in a full Transformer. For example, one cannot train a full Transformer model in the memory of a conventional GPU device for a dataset such as Physics, which has a graph on just 34K nodes.
In our instantiation of Exphormer, we add self-loops for every node and use $d/2$ random Hamiltonian cycles to construct our expander graph as described in \citep[Appendix C.2]{shirzad2023exphormer}.
We do not add virtual nodes in our networks. (Even so, the resulting network is still a universal approximator; \citealp[Theorem E.3]{shirzad2023exphormer}).
Although the best known results for Exphormer combine sparse attention with MPNNs, in this work, we avoid the MPNN component for scalability reasons.
We also make two additional changes; see  \cref{sec:normalizing_v,sec:variable_temp}.

\section{Method}

Our method consists of a two-phase training process. The first phase trains a model we call the {\em Attention Score Estimator Network}, whose goal is to estimate the attention scores for a larger network. This model is not particularly accurate; its only goal is for each node to learn which neighbors are most important. The learned attention scores for each layer of the first network are then used to construct sparse interaction graphs for each layer in a second model, which is trained (with hyperparameter tuning for the best results) and serves as the final predictor.

{\bf Attention Score Estimator Network.} For this network, we use a width of $4$ or $8$, with just one attention head, in our training. We tune the other hyperparameters in order to have a converged training process with reasonably high accuracy, but we do not spend much time optimizing this network as it is sufficient to learn the  important neighbors for each node, i.e., edges with high attention scores. This network will be trained with as many layers as the final network we want to train.
Because it is so narrow, it has many fewer parameters and hence much less memory and time complexity, making it cheaper to train.
Moreover, we only need to do this training once per number of layers we consider, conditioned on the fact that the training converges, even if the final model has a large number of hyperparameters.
Compared to Exphormer, we use a much higher-degree expander graph: $30$ to $200$ instead of the $6$ used for most transductive graphs by \citet{shirzad2023exphormer}.
As most of the considered datasets do not have edge features,
we use a learnable embedding for each type of edge (graph edge, expander edge, or self-loop).
We also make two small changes to the architecture and the training process of this model, discussed below.
\Cref{sec:attention_score_analysis} shows experimentally that the low-width network is a good estimator of the attention scores for a large-width network.

{\bf Normalizing V.} 
\label{sec:normalizing_v}
Having a smaller attention score,
$\alpha_{ij} < \alpha_{i j'}$,
does not necessarily mean that $j$'s contribution to $i$'s new features is smaller than that of $j'$:
if $\lVert \mathbf V_j \rVert \gg \lVert \mathbf V_{j'} \rVert$,
the net contribution of $j$ could be larger.
Although Transformers typically use layer normalization, they do not typically do so after mapping $X$ to $\mathbf{V}$. We normalize the rows of $\mathbf{V}$ to have the same vector sizes for all nodes. In our experiments, normalizing to size one reduced performance significantly; however, adding a learnable global scale $s$,
so that $\mathbf V_i$ becomes $\frac{s {\mathbf{V}_i}}{{\lvert\lvert \mathbf{V_i} \rvert\rvert_2}}$,
maintained performance while making attention scores more meaningful.

\paragraph{Variable Temperature}
\label{sec:variable_temp}
One of the side goals is to have sharper attention scores, guiding the nodes to get their information from as few nodes as possible. Using temperature in the attention mechanism can do this, where logits will be divided by a temperature factor $\tau$ before being fed into a softmax. Normal attention corresponds to $\tau = 1$; smaller $\tau$ means sharper attention scores. However, setting the temperature to a small value from the beginning will make the random initialization more significant, and increase the randomness in the training process. Instead, we start with $\tau=1.0$ and gradually anneal it to $0.05$ by the end of the training. We set an initial phase for $\lambda$ epochs where we use $\tau=1$; this lets the model learn which neighbors are more important for each node slowly. We multiply $\tau$ with a factor $\gamma$ after each epoch, obtaining a temperature in epoch $t>\lambda$ of $\max(\gamma^{t-\lambda}, 0.05)$. We use $\lambda=5$  and $\gamma=0.99$ or $0.95$ depending on how fast the learning converges.

\begin{figure}[t!]
\centering
\includegraphics[width = 4.5in]{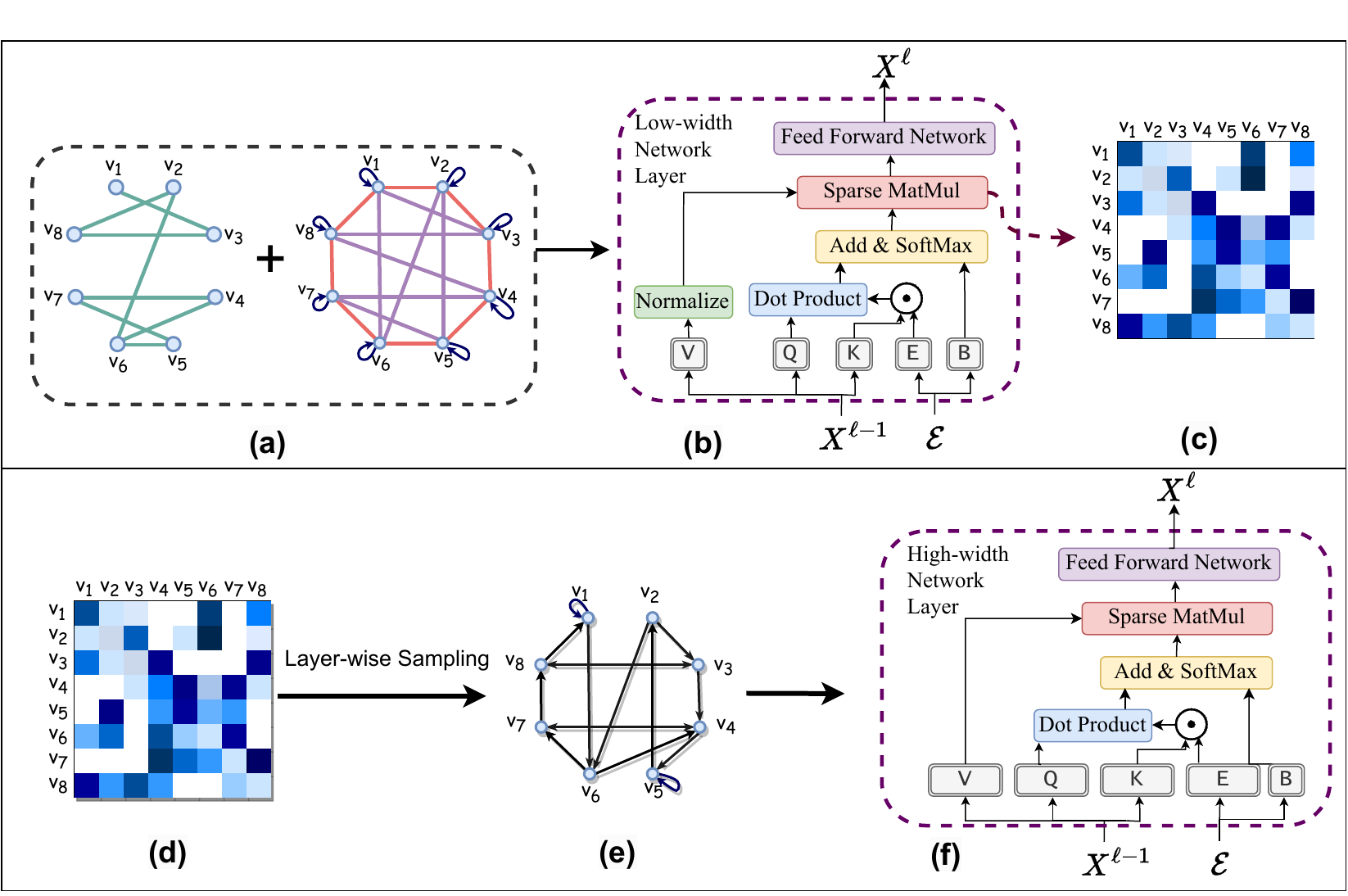}
\caption{Steps of our method.
\textbf{(a)} The attention mechanism for the attention score estimator network combines graph edges with an expander graph and self-loops. The expander graphs are constructed by combining a small number of Hamiltonian cycles -- here two, in red and in purple -- then confirming the spectral gap is large enough.
\textbf{(b)} Self-attention layers in the estimator network use this sparse attention mechanism; its self-attention layers normalize $\mathbf{V}$. 
\textbf{(c, d)} Attention scores are extracted from this network for each layer, and used to sample, in \textbf{(e)}, a sparse directed graph, which becomes the attention graph for the final network \textbf{(f)}. This network, with a much larger feature dimension, does not normalize $\mathbf{V}$.}
\label{fig:method}
\vspace{-10pt}
\end{figure}

{\bf Sparser Attention Pattern.} The memory and time complexity of Exphormer is linearly dependent on the number of edges. Also, with a small number of layers, the expander degree should be high enough to ensure a large enough receptive field for each node in order to learn the long-range dependencies. Not all these edges are equally important, and many of them will have a near-zero effect on the final embedding of each node. Reducing the number of edges can alleviate memory consumption. Additionally, a sparser pattern lets us use batching techniques for the larger graphs.
In this work, we analyze how effectively the sparser model can work and up to what factor we can sparsify. For each layer, e.g., $\ell$, we select a $\deg_{\ell}$ as a fixed degree for each node and sample without replacement according to the attention score estimator network's attention scores in each epoch of training or evaluation. 
Having the same degree for each node's attention pattern also means that attention can be calculated using (much-more-optimized) standard matrix multiplications, rather than the propagation techniques used in Exphormer and SAN \citep{kreuzer2021rethinking}.

To sparsify the graph, in each epoch, we sample a new set of edges according to the learned attention scores from the smaller network. The reason why we do this rather than a simpler strategy such as selecting top-scored edges is that in many cases, several nodes can have very similar node features. If we assume nodes $u_{1}, u_2, \dots, u_p$ from the neighbors of node $v$ have almost the same features, and if the attention scores for these nodes are $\alpha_1, \alpha_2, \dots, \alpha_p$, any linear combination of $\sum_{i=1}^p \alpha_i = \alpha$ will lead to the same representation for node $v$. If features are exactly the same, $\alpha$ will be divided between these nodes, and even if $\alpha$ is large, each node's attention score from $v$ can be small. By sampling, we have a total $\alpha$ chance of selecting any of the nodes $u_{1:p}$. In each epoch, we re-sample a new set of edges for each node from its original neighborhood.

{\bf Faster Sampling Using Reservoir Sampling.}
Sampling without replacement using default library calls is very slow, especially if few neighbors dominate the attention scores. We instead use reservoir sampling \citep{efraimidis2006weighted}, which is GPU-friendly and parallelizable. For reservoir sampling of $k$ neighbors from the neighborhood of node $i$, with attention scores $\mathbf{a} = (a_1, a_2, \cdots, a_{|\mathcal{N}_H(i)|})$, we first take a uniform random sample $\mathbf{u} = (u_1, u_2, \cdots, u_{|\mathcal{N}_H(i)|})$, where the $u_i$ are i.i.d.\ samples from $\mathrm{Uniform}(0, 1)$. Then we calculate $\frac{1}{\mathbf{a}} \odot \log (\mathbf{u})$ with element-wise multiplication, and select the indices with the top $k$ values from this list. Selecting $k$-th rank from $n$ values and pivoting has a worst-case $\mathcal{O}(n)$ time algorithm, which is much faster than the $\mathcal O(nk)$ worst case time for trial-and-error. Pseudocode is given in \cref{alg:reservoir_sampling}. The GPU-friendly version of this can be implemented by sampling for nodes in parallel, but requires forming a regular matrix for the attention scores. This can be done by extending each attention score vector to the maximum degree, or selecting a value $k' \gg k$ and first sampling $k'$ and selecting the top $k'$ attention scores from each node, making sure that the sum of the rest of the neighbor's attention scores are very near to zero. Then by forming a rectangular attention matrix, uniform sampling and element-wise multiplications are much faster on GPU, and sampling from the entire batch is much more efficient.
\hamed{I can't find the theorem related to the version of reservoir sampling we do}

\begin{algorithm}[h] 
\caption{Reservoir Sampling from a Node's Neighborhood}
\label{alg:reservoir_sampling}
\begin{algorithmic}[1]
\Require{Attention scores $\mathbf{a} = a_{i, \mathcal{N}_H(i)}^{(\ell)}$, number of neighbors to sample: $\deg_{\ell}$}
\Ensure{List of $\deg_{\ell}$ neighbors of node $i$}
\Statex
\Function{ReservoirSample}{$\mathbf{a}, \deg_{\ell}$}
    \State{$\mathbf{u} \sim \mathrm{Uniform}(0, 1)^{\abs{\mathcal{N}_{H(i)}}}$}
    \State \Return {$\operatorname{argtop_{\deg_\ell}}(\frac{1}{\mathbf{a}} \odot \log(\mathbf{u}))$}
\EndFunction
\end{algorithmic}
\end{algorithm}

{\bf Batching.} Each batch starts with a random subset of ``target'' nodes $B$.
These are the nodes whose last-layer representations we will update in this optimization step.
To calculate these representations,
we need keys and values based on the previous layer's representations for the relevant neighbors of each target node (again, sampling neighbors from the graph augmented by an expander graph).
To approximate this,
we sample $\deg_L$ neighbors for each target node.
Then we have a set of
at most $\lvert B \rvert (\deg_L + 1)$ nodes
whose representations we need to calculate in layer $L-1$;
we repeat this process,
so that in layer $\ell$ we need to compute representations for up to
$ \mathcal{Q}\llo \leq \min(|B|\prod_{i=\ell+1}^{L}(\deg_i+1), n)$ query nodes,
with $\lvert \mathcal Q\llo \rvert \deg_\ell$ attention edges.
Pseudocode is given in \cref{alg:batching}.

When the number of layers $L$ and degree $\deg_\ell$
are not too large,
this batching can be substantially more efficient than processing the entire graph.
Moreover, compared to other batching techniques, our approach selects neighbors \emph{according to their task importance}.
Except for optimization dynamics in the training process corresponding to minibatch versus full-batch training,
training with batches is identical to training with the entire sparsified graph;
if we choose a large $\deg_\ell$ equal to the maximum degree of the augmented graph,
this is exactly equivalent to SGD on the full graph,
without introducing any biases in the training procedure.
This is in stark contrast to previous approaches,
as illustrated in \cref{fig:synthetic_task}.
Unlike these prior approaches,
which typically use the full graph at inference time,
we can run inference with batch size as small as one
(trading off memory for computation).

\begin{algorithm}[H] 
\caption{Neighborhood Sampling for a Batch of Nodes}
\label{alg:batching}
\begin{algorithmic}[1]
\Require{Attention scores in each layer: $\mathbf{a} = \left\{a_{i, j}^{(\ell)} \mid \forall i \in V, j\in \mathcal{N}_H(i), , 1\leq \ell \leq L\right\}$, number of neighbors to sample in each layer: $\mathbf{deg} = \left\{\deg_1, \cdots ,\deg_L\right\}$, and a batch of nodes $B \subseteq V$} 
\Ensure{$\mathcal{Q}^{(\ell)}, \mathcal{K}^{(\ell)}, \mathcal{V}^{(\ell)}$, query, key, and value nodes in each layer}
\Statex
\Function{SampleNeighborhood}{$B, \mathbf{a}, \mathbf{deg}$}
    \State {$\mathcal{V}^{(L+1)} \gets B$}
    \For{$\ell \gets L$ to $1$}
        \State {$\mathcal{Q}^{(\ell)} \gets \mathcal{V}^{(\ell+1)}$}
        \For{$i \gets i \in \mathcal{Q}^{(\ell)}$}
            \State{$\mathcal{K}_i^{(\ell)} \gets \textsc{ReservoirSample}(\mathbf{a}_{i, \mathcal{N}_H(i)}, \deg_{\ell}$)}
        \EndFor
        \State{$\mathcal{K}^{(\ell)} \gets \bigcup_{i\in \mathcal{Q}^{\ell}} \mathcal{K}_i^{(\ell)}$}
        \State{$\mathcal{V}^{(\ell)} \gets \mathcal{Q}^{(\ell)} \bigcup  \mathcal{K}^{(\ell)}$}
    \EndFor
    \State \Return {$\left\{\left(\mathcal{V}^{(\ell)}, \mathcal{Q}^{(\ell)}, \mathcal{K}^{(\ell)}\right) \mid  1 \leq \ell \leq L\right\}$}
\EndFunction
\end{algorithmic}
\end{algorithm}

{\bf Fixed Node Degree Layers.}
Sparse matrix operations are not yet nearly as efficient as dense operations on GPU devices. Exphormer and SAN use a \texttt{gather} operation, which is memory-efficient but not time-efficient on a GPU \citep{zaheer2020big}. By normalizing the degree, instead of having $|\mathcal{Q}\llo| \deg_\ell$ separate dot products between the query and key vectors, we can reshape the key vectors to be of size $|\mathcal{Q}\llo|\times \deg_\ell \times d$ and the query is of shape $|\mathcal{Q}\llo|\times d$. Now the dot product of query and key mappings can be done using $|\mathcal{Q}\llo|$, $\deg_\ell \times d$ by $d \times 1$ matrix multiplications. This same size matrix multiplication can be done using highly optimized batch matrix multiplication operations in e.g.\ PyTorch and Tensorflow \citep{paszke2019pytorch, tensorflow2015-whitepaper}.

\subsection{Theoretical Underpinnings}
We first study the approximability of a network with a smaller hidden dimension or width. Formally, suppose that the width of a wide network is $D$.
Then there exists a network with narrow dimensions for $\W_Q$ and $\W_K$, of dimension $\mathcal{O}(\frac{\log n}{\eps^2}) \times D$ instead of $D \times D$, whose attention scores agree with those of the wide network up to $\mathcal{O(\eps)}$ error (\cref{thrm:narrow_attention}).
This reduction helps with the most intensive part of the calculation; others are linear with respect to the number of nodes $n$.
While this is not the model we use in practice, \citet[Section 4]{shirzad2024compression} explore some scenarios common in graph Transformers that allow for the existence of ``fully'' narrow networks with accurate attention scores.
They support these claims with experiments that show compressibility for some datasets we use.
This is an existence claim;
we will justify experimentally that in practice,
training a narrow network does approximate attention scores well.

We then study the sampling procedure of our sparsification method. Under certain assumptions, we show that sampling roughly $\mathcal O(n \log n / \eps^2)$ entries of the attention matrix $A$ (corresponding to sampling this many edges in the graph) suffices to form a matrix $B$ with $\norm{A - B}_2 \le \eps \norm{A}_2$, if we can access the entries of $A$ (\cref{thrm:sampling}).
We cannot actually access the matrix $A$,
but we do have attention scores $A'$ from a narrow network.
We show that if the entries of $A$ are not seriously under-estimated by $A'$, the same bound on the number of samples still holds (\cref{prop:sampling_noisy}).

\begin{table}[!hp]
\caption{Comparison of our model with other GNNs on six homophilic datasets. The reported metric is accuracy for all datasets.}
    \centering
    \label{tab:homophilicgraphs}
    \setlength\tabcolsep{5pt} %
    \scalebox{0.9}{
    \small
    \begin{tabular}{lcccccc}
    \toprule
         {\bf Model} & {\bf Computer}  & {\bf Photo} & {\bf CS}  & {\bf Physics} & {\bf WikiCS} & {\bf ogbn-arxiv} \\
         \midrule
         \textsc{GCN}& 89.65 $\pm$ 0.52&  92.70 $\pm$ 0.20 & 92.92 $\pm$ 0.12& 96.18 $\pm$ 0.07 & 77.47 $\pm$ 0.85 & 71.74 $\pm$ 0.29 \\
         \textsc{GraphSAGE}& 91.20 $\pm$ 0.29&  94.59 $\pm$ 0.14 & 93.91 $\pm$ 0.13& 96.49 $\pm$ 0.06 & 74.77 $\pm$ 0.95 & 71.49 $\pm$ 0.27\\
         \textsc{GAT}&   90.78 $\pm$ 0.13& 93.87 $\pm$ 0.11& 93.61 $\pm$ 0.14 & 96.17 $\pm$ 0.08 & 76.91 $\pm$ 0.82 & 72.01 $\pm$ 0.20\\
         \textsc{GraphSAINT}& 90.22 $\pm$ 0.15& 91.72 $\pm$ 0.13& 94.41 $\pm$ 0.09& 96.43 $\pm$ 0.05 & - & 68.50 $\pm$ 0.23\\
         \textsc{NodeFormer}& 86.98 $\pm$ 0.62 &  93.46 $\pm$ 0.35 & 95.64 $\pm$ 0.22 &  96.45 $\pm$ 0.28 &  74.73 $\pm$ 0.94 & 59.90 $\pm$ 0.42\\
         \textsc{GraphGPS}& 91.19 $\pm$ 0.54 &   95.06 $\pm$ 0.13 &  93.93 $\pm$ 0.12 &  97.12 $\pm$ 0.19 & 78.66 $\pm$ 0.49 & 70.92 $\pm$ 0.04\\
         \textsc{GOAT}& 90.96 $\pm$ 0.90 & 92.96 $\pm$ 1.48 &  94.21 $\pm$ 0.38 &  96.24 $\pm$ 0.24 & 77.00 $\pm$ 0.77 & 72.41 $\pm$ 0.40 \\
         \midrule
         \textsc{Exphormer+GCN}& 91.59 $\pm$ 0.31& 	95.27 $\pm$ 0.42 & 95.77 $\pm$ 0.15 & 97.16 $\pm$ 0.13 & 78.54 $\pm$ 0.49 & 72.44 $\pm$ 0.28\\
         \textsc{Exphormer}*& 91.16 $\pm$ 0.26& 95.36 $\pm$ 0.17& 95.19 $\pm$ 0.26&96.40 $\pm$ 0.20 & 78.19 $\pm$ 0.29 & 71.27 $\pm$ 0.27\\
        \midrule
         \textsc{Spexphormer} & 	91.09 $\pm$ 0.08 & 95.33 $\pm$ 0.49 & 95.00 $\pm$ 0.15 & 96.70 $\pm$ 0.05 & 78.2 $\pm$ 0.14 & 70.82 $\pm$ 0.24 \\
         \midrule
         Avg. Edge Percent & 7.6\% & 8.2\% & 12.8\%& 11.3\% & 8.6\% &13.7\% \\
  \bottomrule
    \end{tabular}
    }
\end{table}

\section{Experimental Results}
\begin{figure*}[ht]
\captionsetup[subfloat]{farskip=-1pt,captionskip=-1pt}
\centering
\subfloat[][]{\includegraphics[width = 1.37in]{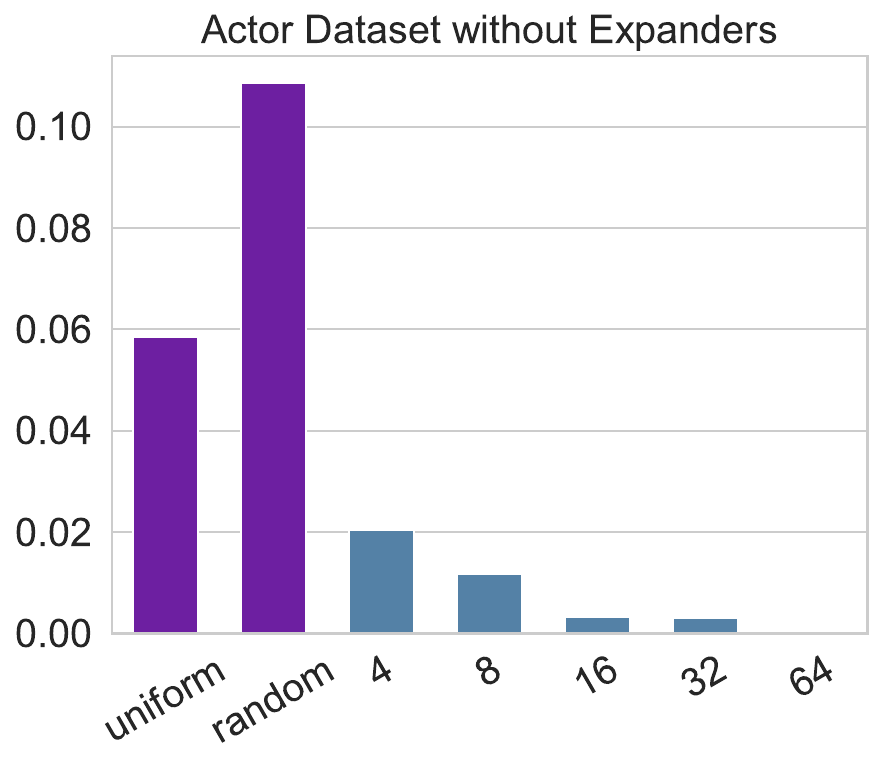}} 
\subfloat[][]{\includegraphics[width = 1.37in]{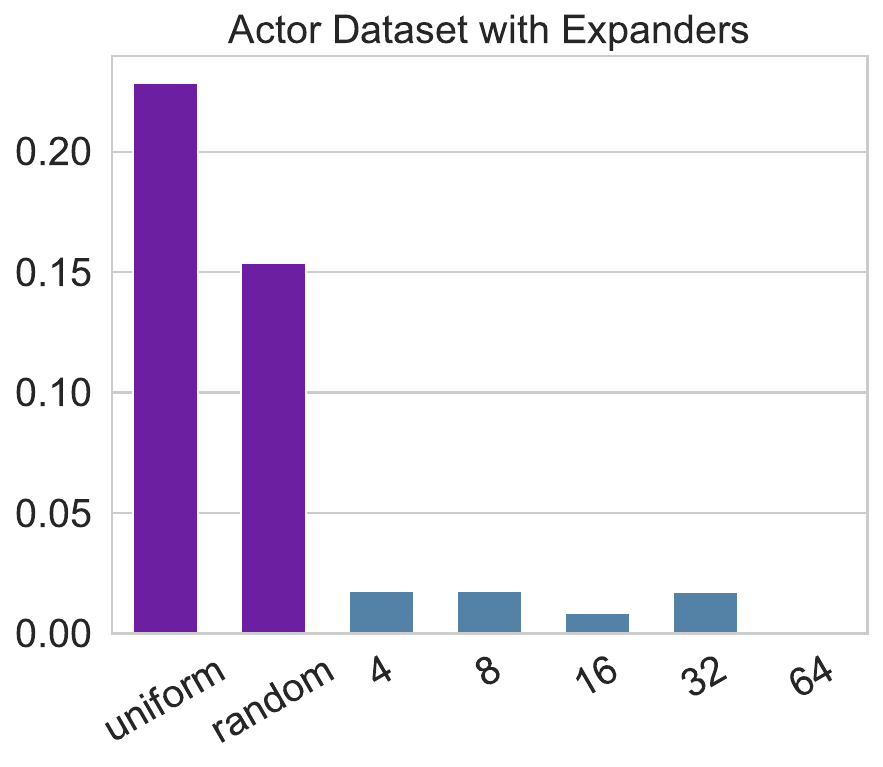}}
\subfloat[][]{\includegraphics[width = 1.37in]{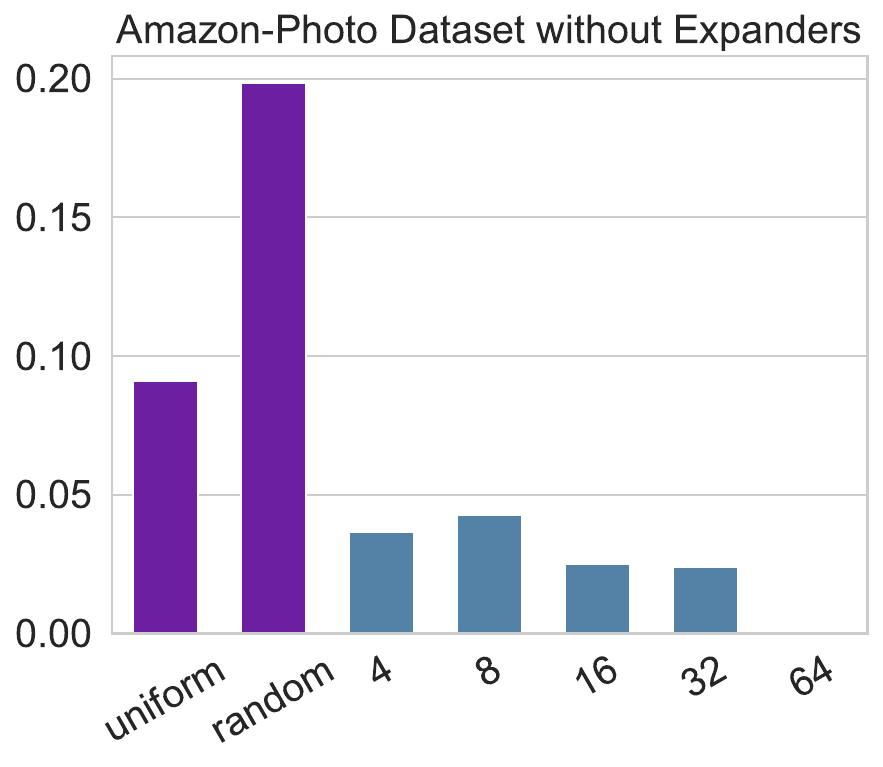}}
\subfloat[][]{\includegraphics[width = 1.37in]{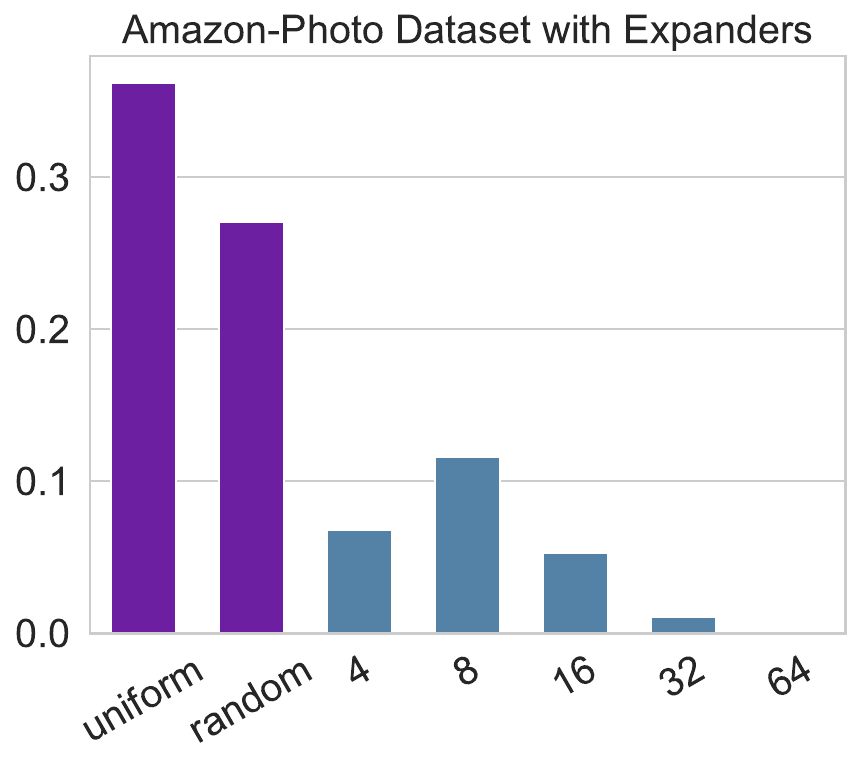}}
\caption{Energy distance between the attention scores of various networks to a network of width 64. ``Uniform'' refers to the baseline placing equal scores to each neighbor, while ``random'' refers to the baseline with uniformly distributed logits. The remaining bars refer to networks trained on the appropriately labeled width.}
\label{fig:energy_dists}
\end{figure*}

\subsection{Attention Score Estimation} \label{sec:attention_score_analysis} To show how well the smaller network estimates the attention scores for a larger network, we conduct experiments on two smaller datasets, where we can reasonably train the full network at higher width for many runs in order to estimate the distribution of the attention scores. To this end, we use the Actor \citep{lim2021large} and Photo \citep{shchur2018pitfalls} datasets. We train the network for hidden dimensions $h$ varying from 4 to 64 for both datasets.
For each $h$ we train the network $100$ times. 
We consider the distribution of attention scores for each node, and estimate the energy distance (\citealp{energy-distance}; an instance of the maximum mean discrepancy, \citealp{energy-rkhs}) for that node's attention scores across each pair of $h$ sizes.

We ran this experiment in two scenarios: first, with just graph edges, and then, by adding expander and self-loop edges. It might be that the model, just by examining the category of the edges, may give a lower score to one type, making distributions seem more similar despite not identifying a small number of important neighbors as we want. However, in the presence of only one type of edge, the model can still consistently estimate which nodes should have a higher attention score.

We compare attention scores from our model with the uniform distribution on the neighbors (each neighbor of node $i$ has score $\frac{1}{d_i}$), and to a distribution with logits uniform over $[-8, 8]$. The choice of $8$ here is because in the network we clip the logits with an absolute value higher than $8$. \Cref{fig:energy_dists} shows that even width-$4$ networks provide far superior estimates of attention scores than these baselines.

In \cref{sec:attention_score_analysis_appendix}, we extend our analysis with several experiments: examining pairwise energy distances between all pairs of hidden dimensions as well as uniform and random distributions, providing layer-wise results (\cref{sec:pairwise_dists}), analyzing the sharpness or smoothness of attention scores across layers (\cref{sec:entropy}), assessing their similarity between layers (\cref{sec:inter_layer}), and measuring precision, recall, density, and coverage in estimating the attention scores of the larger network using a smaller one (\cref{sec:prdc}). Additionally, we investigate the sum of top-$k$ attention scores (\cref{sec:top-k}) and evaluate the role of different edge types in learning representations (\cref{sec:edge_type}). Our key insights are as follows:

\textbf{{\textit{Insight 1.}}} Attention scores from a network with a smaller hidden dimension serve as a good estimator for the attention scores in a network with a higher hidden dimension.

\textbf{{\textit{Insight 2.}}} Attention scores are smoother in the first layer, and become sharper in subsequent layers.

\textbf{{\textit{Insight 3.}}} The attention scores in the layers after the first are consistently very similar to one another, but distinct from the attention scores in the first layer.

\textbf{{\textit{Insight 4.}}} The sum of the top-$k$ attention scores is substantially lower than one for many nodes, even for relatively large $k$ values such as $10$.
\begin{table}[!htp]
\caption{Comparison of our model with other GNNs on five heterophilic datasets. The reported metric is ROC-AUC ($\times$100) for the Minesweeper, Tolokers, and Questions datasets, and accuracy for all others.}
    \centering
    \label{tab:heterophilicgraphs}
    \setlength\tabcolsep{5pt} 
    \scalebox{0.87}{
    \small
    \begin{tabular}{lcccccc}
    \toprule
         {\bf Model} & {\bf Actor} & {\bf Minesweeper}  & {\bf Tolokers} & {\bf Roman-Empire} & {\bf Amazon-Ratings} & {\bf Questions}\\
         \midrule
         \textsc{GloGNN}& 36.4 $\pm$ 1.6& 51.08 $\pm$ 1.23 & 73.39 $\pm$ 1.17 & 59.63 $\pm$ 0.69 &  36.89 $\pm$ 0.14 & 65.74 $\pm$ 1.19\\
         \textsc{GCN}& 33.23$\pm$1.16& 89.75 $\pm$ 0.52 & 83.64 $\pm$ 0.67 & 73.69 $\pm$ 0.74 & 48.70 $\pm$ 0.63 & 76.09 $\pm$ 1.27\\
         \textsc{GraphGPS}&  37.1 $\pm$ 1.5 &  90.63 $\pm$ 0.67 &  83.71 $\pm$ 0.48 & 82.00 $\pm$ 0.61 &  53.10 $\pm$ 0.42 & 71.73 $\pm$ 1.47 \\
         \textsc{NAGphormer}& - & 84.19 $\pm$ 0.66 & 78.32 $\pm$ 0.95 & 74.34 $\pm$ 0.77 &  51.26 $\pm$ 0.72 & 68.17 $\pm$ 1.53\\
         \textsc{NodeFormer}& 36.9 $\pm$ 1.0 & 86.71 $\pm$ 0.88 & 78.10 $\pm$ 1.03 & 64.49 $\pm$ 0.73 & 43.86 $\pm$ 0.35 & 74.27 $\pm$ 1.46\\
         \textsc{GOAT}& - & 81.09 $\pm$ 1.02 &  83.11 $\pm$ 1.04 & 71.59 $\pm$ 1.25 &  44.61 $\pm$ 0.50 &  75.76 $\pm$ 1.66\\
         \midrule
         \textsc{Exphormer+GAT}& 38.68 $\pm$ 0.38 &  90.74 $\pm$ 0.53 & 83.77 $\pm$ 0.78 & 89.03 $\pm$ 0.37 & 53.51 $\pm$ 0.46 & 73.94 $\pm$ 1.06\\
         \textsc{Exphormer}*& 39.01 $\pm$ 0.69 & 92.26 $\pm$ 0.56 & 	83.53 $\pm$ 0.28 & 84.91 $\pm$ 0.25 & 46.80 $\pm$ 0.53 & 73.35 $\pm$ 1.78\\
        \midrule
         \textsc{Spexphormer} & 	38.59 $\pm$ 0.81 & 	90.71 $\pm$ 0.17 & 83.34 $\pm$ 0.31  & 87.54 $\pm$ 0.14 & 50.48 $\pm$ 0.34 & 73.25 $\pm$ 0.41\\
          \midrule
         Avg. Edge Percent& 5.8\% & 17.8\% & 8.9\% & 31.1\% & 15.3\% & 13.8\%\\
  \bottomrule
    \end{tabular}
    }
\end{table}

\begin{minipage}{\textwidth}
  \begin{minipage}[b]{0.49\textwidth}
    \centering
    \includegraphics[width=2.42in]{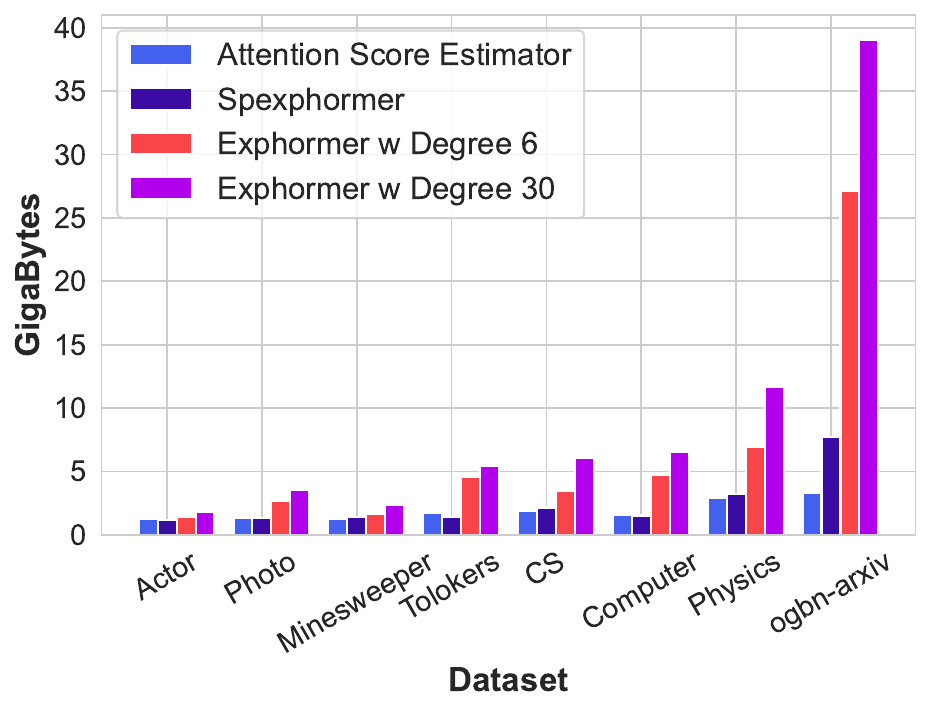}
    \vspace{-0.1in}
    \captionof{figure}{Memory usage comparison: Attention Score Estimator network and Spexphormer vs. Exphormer with expander degrees 6 and 30. Exphormer with degree 30 for the ogbn-arxiv dataset could not fit into the memory of a 40GB GPU device, and thus the number here is a lower bound.}
    \label{fig:memory}
  \end{minipage}
  \hfill
  \begin{minipage}[b]{0.49\textwidth}
    \centering
    
    \small
    \scalebox{0.7}{
    \begin{tabular}{lccc}
    \toprule
         {\bf Model} & {\bf ogbn-proteins} & {\bf Amazon2M}  & {\bf Pokec}*
         \\
         \midrule
         \textsc{MLP} & 72.04 $\pm$ 0.48 & 63.46 $\pm$ 0.10 & 60.15 $\pm$ 0.03  \\
\textsc{GCN} & 72.51 $\pm$ 0.35 & 83.90 $\pm$ 0.10 & 62.31 $\pm$ 1.13  \\
\textsc{SGC} & 70.31 $\pm$ 0.23 & 81.21 $\pm$ 0.12 & 52.03 $\pm$ 0.84   \\
\textsc{GCN-NSampler} & 73.51 $\pm$ 1.31 & 83.84 $\pm$ 0.42 & 63.75 $\pm$ 0.77 \\
\textsc{GAT-NSampler} & 74.63 $\pm$ 1.24 & 85.17 $\pm$ 0.32 & 62.32 $\pm$ 0.65 \\
\textsc{SIGN} & 71.24 $\pm$ 0.46 & 80.98 $\pm$ 0.31 & 68.01 $\pm$ 0.25 \\
\textsc{NodeFormer} & 77.45 $\pm$ 1.15 & 87.85 $\pm$ 0.24 & 70.32 $\pm$ 0.45 \\
\textsc{SGFormer} & 79.53 $\pm$ 0.38 & 89.09 $\pm$ 0.10 & 73.76 $\pm$ 0.24 \\
\textsc{Spexphormer} & \textbf{80.65 $\pm$ 0.07} & \textbf{90.40 $\pm$ 0.03}& \textbf{74.73 $\pm$0.04} \\
  \midrule
  \multicolumn{4}{c}{Memory Information for \textsc{Spexphormer}} \\
  Memory (MB) & 2232 & 3262& 2128\\
  Batch Size & 256 & 1000& 500 \\
  Hidden Dimension & 64& 128 & 64 \\
  Number of layers & 2 & 2 & 2\\
  Number of Parameters & 79,224& 300,209 & 83,781 \\
  \bottomrule
    \end{tabular}
    }
      \captionof{table}{Comparative results on large graph datasets, with ROC-AUC($\times$100) reported for the ogbn-proteins dataset and accuracy for all others. GPU memory usage, batch sizes, hidden dimensions used to obtain these numbers, and the total number of parameters have been added at the bottom of the table.}
      \label{tab:largegraphs}
    \end{minipage}
  \end{minipage}

\subsection{Model Quality}
We conduct experiments on twelve medium-sized graphs, including six homophilic datasets: CS, Physics, Photo, Computer \citep{shchur2018pitfalls}, WikiCS \citep{mernyei2020wiki}, and ogbn-arxiv \citep{ogbPaper}; and six heterophilic datasets: Minesweeper, Tolokers, Roman-empire, Amazon-ratings, Questions \citep{platonov2023critical}, and Actor \citep{lim2021large}. 

For the CS, Physics, Photo, and Computer datasets, we use a random train/validation/test split of 60\%/20\%/20\%. For WikiCS and ogbn-arxiv we follow the standard data split provided by the original source. For the Actor dataset, we use a 50\%/25\%/25\% split following \cite{wu2022nodeformer}. For the Minesweeper, Tolokers, Roman-empire, Amazon-ratings, and Questions datasets, we use the standard split from \cite{platonov2023critical}. Results for these experiments are provided in \cref{tab:homophilicgraphs,tab:heterophilicgraphs}. The \textsc{Exphormer} model presented in the tables refers to the attention mechanism of \textsc{Exphormer} without incorporating any MPNN components. Interestingly, the results on the \textit{Roman-Empire} and \textit{Amazon-Ratings} datasets revealed that removing certain edges led to better performance compared to simply adding an expander layout.

In these medium-sized datasets, we are able to train the full Exphormer model. Our goal is to determine the extent of performance reduction when using two memory-efficient networks to estimate the original network. Results show that the two memory-efficient networks can efficiently estimate the original network, enabling us to scale the {Exphormer} to larger graph datasets. We compare the maximum required memory of the attention score estimator and final networks with that of the corresponding {Exphormer} model in \cref{fig:memory}.

We then experiment on large graph datasets: ogbn-proteins, Amazon2M \citep{ogbPaper}, and Pokec \citep{takac2012data}. The results provided in \cref{tab:largegraphs} demonstrate superior performance of our model despite limited memory constraints. We follow the standard data split for the ogbn-proteins dataset and follow \cite{wu2024simplifying} for the dataset split on the Amazon2M and Pokec datasets, with 10\%/10\%/80\% and 50\%/25\%/25\% train/validation/test ratios. We emphasize that this split differs from the original dataset split used by many other works, making those numbers incomparable.

In all our experiments, we train the smaller network once, and then for the second network, we always use the same initial network's learned attention scores. Attention scores are collected from the network training step with the highest validation accuracy.

We use a subset of the following models in each of our tables as baselines, depending on the type of the dataset and scalability level of the models, GCN \citep{kipf2016semi}, GraphSAGE \citep{hamilton2017inductive}, GAT \citep{velickovic2018graph}, GraphSAINT \citep{ZengZSKP20}, Nodeformer \citep{wu2022nodeformer}, Difformer \citep{wu2023difformer}, SGFormer \citep{wu2024simplifying}, GraphGPS \citep{rampavsek2022recipe}, GOAT \citep{kong2023goat}, GloGNN \citep{li2022finding}, SGC \citep{wu2019simplifying}, NAGphormer \citep{nagphormer22}, Exphormer \citep{shirzad2023exphormer}, and SIGN \citep{frasca2020sign}. We borrow most of the baseline numbers in the tables from \citet{wu2024simplifying, deng2024polynormer}. \hamed{Should we explain why we're not comparing with the Polynormer?}

\subsection{Ablation Studies}
We benchmark the effect of different parts of the model in \cref{tab:ablation}.
Spexphormer-uniform, rather than sampling based on the estimated attention scores, samples uniformly from the augmented graph;
this is always worse than attention-based sampling, but the gap is larger for some datasets than others.
Spexphormer-max takes the edges with the highest attention scores,
rather than sampling;
this again performs somewhat worse across datasets.
Spexphormer w.o.\ temp uses a constant temperature of 1 in the initial attention score estimator network;
Spexphormer w.o.\ layer norm removes our added layer normalization.
These changes are smaller, and in one case layer normalization makes the results worse.
Across the four datasets, however, it seems that both temperature and layer norm help yield more informative and sparser attention scores.

\begin{table}
\centering
\caption{Ablation studies on two homophilic and two heterophilic datasets. Metrics: accuracy for Photo and Computer, ROC-AUC (×100) for Tolokers and Minesweeper. For the initial network, we report the result for the network used for training the Spexphormer and thus, there is no confidence interval for them.}
\label{tab:ablation}
\small
\scalebox{0.8}{
\begin{tabular}{lcccc}
\toprule
Model/Dataset               & Computer  & Photo        & Minesweeper  & Tolokers      \\
\midrule
Initial Network & 85.23 & 91.70 & 85.67 & 80.16 \\
\midrule
Spexphormer-uniform  & 86.65 $\pm$ 0.46 & 94.21 $\pm$ 0.22 & 84.15 $\pm$ 0.22 & 82.56 $\pm$ 0.17  \\
Spexphormer-max & 89.31 $\pm$ 0.31 & 95.07 $\pm$ 0.20 & 87.92 $\pm$ 0.26 & 80.85 $\pm$ 0.23 \\
Spexphormer w.o. temp  & 89.05 $\pm$ 0.35 & 95.30 $\pm$ 0.16& 90.02 $\pm$ 0.02 & 83.34 $\pm$ 0.13 \\
Spexphormer w.o. layer norm &  89.70 $\pm$ 0.25 & 94.91 $\pm$ 0.18 & 89.65 $\pm$ 0.10 & \bf{84.06 $\pm$ 0.10} \\   
\midrule
Spexphormer   & \bf{91.09 $\pm$ 0.08} & \bf{95.33 $\pm$ 0.49}& \bf{90.71 $\pm$ 0.17} & 83.34 $\pm$ 0.13  \\
\bottomrule
\end{tabular}
}
\end{table}

\section{Conclusion \& Limitations}
We analyzed the alignment of the attention scores among models trained with different widths. We found that the smaller network's attention score distributions usually align well with the larger network's. We also theoretically analyzed the compressibility of the larger Graph Transformer models. Based on these observations, we used a sampling algorithm to sparsify the graph on each layer. As a result of these two steps, the model's memory consumption reduces significantly, while achieving a competitive accuracy. This strategy also lets us use novel batching techniques that were not feasible with expander graphs of a large degree. Having a regular degree enables using dense matrix multiplication, which is far more efficient with current GPU and TPU devices.

While our method successfully scales to datasets with over two million nodes, it relies on large CPU memory for the attention score estimation for these datasets.
For extremely large datasets, this is still infeasible without highly distributed computation.
Estimated attention scores can be shared and used for training various networks based on attention scores, however, so this only needs to only be computed once per dataset and depth.
An area for potential future work is to combine sampling with simultaneous attention score estimation in a dynamic way, scaling this estimation to larger graphs.

\begin{ack}
This work was supported in part by the Natural Sciences and Engineering Resource Council of
Canada,
the Fonds de Recherche du Québec - Nature et technologies (under grant ALLRP-57708-2022),
the Canada CIFAR AI Chairs program,
the BC DRI Group, Calcul Québec, Compute Ontario, and the Digital
Resource Alliance of Canada.
Honghao Lin was supported in part by a Simons Investigator Award, NSF CCF-2335412, and a CMU Paul and James Wang Sercomm Presidential Graduate Fellowship.
\end{ack}

\bibliography{bibliography}
\bibliographystyle{apalike}

\newpage

\appendix
\onecolumn

\section{Notation Table}

\begin{table}[h]
\label{tab:notations}
\caption{A summary of the notation used in this paper. The hat notation always refers to a compressed network equivalent of a vector or matrix from the reference network.}
\centering
\scalebox{0.9}{
\begin{tabular}{l|l} 
\toprule
Notation & Definition  \\ 
\hline
    $n$ & The number of nodes in the graph \\
    $m$ & The number of attention edges in total, including graph and expander edges \\
    $d$     &     Hidden dimension of a narrow network     \\
    $D$     &     Hidden dimension of the original large graph  \\
    $L$     &     The total number of layers  in the network    \\
    $\ell$     &     Arbitrary layer index     \\
    $\mathbf{V}$   &     Value mapping of the vectors in the attention mechanism     \\
    $\mathbf{Q}$   &     Query mapping of the vectors in the attention mechanism \\
    $\mathbf{K}$   &     Key mapping of the vectors in the attention mechanism \\
    $\mathbf{W}_\cdot^{(\ell)}$     &     Weight matrix of mapping such as key, query, value, edge features, or bias in layer $\ell$     \\
    $\widehat{\mathbf{W}}_\cdot^{(\ell)}$     &  Low dimensional network's weight matrix for a mapping in layer $\ell$     \\
    $\M_\cdot$ & A linear mapping matrix (usually from the higher dimension to the smaller)\\
    $\ReLU$     &    Rectified Linear Unit    \\
    $\Ho\llo$ & Output of layer $\ell-1$ from the reference network \\
    $\Hb\llo$ & A low-rank estimation of $\Ho\llo$ \\
    $\Hh\llo$ & Output of layer $\ell-1$ from a compressed network \\
    $h_i\llo$ & column $i$ of matrix $\Ho\llo$ \\
    $a_{ij}^{(\ell)}$ & The Attention score between nodes $i$ and $j$ in layer $\ell$\\
    $\hat{a}_{ij}^{(\ell)}$ & The attention score between nodes $i$ and $j$ in layer $\ell$ from a smaller network\\
\bottomrule
\end{tabular}
}
\end{table}

\section{Dataset Descriptions} \label{sec:datasetdesc}
Below, we provide descriptions of the datasets on which we conduct experiments. A summarized statistics of these datasets have been provided in \cref{table:datasets}.

\paragraph{Amazon datasets}
Amazon Computers and Amazon Photo are Amazon co-purchase graphs. Nodes represent products purchased. An edge connects a 
pairs of products purchased together.  Node features are bag-of-words encoded reviews of the products. Class labels are the product category.

\paragraph{Amazon-Ratings}
The Amazon-ratings is an Amazon co-purchasing dataset. Each node represents a product and the edges are between the nodes purchased together frequently. Node features are the average of word embeddings from the product description. The task is to predict the average rating of the product.

\paragraph{Amazon2M}
Amazon2M dataset is a graph from the co-purchasing network. Each node represents an item. Edges between items represents products purchased together. The node features are generated from the product description. The node labels are from the top-level categories the product belongs to.

\paragraph{WikiCS}
WikiCS contains pages from Wikipedia. Each node represents an article from Wikipedia related to the Computer Science field. Edges represent the hyperlinks between the articles. The node features are the average of the word embeddings from the articles. The task is to classify the nodes into ten different branches of the field. 

\paragraph{Actor dataset}
The actor dataset is created by the actor-only subgraph of a larger graph of actor, director, writer, and film co-occuring on a Wikipedia page, limited to English-language films.  Each node corresponds to an actor. Edges denote co-occurence on a Wikipedia page. Node features are based on the terms in the actor's page. The prediction task is categorizing into one of five categories \citep{pei2020geom}.

\paragraph{Roman-Empire}
This dataset is a graph constructed from the ``Roman Empire'' article from Wikipedia. Each node is a word from this text. Two words are connected to each other if they follow each other in the text, or they are connected in the dependency tree of the sentence. The task is to predict the syntactic role of the word in the sentence. Graph is highly sparse and heterophilic.

\paragraph{Coauthor datasets} 
The datasets, CS and Physics are co-authorship graphs from Microsoft Academic Graph.  The nodes represent the authors
and an edge connects two authors who share a paper.  The node features are the keywords in the papers.
The class represents the active area of study for the author.

\paragraph{ogbn-arxiv} \citep{ogbPaper}
The ogbn-arxiv dataset is from OGBN datasets. The nodes represents the papers and edges represent the citations between the papers. 
Nodes are 128-dimensional feature vector that is an average of the embeddings of words in the title and abstract.
The prediction task is to identify the category of the 40 subject areas.

\paragraph{ogbn-proteins dataset}
The ogbn-proteins dataset is an undirected graph with edge weights and types based on species. The nodes represent proteins from eight different species. Edges indicate various biologically meaningful associations between the proteins (e.g., co-expression, homology etc.).
The edges are eight-dimensional, with each dimension having a value from [0,1] indicates the confidence score. The prediction task is a multi-label binary classification among 112 labels --- to predict the presence of protein functions. The performance measurement is the average of ROC-AUC scores across the 112 tasks.

\paragraph{Minesweeper}
The dataset is a graph representation of the 100x100 grid from the Minesweeper game. A node represents a cell and the edges connect a node to its eight neighboring cells. 20\% of the nodes are marked as mines. The features of the nodes are the one-hot encoding of the mines among the neighbors. For 50\% of the nodes the features are unknown and indicated by a separate binary feature.

\paragraph{Tolokers}
Tolokers is a graph representation of the workers in a crowd-sourcing platform, called Toloka. Each node represents a worker. Two nodes are connected if the workers have worked on the same task. Node features are based on the worker's task performance statistics and other profile information. The task is to predict which nodes have been banned for a project.

\paragraph{Questions}
This dataset is derived from the Yandex Q question-answering platform, focusing on interactions among users interested in the topic of medicine from September 2021 to August 2022. Nodes represent users, and edges denote answers given to another user’s questions. Node features include fastText-based embeddings of user descriptions, supplemented by a binary indicator for missing descriptions. The task is to predict user activity status at the end of the period.

\paragraph{Pokec}
Pokec is a large-scale social network dataset. Nodes represents users of the network. Nodes features include profile data like geographical region, age etc. The task is to predict the gender of users based on the graph.

\begin{table}[htp]
    \centering
    \caption{Dataset statistics. The reported number of edges is the number of directed edges, which will be twice the number of actual edges for the undirected graphs.}
    \fontsize{8.5pt}{8.5pt}\selectfont
    \setlength\tabcolsep{6pt} %
    \scalebox{1}{
    \begin{tabular}{lcccccc}
    \toprule
         {\bf Dataset} & {\bf Nodes} & {\bf Edges} & {\bf Average Degree} & {\bf Node Features} & {\bf Classes} &{\bf Metric} \\
Amazon Photo & 7,487 & 238,162 & 31.13 & 745 & 8 & Accuracy \\
Coauthor Physics & 34,493 & 495,924 & 14.38 & 8,415 & 5 & Accuracy \\
Amazon Computer & 13,381 & 491,722 & 35.76 & 767 & 10 & Accuracy \\
Coauthor CS & 18,333 & 163,788 & 8.93 & 6,805 & 15 & Accuracy \\
WikiCS & 11,701 &  431,726 & 36.90 & 300 & 10 & Accuracy \\
ogbn-arxiv & 169,343 & 2,332,486 & 13.77 & 128 & 40 & Accuracy\\
\midrule
Actor & 7,600 & 33,391 & 4.39 & 932 & 5 & Accuracy\\
Minesweeper & 10,000 & 78,804 & 7.88 & 7 & 2 & AUC\\ 
Tolokers & 11,758 & 1,038,000 & 88.28 & 10 & 10 & AUC \\
Roman-Empire &  22,662 & 65,854 & 2.91 & 300 & 18 & Accuracy \\
Amazon-Ratings & 24,492 & 186,100 & 7.60 & 300 & 5 & Accuracy \\
Questions & 48,921 & 307,080 & 6.28 & 301 & 2 & AUC \\
\midrule 
Pokec & 1,632,803 & 30,622,564 & 18.75 & 65 & 2 & AUC\\
ogbn-proteins & 132,534 & 79,122,504 & 597.00 & 8 & 112 & AUC\\
Amazon2M & 2,449,029 & 123,718,280 & 50.52 & 100 & 47 & AUC\\
    \bottomrule
    \end{tabular}
    }
     \label{table:datasets}
\end{table}

\section{More Experiments}

\subsection{Time-Memory Trade-off}  
One advantage of our method is its ability to trade time for memory without sacrificing accuracy. \cref{fig:mem_time_tradeoff} illustrates this trade-off on two datasets: ogbn-proteins and arxiv. In these experiments, all hyperparameters are kept constant, with the only variation being the batch size. The results demonstrate that memory usage and runtime can be adjusted without introducing bias into the training process. 

It is important to note that in random subset batching, the average degree of nodes and the number of edges included in the training process are closely related to the batch size. A very small batch size relative to the graph size can randomly exclude a significant portion of the graph's edges during training, potentially ignoring critical edges without considering their importance.

\begin{figure*}[h]
\captionsetup[subfloat]{farskip=-1pt,captionskip=-1pt}
\centering
\subfloat[][]{\includegraphics[width = 2.7in]{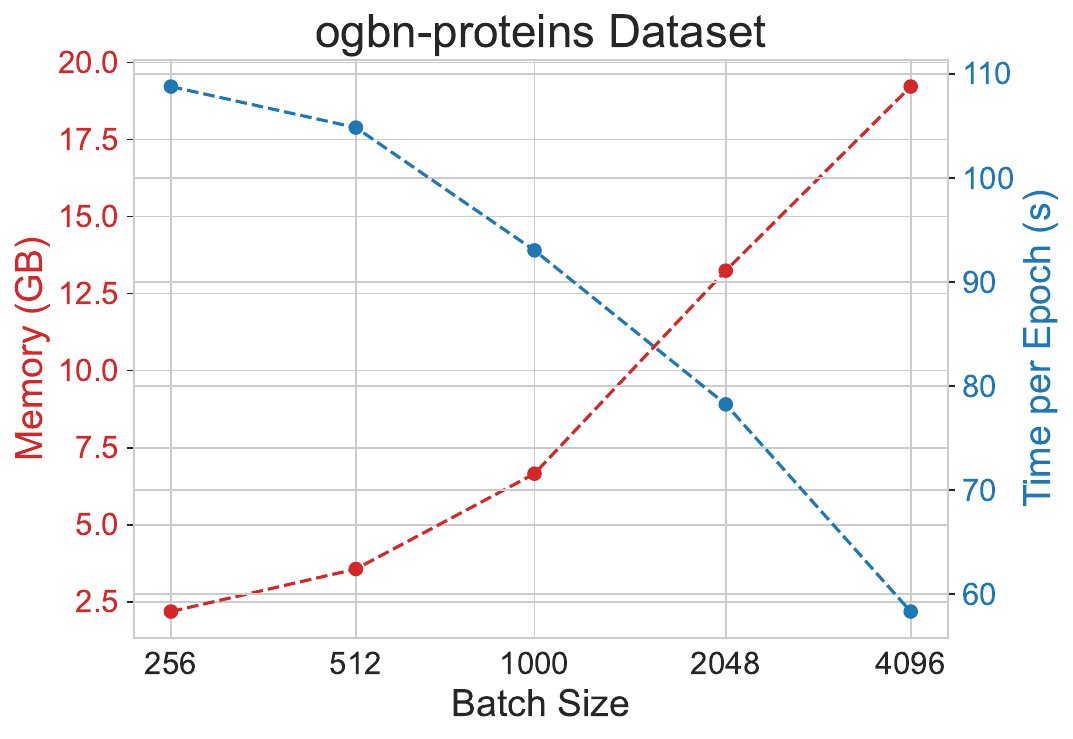}} 
\hspace{0.1 in}\subfloat[][]{\includegraphics[width = 2.7in]{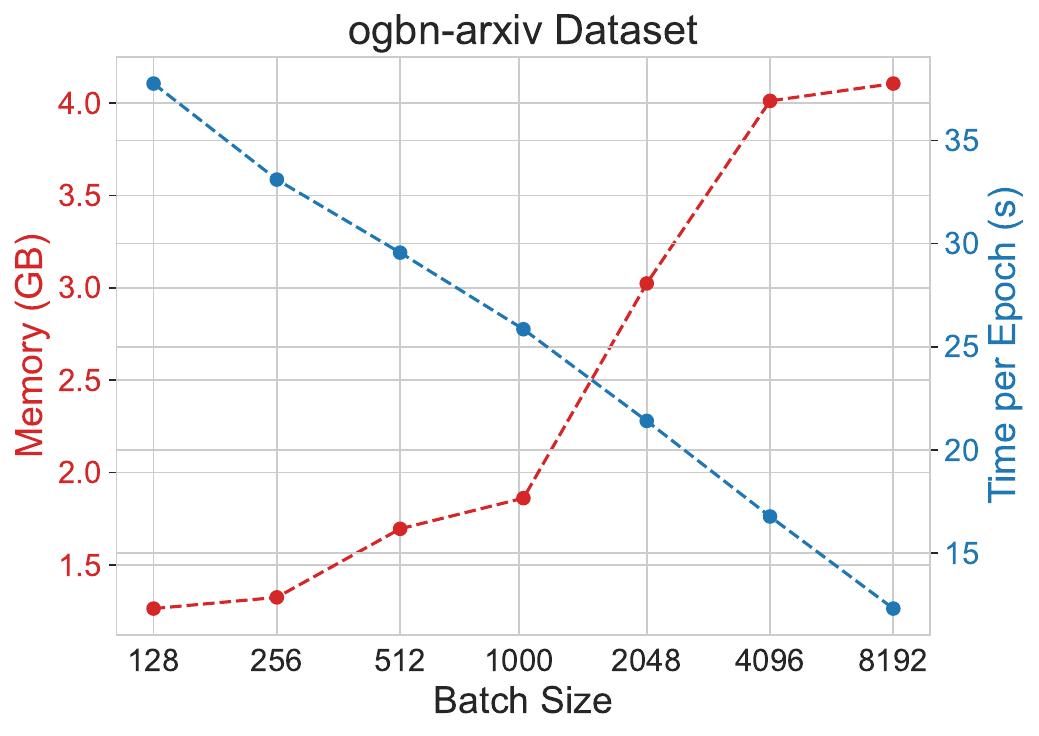}}
\caption{The memory and runtime trade-off for the ogbn-proteins and ogbn-arxiv datasets. The plot demonstrates that memory and time can be effectively exchanged in our approach. The reported runtime includes the whole process of preprocessing the batches, train, and validation on validation and test sets. All experiments were conducted on a V100 GPU with 32GB of memory.}
\label{fig:mem_time_tradeoff}
\end{figure*}

\subsection{Neighborhood Expansion}  
The level of neighborhood expansion significantly impacts the efficiency of our model. As described in \cref{alg:batching}, neighborhood expansion begins from the final nodes for which we require representations in the final layer and proceeds backward through the layers, sampling neighbors based on attention scores at each layer. 

We analyze the number of nodes across four datasets: Amazon-Photo, Coauthor-CS, Minesweeper, and Tolokers, to observe how the number of nodes increases as we trace back through the layers. This experiment is conducted with varying sampling degrees per layer, and the results are summarized in \cref{fig:neighborhood_expansion}. In all experiments, we assume that representations are needed for $10$ final nodes. We sample $100$ times of these $10$ random seed nodes and plot average and standard deviations of the neighborhood node counts.

The process has an upper bound, which is the total number of nodes in the graph. As the number of sampled nodes approaches this limit, the likelihood of encountering new nodes decreases. We compare these results with full-neighborhood sampling methods, as employed in k-hop neighborhood-induced subgraph techniques, and demonstrate that in the presence of expander graphs, this neighborhood can rapidly encompass the entire graph. The impact of limited neighborhood sampling becomes even more pronounced on extremely large graphs.

\begin{figure}[h]
    \centering
    \includegraphics[width=\linewidth]{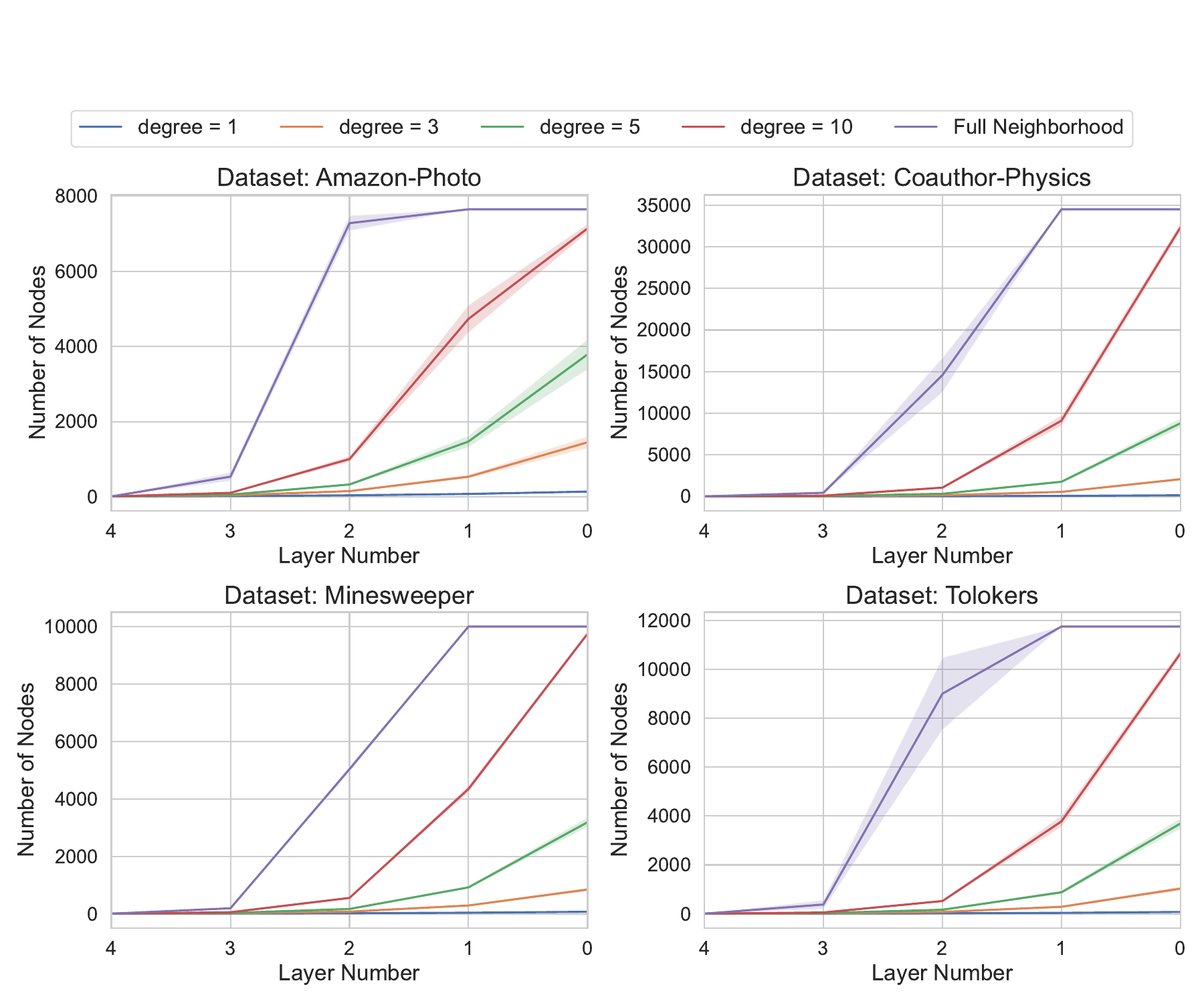}
    \caption{The neighborhood expansion of the graph is analyzed to determine the number of nodes required in each layer to obtain representations for $10$ nodes in the final layer. This is compared between Spexphormer's degree-based sampling and full-neighborhood selection. The shadowed regions in the plot represent the $95\%$ confidence intervals, calculated from $100$ iterations of sampling the ten final nodes.}
    \label{fig:neighborhood_expansion}
\end{figure}

\subsection{Memory and Runtime with Graph Size}  
Inspired by \cite{finkelshtein2024learning}, we compare the memory and runtime of our method to a Graph Convolutional Network (GCN) during a forward pass. In many real-world scenarios, the average degree of nodes is not constant and tends to scale with the graph size. One advantage of our method is its ability to subsample neighborhoods for each node, identifying a small yet representative set of neighbors. While GCN is a more computationally efficient network, we demonstrate that, even with a small but superlinear growth in neighborhood size, the memory and runtime requirements of GCN can surpass those of our method, which employs a sparse but regular self-attention layer with a fixed neighborhood size.  

In these experiments, we evaluate different growth factors for the GCN and varying neighborhood sampling sizes for our sparse self-attention method. For these comparisons, no batching is used; the entire process operates on the whole graph. Both models consist of a single layer of the corresponding network followed by a linear layer that maps the values to dimension $1$. We use a hidden dimension of $128$. In this setup, the GCN has approximately $16K$ parameters, while the self-attention layer in our method has about $65K$ parameters.  

We vary the number of nodes from $10K$ to $50K$, using an Erdős-Rényi distribution with specified probabilities $p$, denoted as $ER(n, p)$. Here, $n$ represents the number of nodes, and $p$ is the probability that any pair of nodes is independently connected. In the GCN model input, $p$ varies with $n$ and can also be viewed as a function of $n$. The average degree of a node in this model is $pn$. For the Spexphormer model, we sample $d$-regular graphs as input. Node features are drawn from $\mathcal{N}(0, I_{128})$, where $I_{128}$ is the $128$-dimensional identity matrix.

The results, shown in \cref{fig:runtime_memory_synth}, indicate that except for a constant average degree in the GCN model, the memory and runtime growth rates are higher for the GCN under all other configurations. For sufficiently large and dense graphs, our method proves to be significantly more efficient in both memory and runtime.

\begin{figure*}[h]
\captionsetup[subfloat]{farskip=-1pt,captionskip=-1pt}
\centering
\subfloat[][]{\includegraphics[width = 2.8in]{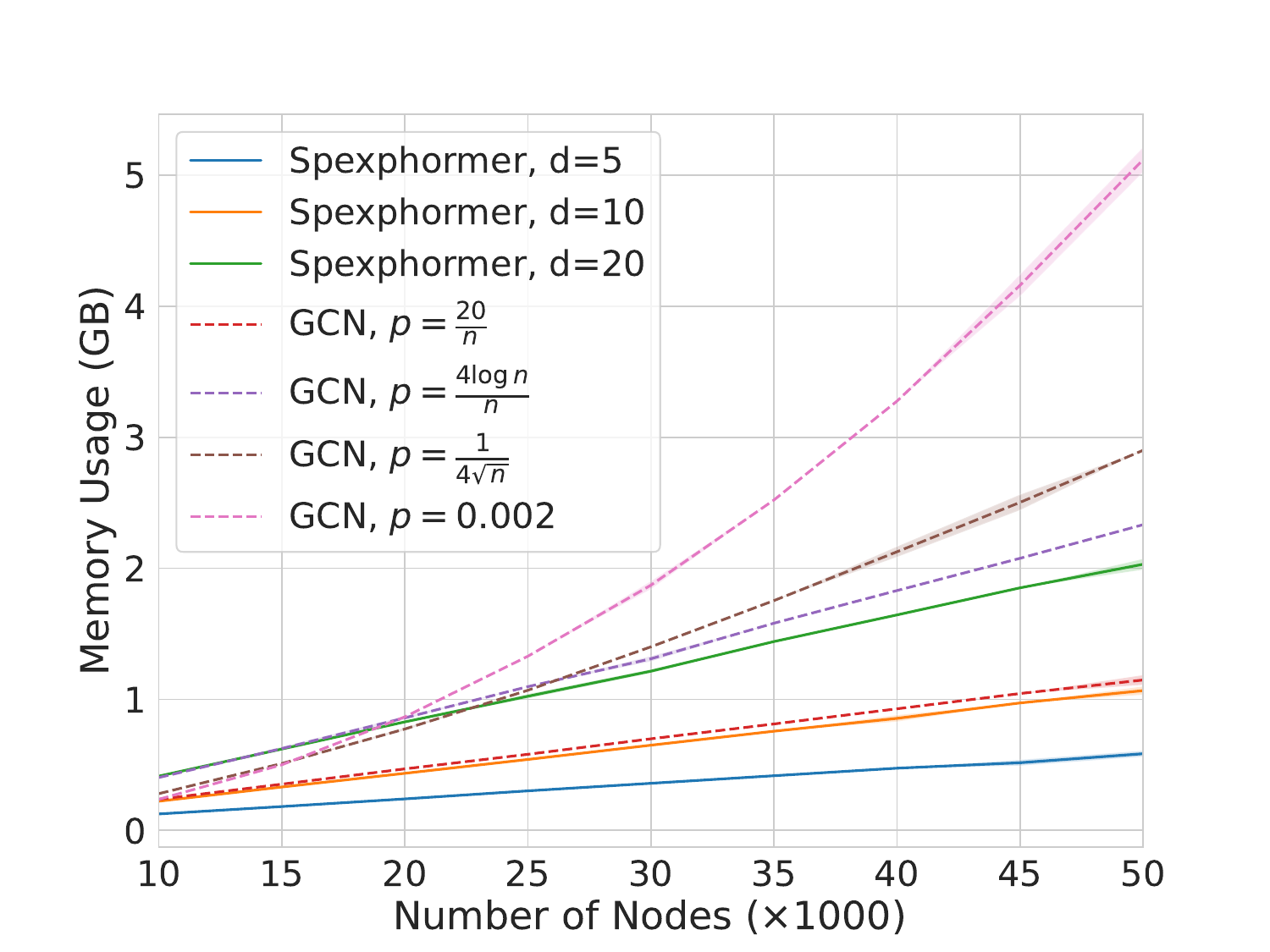}} 
\subfloat[][]{\includegraphics[width = 2.8in]{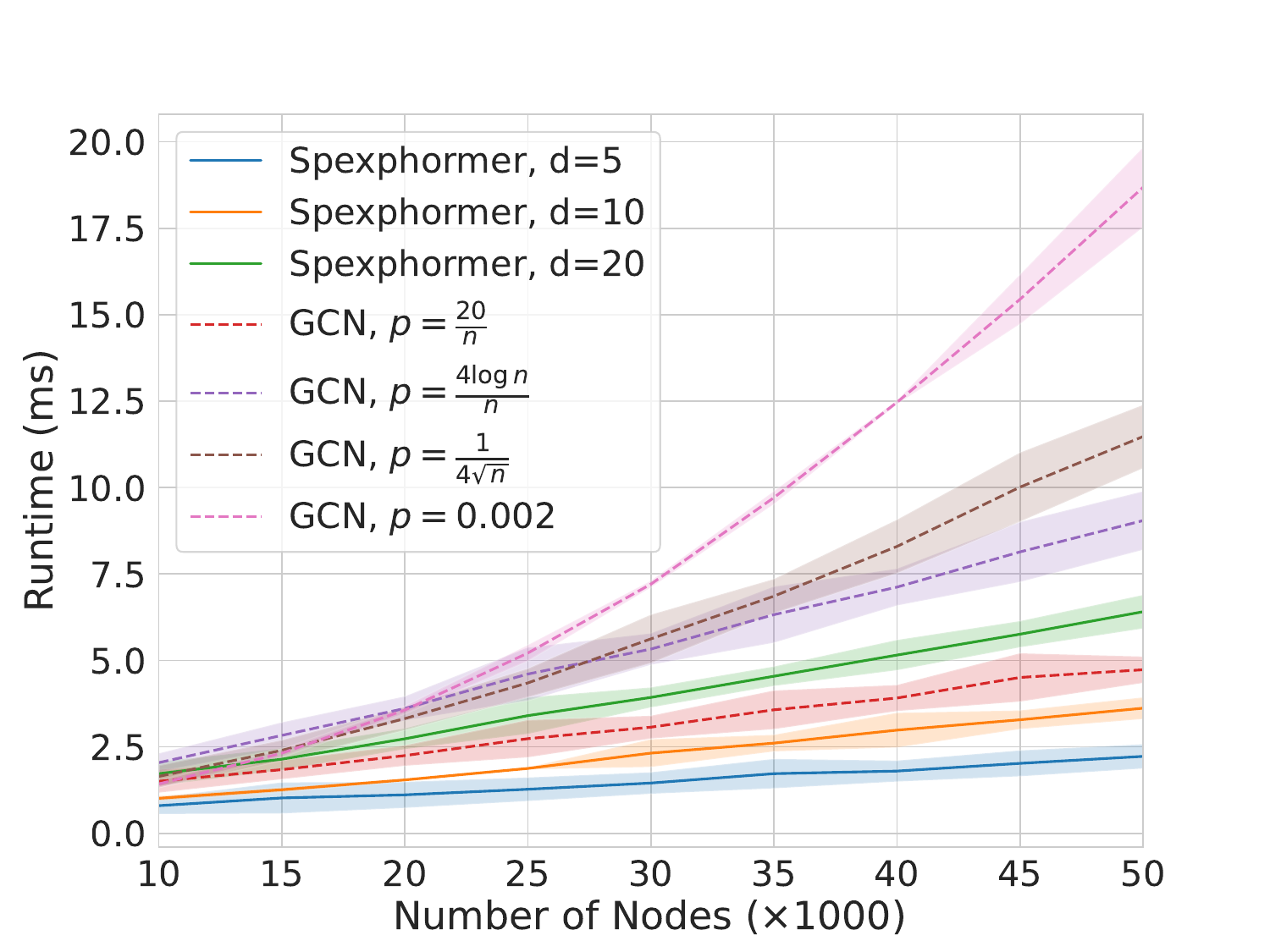}}
\caption{The memory and runtime comparison between our model and the GCN demonstrates that our model, with sparsification, significantly outperforms even a very simple GCN model on a forward pass.}
\label{fig:runtime_memory_synth}
\end{figure*}

\subsection{Accuracy, Memory, and Runtime with Sampling Degree}  
For four datasets—Tolokers, Minesweeper, Amazon-Photo, and Coauthor-CS—we analyze how accuracy, memory usage, and runtime change as the sampling degree is varied. In this experiment, all hyperparameters are fixed except for the sampling degree $deg_{\ell}$, which is kept consistent across all layers to simplify the analysis. The results are shown in \cref{fig:runtime_memory_by_deg}, where we plot both the Accuracy/AUC results and the memory/runtime metrics.  

For more heterophilic datasets, larger neighborhood sampling generally improves performance; however, the improvement becomes marginal beyond a certain point, while memory usage and runtime continue to increase linearly. For homophilic datasets, a very small neighborhood size is sufficient, and increasing the neighborhood size further does not provide noticeable benefits.

\begin{figure*}[h]
\captionsetup[subfloat]{farskip=-1pt,captionskip=-1pt}
\centering
\subfloat[][Tolokers AUC]{\includegraphics[width = 2.18in]{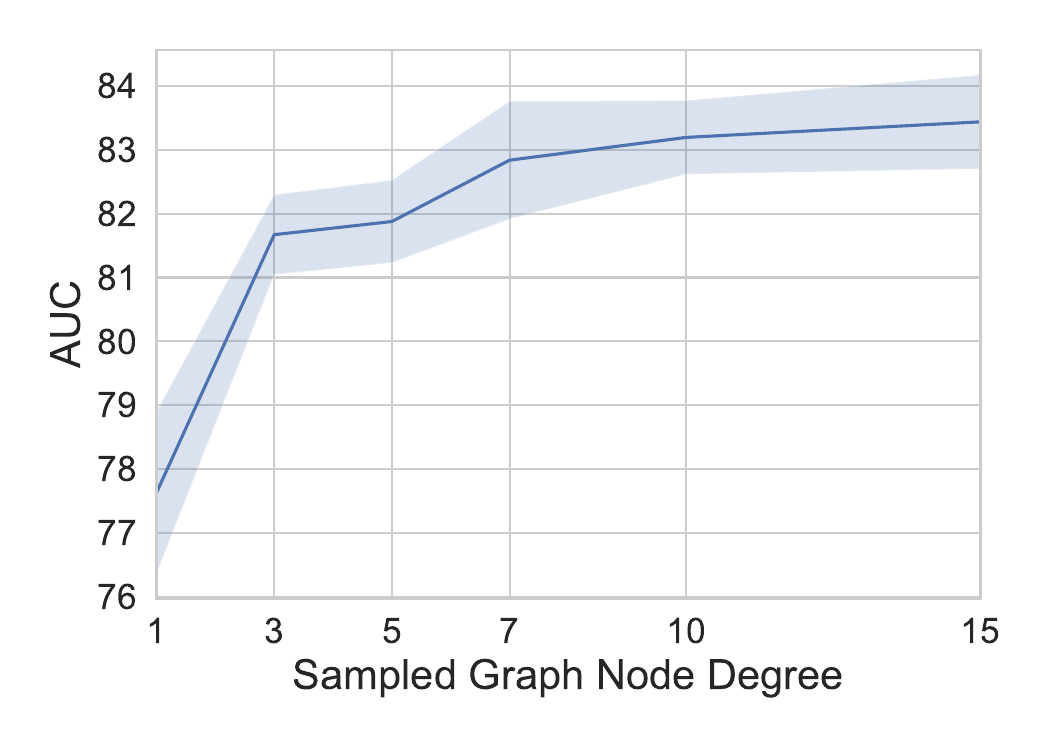}} 
\subfloat[][Tolokers memory]{\includegraphics[width = 1.55in]{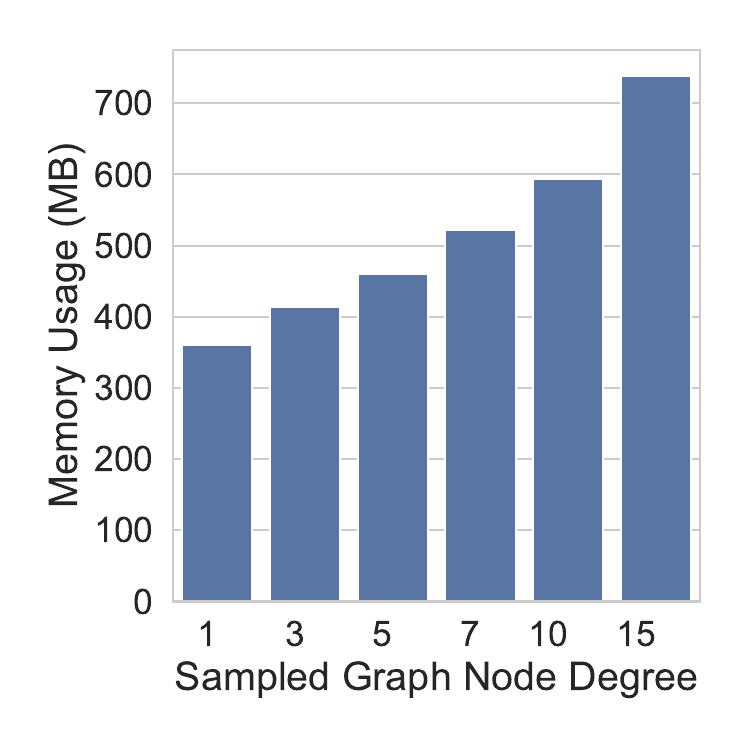}}
\subfloat[][Tolokers runtime]{\includegraphics[width = 1.55in]{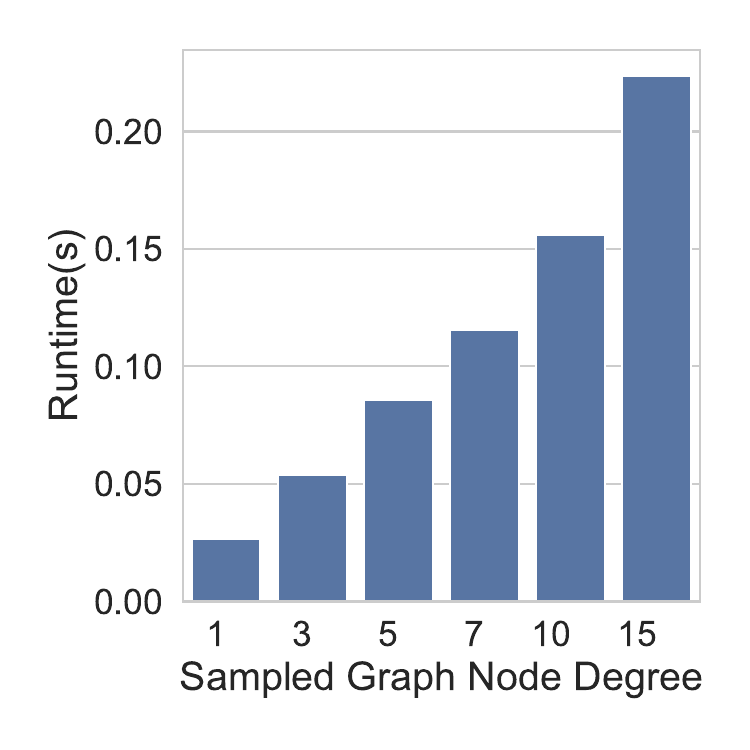}}
\\
\subfloat[][Minesweeper AUC]{\includegraphics[width = 2.18in]{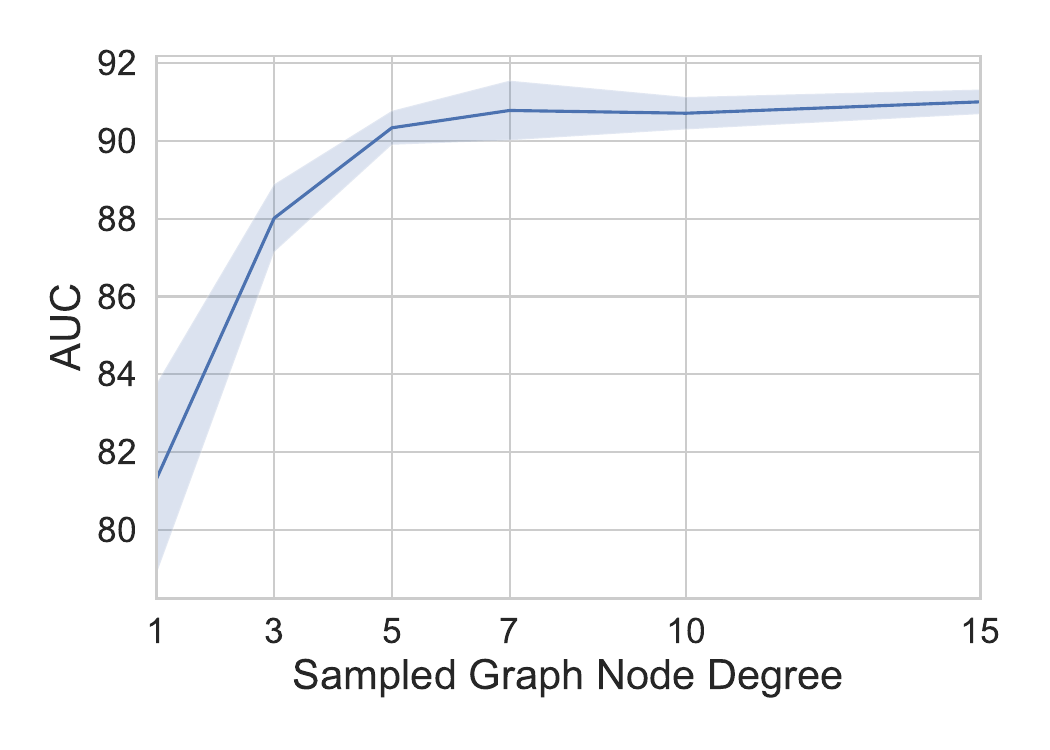}} 
\subfloat[][Minesweeper memory]{\includegraphics[width = 1.55in]{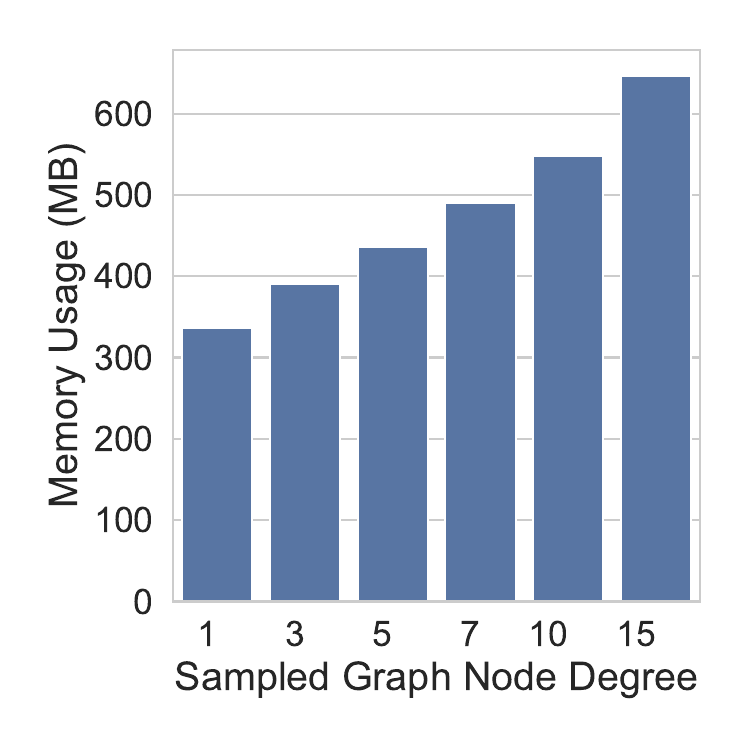}}
\subfloat[][Minesweeper runtime]{\includegraphics[width = 1.55in]{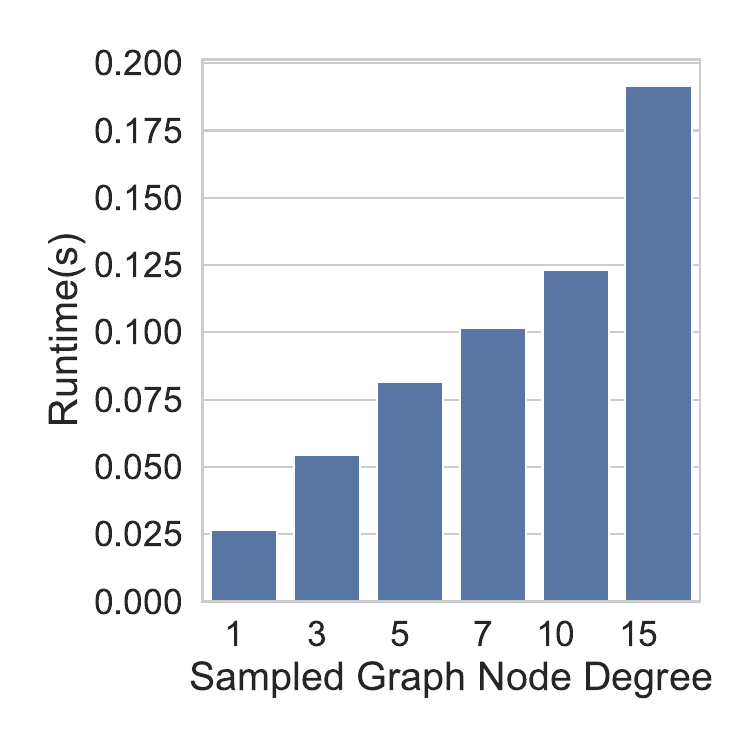}}
\\
\subfloat[][Photo accuracy]{\includegraphics[width = 2.18in]{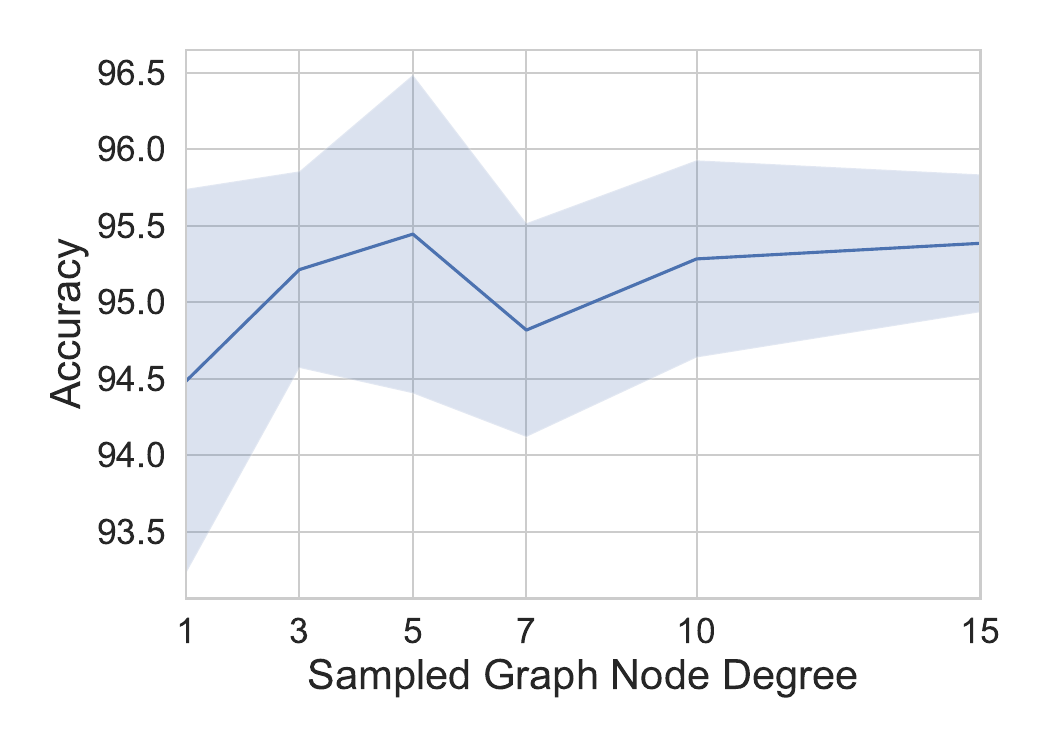}} 
\subfloat[][Photo memory]{\includegraphics[width = 1.55in]{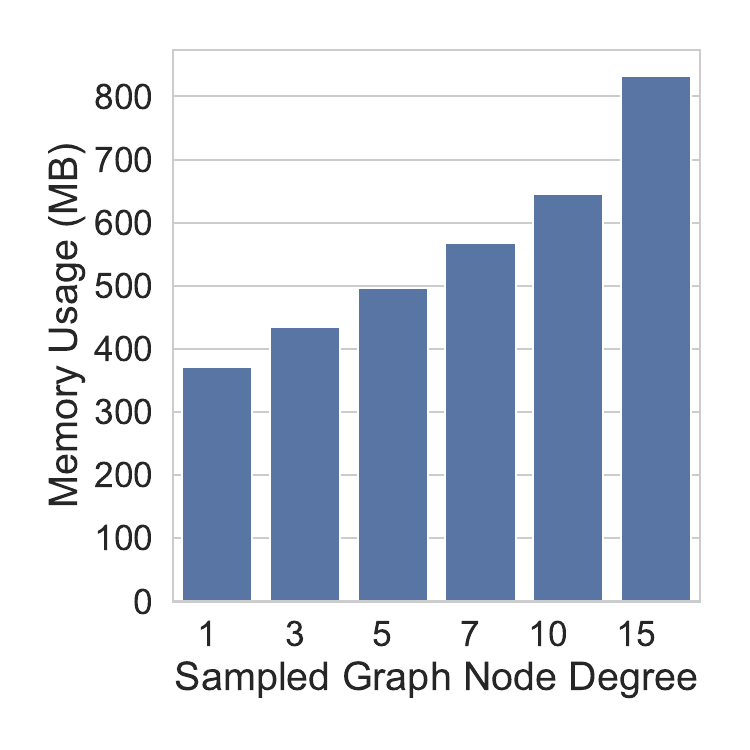}}
\subfloat[][Photo runtime]{\includegraphics[width = 1.55in]{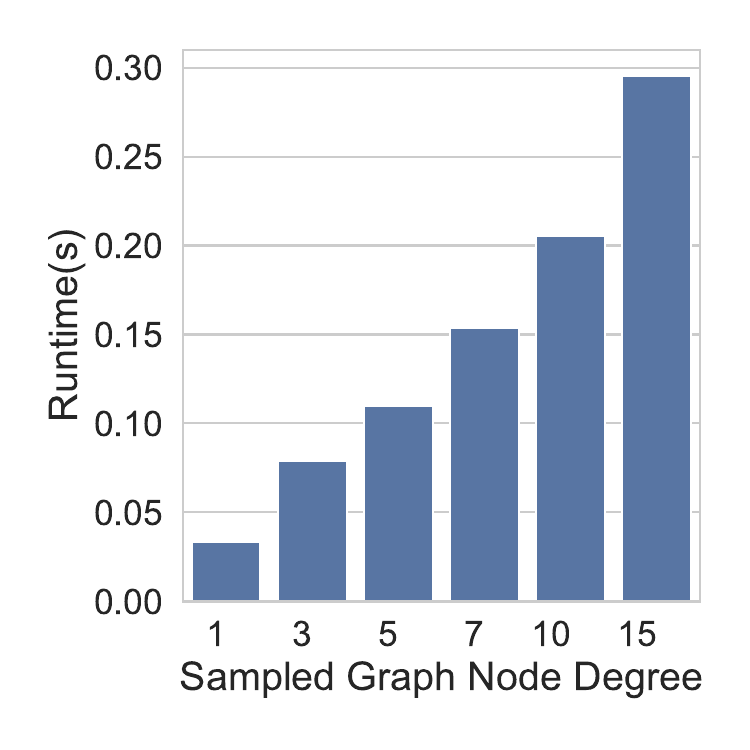}}
\\
\subfloat[][CS accuracy]{\includegraphics[width = 2.18in]{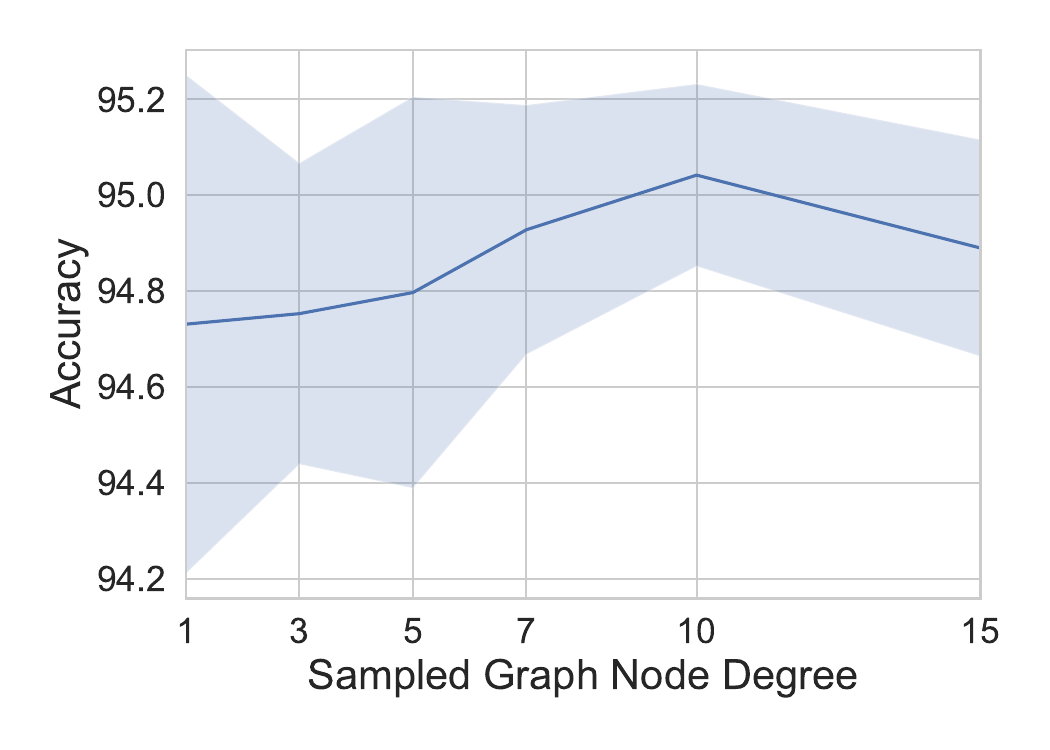}} 
\subfloat[][CS memory]{\includegraphics[width = 1.55in]{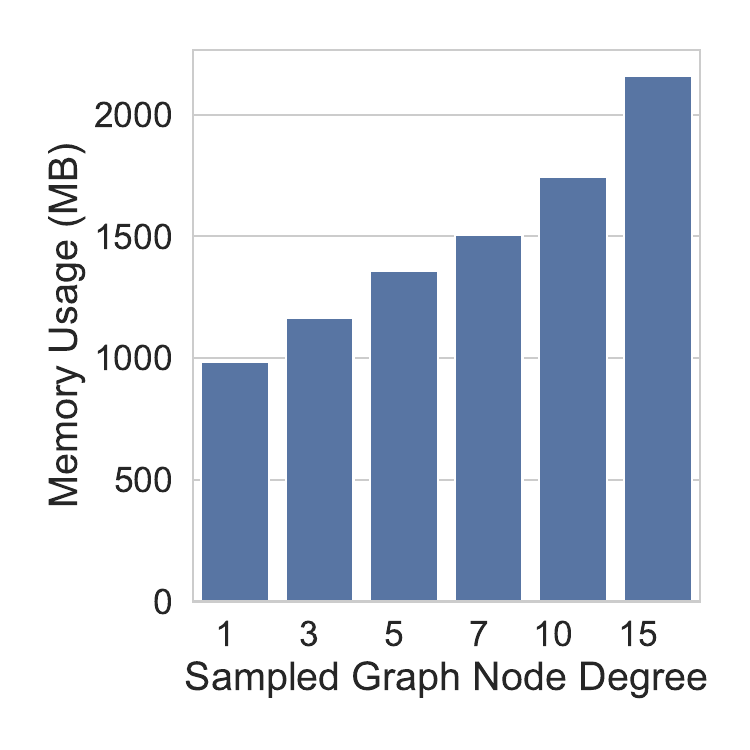}}
\subfloat[][CS runtime]{\includegraphics[width = 1.55in]{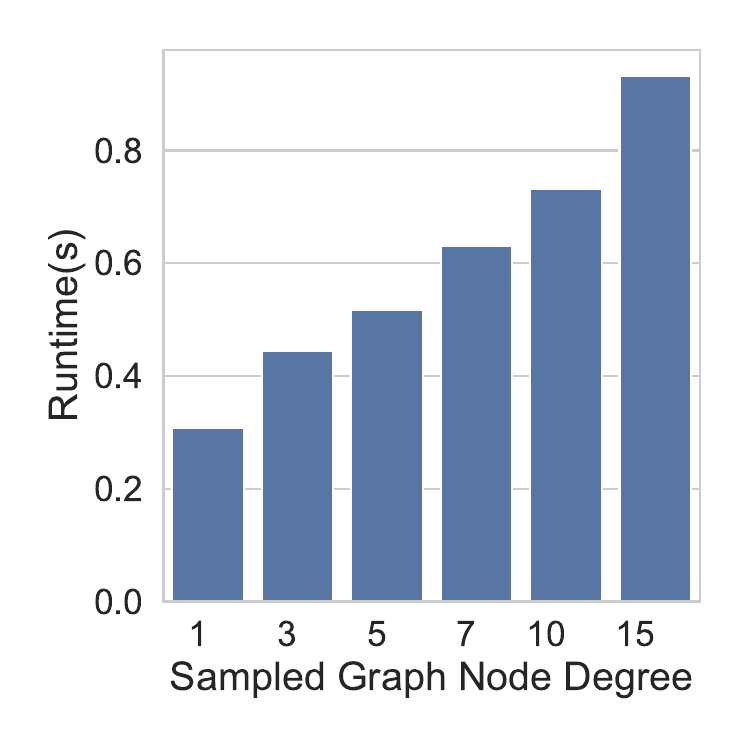}}
\caption{AUC and accuracy results, along with memory and runtime analysis, are presented for four datasets: two homophilic datasets (Amazon-Photo and Coauthor-CS) and two heterophilic datasets (Tolokers and Minesweeper). Larger sampling degrees generally lead to better results; however, for the homophilic datasets, even a very small neighborhood size can yield substantial performance. Increasing the sampling degree increases memory and runtime requirements accordingly.}
\label{fig:runtime_memory_by_deg}
\end{figure*}

\section{Experiment Details}

\subsection{Hyperparameters}

In our networks, we use a higher expander degree than what was used in the \textsc{Exphormer} paper. Since many of these edges will get a small attention score, a higher attention score increases the receptive field of the nodes, letting the final network be able to sample from wider options and have better access to long-range dependencies. We also noticed, the attention scores in the first layer are usually more flat than the other layers and so we usually sample more edges from the first attention layer. For the comparisons both on the results and the memory we have given the same expander degree to the Exphormer and the ogbn-arxiv dataset could barely fit into a 40GB GPU memory device with higher expander degree. For the attention score estimator network, we do not use dropout, and we only use one attention head in these networks. The number of layers is always equal between both networks. 

We use AdamW optimization algorithm in all our networks and use a cosine learning rate scheduler with it. We use weight decay of $1e-3$ in all networks. We use layer norm in attention score estimator networks to keep attention scores more meaningful, but use a batch norm for better results in the final \textsc{Spexphormer} model.
Other key hyperparameters can be found in \cref{tab:hyperparamshomophily,tab:hyperparamsheter,tab:hyperparamslarge}.

\begin{table}[ht]
\centering
\caption{Hyperparameters used for training the networks for homophilous datasets.}
\label{tab:hyperparamshomophily}
\scalebox{0.9}{
\begin{tabular}{l|cccccc} 
\toprule
{\bf Hyperparameter}    & {\bf OGBN-Arxiv} & {\bf Computer}   & {\bf Photo} & {\bf CS} & {\bf Physics} & {\bf WikiCS} \\ \hline
\multicolumn{7}{c}{Attention Score Estimator}  \\
\hline
$L$       & 3       & 4       & 4           & 4       & 4   & 4  \\
$d_s$& 8 & 4 & 4 & 4 & 4 & 4\\
Num Epochs  & 200     & 200     & 200& 200& 200 & 100\\
Learning Rate      & 0.01    & 0.1   & 0.001       & 0.002& 0.001 & 0.1 \\
\hline
\multicolumn{7}{c}{Final Spexphormer Network} \\
\hline
$d_l$& 96      & 80      & 56& 64& 64 & 64\\
$\deg_\ell$& [6, 6, 6] & [5, 5, 5, 5]& [5, 5, 5, 5]& [5, 5, 5, 5]& [5, 5, 5, 5] & [8, 5, 5, 5]\\
Number of Heads   & 2       & 2       & 2           & 2       & 2   & 2     \\
Learning Rate           & 0.01    & 0.001   & 0.01& 0.002& 0.001 & 0.001 \\
Num Epochs              & 600     & 150     & 100& 120& 80 & 100 \\
Dropout                 & 0.3     & 0.5& 0.5& 0.4     & 0.4  & 0.5\\ 
\bottomrule
\end{tabular}
}
\end{table}

\begin{table}[ht]
\centering
\caption{Hyperparameters used for training the networks for heterophilic datasets.}
\label{tab:hyperparamsheter}
\scalebox{0.8}{
\begin{tabular}{l|cccccc} 
\toprule
{\bf Hyperparameter}       &{\bf Actor} &{\bf Minesweeper} &{\bf Tolokers} & {\bf Roman-Empire} & {\bf Amazon-ratings} & {\bf Questions} \\ \hline
\multicolumn{4}{c}{Attention Score Estimator} \\
\hline
$L$         & 3    & 4 & 4 & 4 & 4 & 4\\
$d_s$& 4 & 4 & 4 & 4& 4 & 4 \\
Num Epochs &100 & 100 & 200 & 200 & 100 & 100\\
Learning rate & 0.01 & 0.01 & 0.01& 0.01 & 0.01& 0.01\\
\hline
\multicolumn{7}{c}{Final Spexphormer Network} \\
\hline
$d_l$ &32 & 32 & 32& 40 & 64 & 32\\
$\deg_\ell$& [2, 2, 2] & [12,5,5,5] & [12, 10, 10, 10] & [12, 10, 10, 10] & [8, 5, 5, 5] & [5, 5, 5, 5]  \\
Number of Heads &4 & 4 & 4 & 2 & 2 & 2\\
Learning Rate &0.01 & 0.01 & 0.01 &0.03 &0.01 & 0.01\\
Num Epochs &100 & 80 & 200 & 200 & 200 & 80\\
Dropout &0.5 & 0.2 & 0.25 & 0.1 & 0.1 & 0.5\\ 
\bottomrule
\end{tabular}
}
\end{table}

\begin{table}[ht]
\centering
\caption{Hyperparameters used for training the networks for the large graphs datasets.}
\label{tab:hyperparamslarge}
\begin{tabular}{l|lll} 
\toprule
{\bf Hyperparameter}       &{\bf ogbn-proteins} &{\bf Amazon2M} &{\bf Pokec}\\ \hline
\multicolumn{4}{c}{Attention Score Estimator} \\
\hline
$L$         & 2    & 2 & 2 \\
$d_s$& 8 & 8 & 8\\
expander degree & 200 & 30 & 30\\
Num Epochs &150 & 150 & 150\\
Learning rate & 0.01 & 0.01 & 0.01\\
\hline
\multicolumn{4}{c}{Final Spexphormer Network} \\
\hline
$d_l$ & 64& 128 & 64 \\
$\deg_\ell$& [50, 30] & [10,10]& [20, 20] \\
Number of Heads & 1& 1 & 1\\
Learning Rate &0.005 & 0.001 & 0.01\\
Num Epochs &200 & 200 & 300 \\
Dropout &0.1& 0.2 & 0.2\\ 
Batch size & 256& 1000& 500 \\
GPU Memory& 2232MB& 3262MB& 2128MB\\
\bottomrule
\end{tabular}

\end{table}

\subsection{Hardware}
For all trainings of the medium-sized graph datasets and the final network training of the large-sized graphs, we used GPUs of type A100 with 40GB memory, and V100, both 32GB and 16GB versions. While these are powerful GPUs, we have always monitored the GPU memory usage for computational efficiency, ensuring that no more than 8GB is used for whole graph training and no more than 4GB of GPU memory is used with batching. Training with even less memory is feasible with smaller batch sizes.

For calculating the attention scores on the large graph datasets, we have used CPU devices Intel Xeon E5-2680 v4, with 500GB of memory. Except for the Amazon2M dataset, for the other datasets 200GB of memory would be sufficient.

\section{Theory}

In this section, we theoretically analyze the compressibility of the Graph Transformer architecture and also sparsification guarantees using the attention score estimator network.

For simplification, we use the following formulation of a single head Transformer network:

\[
h_i\llh = \sum_{j=1}^{deg_i} a_{ij}^{(l)}\mathbf{V}_j^{(\ell)},
\]

\[
 h_i\lln = \W_2^{(\ell)} \left(\sigma \left(\W_1^{(\ell)} \left( h_i\llh\right)\right)\right),
\]

\[
a_{ij}^{(l)} = \frac{\exp\left({\mathbf{K}_j^{(\ell)} \cdot \mathbf{Q}_i^{(\ell)}}\right)}{\sum_{u \in \mathcal{N}_H(i)} \exp\left({\mathbf{K}_u^{(\ell)} \cdot \mathbf{Q}_i^{(\ell)}}\right)},
\]

where, $\V\llo = \W_V\llo h\llo$, $\Q\llo = \W_Q\llo h\llo$, $\K\llo = \W_K\llo h\llo$, and $\sigma$ can be any 1-Lipchitz activation function, such as $\ReLU$, which has been used in practice in our networks.  We remove the normalization parts from the architecture but assume that in all steps for all vectors, $\norm{X_i}_2, \norm{h_i\llh}_2, \norm{h_i\llo}_2 \leq \sqrt{\alpha}$, and all linear mapping $\W_\cdot$ matrices' operator norm is bounded by a constant $\beta$. The first assumption is realistic because of the layer-norm applied between the layers in real-world architectures. The second assumption is also justified as the operator norms are near $2$ in the initialization of the network by the default PyTorch initialization and during the optimization we expect the operator norm to not increase drastically from the initialization. \todo{Hamed: can we cite anything about this?} Also, we assume $h^{(0)} = X$, which is the input features. For a simpler notation, we will use $D$ for a hypothetical large network hidden dimension in this analysis, and $d$ is the hidden dimension of the narrow network. For simplicity, in our analysis, we assume $X \in \R^{n \times D}$. In case each node has less than $D$ features, we can concatenate them with zeros.

\subsection{On the Compressibility of the Graph Transformer}

Our approach uses a narrow network to estimate the attention scores. We want to show if we have a large network with good accuracy for a task on a graph, we can have a less complex network that can work on the same input graph and the error of this network is bounded by $\mathcal{O}(\eps)$ from the large network.

The most memory/time-intensive part of a Transformer architecture is its attention score calculation part. The rest of the sections are node/token-wise and linear with respect to the number of nodes. The attention score estimation part of a full-Transformer layer requires $\mathcal{O}(n^2d)$ operations and $\mathcal{O}(md)$ operators are required for a sparse Transformer with $m$ attention edges. In the main Exphormer network, this would also be more intensive as the edge features mappings require $\mathcal{O}(md^2)$ operations, but since we replace edge feature mappings with edge embeddings by their type, this part in case we do not have other edge features is $\mathcal{O}(md)$, but $m$ still can be $\omega(n)$, and it will be the most computationally-intensive part. 

Assume we have a large network with $L$ layers, where $L$ is $\mathcal{O}(1)$, and hidden dimension $D$, we will show that there is a similar network with $L$ layers where the attention score calculation matrices $\W_Q, \W_K \in \mathbb{R}^{D \times d}$, and all other matrices are of the same size and $d$ is $\mathcal{O}(C^L\frac{\log n}{\epsilon^2})$, where $C$ is a constant based on $\alpha$ and $\beta$. For this proof we use the distributional Johnson-Lindenstrauss transform lemma \citep{johnson1984extensions}:

\begin{lemma}[Johnson-Lindenstrauss Transform Lemma (\textit{JLT})]
Assume $0 < \epsilon, \delta < \frac{1}{2}$ and any positive integer $D$, if $d = \mathcal{O}(\frac{\log(1/\delta)}{\epsilon^2})$, there exist a distribution over matrices $\mathbf{M} \in \mathbb{R}^{d \times D}$ that for any $x \in \mathbb{R}^{D}$ and $\norm{x} = 1$:

$$
\operatorname{Pr}(\norm{\M x} - 1 > \epsilon) < \delta 
$$
\end{lemma}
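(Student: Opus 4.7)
The plan is to exhibit the distribution explicitly and use standard chi-squared concentration. I would take $\M$ to have i.i.d.\ entries drawn from $\mathcal{N}(0, 1/d)$, so that $\sqrt{d}\,\M$ is a standard Gaussian matrix. The key property is that for any fixed unit vector $x \in \R^D$, by rotational invariance of the isotropic Gaussian, $\M x$ is distributed as a vector whose $d$ coordinates are i.i.d.\ $\mathcal{N}(0, 1/d)$; this is what reduces the problem from ``preserving the length of $x$'' to ``concentrating a sum of squared Gaussians.''

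Next I would write $\norm{\M x}^2 = \frac{1}{d}\sum_{i=1}^{d} Z_i^2$ with $Z_i \sim \mathcal{N}(0,1)$ i.i.d., so that $d\,\norm{\M x}^2$ follows a $\chi^2_d$ distribution with mean $d$. Applying a standard Chernoff argument on the moment generating function of $\chi^2_d$ (e.g.\ the Laurent--Massart bound) yields, for $0 < \epsilon < 1/2$,
$$\operatorname{Pr}\bigl(\bigl|\norm{\M x}^2 - 1\bigr| > \epsilon\bigr) \;\leq\; 2\exp\!\bigl(-c\,d\,\epsilon^2\bigr)$$
for an absolute constant $c > 0$. Choosing $d = \Theta(\log(1/\delta)/\epsilon^2)$ with an appropriate hidden constant makes this probability at most $\delta$. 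To convert from a bound on $\norm{\M x}^2$ to one on $\norm{\M x}$, I would use the elementary inequality that, for $s \geq 0$, $|s^2 - 1| \leq \epsilon$ implies $|s - 1| \leq \epsilon$ (factor as $|s-1|(s+1) = |s^2 - 1|$ and note $s+1 \geq 1$ when $s \geq 0$ and $\epsilon$ small).

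The argument is standard and has no real obstacle beyond tracking constants in the chi-squared tail bound; the only modeling choice is which distribution to place on $\M$. Gaussian entries give the cleanest proof via rotational invariance, but a Rademacher (random $\pm 1/\sqrt{d}$) construction or a sparse JLT construction would yield the same asymptotic guarantee with slightly different concentration tools (Khintchine-type inequalities or Hanson--Wright). For the downstream application in \cref{thrm:narrow_attention}, any of these suffices, since only the distributional guarantee on a single fixed vector is invoked -- the union bound over the $\mathcal{O}(n)$ query/key vectors that arise in the attention computation is then absorbed into the $\log n$ factor in the target dimension.
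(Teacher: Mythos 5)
Your proof is correct. The paper does not actually prove this lemma itself — it is stated as a known result with a citation to Johnson and Lindenstrauss, so there is no internal proof to compare against; your Gaussian construction with the chi-squared (Laurent--Massart) tail bound, followed by the elementary passage from $\lvert\,\norm{\M x}^2 - 1\rvert \le \epsilon$ to $\lvert\,\norm{\M x} - 1\rvert \le \epsilon$ via the factorization $\lvert s-1\rvert(s+1) = \lvert s^2-1\rvert$, is the standard and fully rigorous route to exactly the distributional statement the paper invokes. Your closing remark is also consistent with how the lemma is used downstream: the paper only needs the single-vector distributional guarantee plus a union bound over the at most $n^2$ query--key pairs, which is where the $\log n$ in the target dimension comes from.
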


The following corollary is an immediate conclusion from the \textit{JLT}. 

\begin{corollary}
    Assume $0 < \epsilon, \delta < \frac{1}{2}$ and any positive integer $D$, if $d = \mathcal{O}(\frac{\log(1/\delta)}{\eps^2})$, there exist a distribution over matrices $\M \in \R^{d \times D}$ that for any $x, y \in \R^{D}$:

$$
\operatorname{Pr}((1-\eps) \norm{x - y} < \norm{\M x- \M y}  < (1 + \eps) \norm{x - y}) < \delta 
$$
\end{corollary}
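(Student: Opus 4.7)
The plan is to derive the corollary as a direct consequence of the Johnson--Lindenstrauss Transform Lemma applied to the single vector $z := x - y$, exploiting linearity of the matrix $\M$. First I would dispose of the trivial case $x=y$, in which both $\norm{x-y}$ and $\norm{\M x - \M y}$ vanish and the inequalities hold vacuously for any $\M$.

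Next, assuming $x \ne y$, I would set $\hat z := (x-y)/\norm{x-y}$, which is a unit vector in $\R^D$. Instantiating the \textit{JLT} at dimension $d = \mathcal O(\log(1/\delta)/\eps^2)$ with this particular unit vector $\hat z$ produces a distribution over $\M \in \R^{d \times D}$ satisfying
\[
  \operatorname{Pr}\!\bigl( \lvert \norm{\M \hat z} - 1 \rvert > \eps \bigr) < \delta.
\]
By linearity, $\M(x-y) = \norm{x-y}\,\M \hat z$, so $\norm{\M x - \M y} = \norm{x-y}\cdot\norm{\M \hat z}$. Multiplying the complement event through by the positive scalar $\norm{x-y}$ then transports the \textit{JLT} guarantee into
\[
  (1-\eps)\norm{x-y} \le \norm{\M x - \M y} \le (1+\eps)\norm{x-y},
\]
which holds with probability at least $1-\delta$, matching the intended conclusion (the ``$<\delta$'' in the statement appears to be a typographical slip for ``$\ge 1-\delta$'', mirroring the \textit{JLT}'s own failure-probability phrasing).

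There is no genuine obstacle beyond bookkeeping; the ``proof'' is essentially one line of rescaling. The only point worth emphasizing is that a \emph{single} draw of $\M$ suffices for a given pair $(x,y)$: the \textit{JLT} guarantee holds for every fixed unit vector, and $\hat z$ is a specific (albeit $(x,y)$-dependent) unit vector, so no union bound is required at this stage. The target dimension $d = \mathcal O(\log(1/\delta)/\eps^2)$ is inherited verbatim from the \textit{JLT}, with no hidden constants picked up by the rescaling step.
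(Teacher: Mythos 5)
Your proposal is correct and is essentially identical to the paper's own argument, which likewise derives the corollary by substituting the unit vector $\frac{x-y}{\norm{x-y}}$ into the \textit{JLT} and rescaling by linearity. Your additional remarks (the trivial case $x=y$ and the observation that the stated ``$<\delta$'' should bound the failure event rather than the success event) are minor but accurate refinements of the same one-line reduction.
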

This can derived by replacing $x$ from \textit{JLT} with $\frac{x - y}{\norm{x-y}}$.

From this, we can derive another corollary about the dot product of the vectors in low-dimensional space. 

\begin{corollary} [JLT-dot product]
\label{cor:jltdot}
    Assume $0 < \epsilon, \delta < \frac{1}{2}$ and any positive integer $D$, if $d = \mathcal{O}(\frac{\log(1/\delta)}{\eps^2})$, there exist a distribution over matrices $\M \in \R^{d \times D}$ that for any $x, y \in \R^{D}$, and $\norm{x}, \norm{y} \leq \sqrt{\alpha}$:

$$
\operatorname{Pr}((1-\eps\alpha) x\tp y <  x\tp\M\tp\M y  < (1 + \eps\alpha) x\tp y) < \delta 
$$
\end{corollary}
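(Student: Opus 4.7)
The plan is a standard polarization argument that reduces the bilinear form $x\tp\M\tp\M y$ to squared norms of two fixed vectors, so that the preceding JLT corollary on differences applies as a black box. The key identity is
$$
x\tp y = \tfrac{1}{4}\bigl(\norm{x+y}^2 - \norm{x-y}^2\bigr), \qquad x\tp\M\tp\M y = \tfrac{1}{4}\bigl(\norm{\M(x+y)}^2 - \norm{\M(x-y)}^2\bigr),
$$
so controlling the distortion of the bilinear form reduces to controlling JLT distortion on the two fixed vectors $u_1 = x+y$ and $u_2 = x-y$, which sit in $\R^D$ and have norms bounded in terms of $\sqrt{\alpha}$.

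First I would invoke the previous corollary twice, once for each $u_i$, with distortion parameter $\eps/6$ and per-call failure probability $\delta/2$; the required dimension is still $d = \mathcal{O}(\log(1/\delta)/\eps^2)$. A union bound yields, with probability at least $1-\delta$, simultaneously $\bigl|\norm{\M u_i} - \norm{u_i}\bigr| \le (\eps/6)\norm{u_i}$ for $i\in\{1,2\}$, and squaring (valid for $\eps\le 1/2$) gives $\bigl|\norm{\M u_i}^2 - \norm{u_i}^2\bigr| \le (\eps/2)\norm{u_i}^2$. Subtracting the two polarization identities and invoking the parallelogram law $\norm{x+y}^2 + \norm{x-y}^2 = 2\norm{x}^2 + 2\norm{y}^2 \le 4\alpha$ then gives
$$
\bigl|x\tp\M\tp\M y - x\tp y\bigr| \;\le\; \tfrac{1}{4}\sum_{i=1}^2 \bigl|\norm{\M u_i}^2 - \norm{u_i}^2\bigr| \;\le\; \tfrac{\eps}{8}\bigl(\norm{u_1}^2 + \norm{u_2}^2\bigr) \;\le\; \tfrac{\eps\alpha}{2},
$$
which yields the desired bound with constants to spare (absorbable into the $\mathcal{O}(\cdot)$ on $d$).

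The main subtlety is interpretive rather than technical. As written the corollary states a \emph{multiplicative} $(1\pm\eps\alpha)$ bound on $x\tp y$, but the polarization route (and indeed any JLT-style argument) naturally produces an \emph{additive} error of size $\eps\alpha$; the additive form is also the only one that makes sense in general, since $x\tp y$ may be zero or negative while $x\tp\M\tp\M y$ need not be. I would therefore state the conclusion as $\bigl|x\tp\M\tp\M y - x\tp y\bigr| \le \eps\alpha$, and read the displayed probability inequality as bounding the failure event (matching the sign convention used in the preceding corollary, which as displayed has the same orientation). Under this reading the proof needs no new probabilistic machinery beyond the vector-form JLT, and the whole argument is three lines of algebra plus a union bound.
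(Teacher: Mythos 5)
Your proof is correct and is essentially the argument the paper relies on: the paper gives no proof of its own here, deferring to the cited reference, whose proof is exactly this polarization-plus-union-bound argument. Your observation that the conclusion should be read as the additive bound $\abs{x\tp\M\tp\M y - x\tp y} \le \eps\alpha$ rather than the multiplicative form displayed is also right, and is consistent with how the paper actually invokes the corollary later (in the proof of the narrow-attention theorem it uses precisely the additive form).
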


For the proof see \citep[Corollary 2.1]{Lec09}. As a result of this corollary, if we have $m$ pairs of vectors $(x_i, y_i)$, and for each $i$ $\norm{x_i}_2, \norm{y_i}_2 \leq \sqrt{\alpha}$ of $\sqrt{\alpha}$, and $d = \mathcal{O}(\frac{\log(m)}{\eps^2})$, there exists an $\M$ such that for all these pairs $\abs{x_i\tp\M\tp\M y_i -  x_i\tp y_i}  <  \eps\alpha$. The proof can be done using a union bound over the error from Corollary~\ref{cor:jltdot}. Also, in our case where $m$ is the number of edges, we know that $m \leq n^2$, thus we can also say $d = \mathcal{O}(\frac{\log(n)}{\eps^2})$.

\begin{theorem}
\label{thrm:narrow_attention}
Assume we have a Transformer network $\mathcal{T}$ with arbitrary large hidden dimension $D$, $L=O(1)$ layers, and in this network, in all layers, we have $\norm{h_\cdot}_2 \leq \sqrt{\alpha}$, and $\norm{\W_\cdot}_{op} \leq \beta$. There exists a Transformer $\widehat{\mathcal{T}}$, that for any layer $\W_Q$ and $\W_K$ are in $\R^{d \times D}$ for a $d=\mathcal{O}(\frac{\log n}{\eps^2})$, with a sufficiently small $\eps$, and for all $i \in [n]$, $\norm{\mathcal{T}(X)_i - \widehat{\mathcal{T}}(X)_i}_2 = \mathcal{O(\eps)}$. And furthermore, for any attention score $\frac{a_{ij}\llo}{\ah_{ij}\llo} = 1 + \mathcal{O}(\eps).$
\end{theorem}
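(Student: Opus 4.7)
The natural plan is to build $\widehat{\mathcal{T}}$ layer by layer by left-multiplying the query and key weight matrices with a Johnson--Lindenstrauss sketch. Concretely, for each layer $\ell$ draw an independent sketching matrix $\M^{(\ell)}\in\R^{d\times D}$ from the distribution guaranteed by Corollary~\ref{cor:jltdot}, and set $\wh{Q}{\ell} := \M^{(\ell)} \w{Q}{\ell}$ and $\wh{K}{\ell} := \M^{(\ell)} \w{K}{\ell}$, while leaving all other weight matrices ($\W_V, \W_1, \W_2$) unchanged. Then each compressed logit becomes $(\M^{(\ell)} \w{Q}{\ell} \hh_i^{(\ell)})^{\top}(\M^{(\ell)} \w{K}{\ell} \hh_j^{(\ell)})$, which is the sketched dot product of two vectors whose norms are at most $\beta\sqrt{\alpha}$ by the operator norm and input-norm assumptions.

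The first step is to apply the union-bounded version of Corollary~\ref{cor:jltdot} over all $m \le n^2$ edges, at all $L = O(1)$ layers: choosing $d = \mathcal{O}(\log n / \eps^2)$ ensures that, with high probability, every sketched logit satisfies
\[
\bigl|\,(\wh{Q}{\ell} \hh_i^{(\ell)})^{\top} (\wh{K}{\ell} \hh_j^{(\ell)}) - (\w{Q}{\ell} \hh_i^{(\ell)})^{\top}(\w{K}{\ell} \hh_j^{(\ell)})\,\bigr| \le \eps\,\alpha\beta^2.
\]
Next I convert this logit-level error into a multiplicative error on the softmax scores: if every logit is perturbed by an additive term of magnitude at most $\eta := \eps\alpha\beta^2$, then the ratio of old to new softmax outputs lies in $[e^{-2\eta}, e^{2\eta}] = 1 + \mathcal{O}(\eps)$ for $\eps$ small enough. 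This already handles the second claim of the theorem at the first layer (where $\hh^{(0)} = h^{(0)} = X$).

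The main work is the inductive step, which is the place I expect the real obstacle. Assume $\|\hh_i^{(\ell)} - h_i^{(\ell)}\|_2 \le \delta_\ell$ for all $i$. Then the logits computed in the compressed network differ from those of the reference network for two distinct reasons: (i) the sketching error, bounded by $\eta$ above, and (ii) the perturbation of inputs fed into the (unchanged but now sketched) query/key mappings, which contributes an additional $O(\beta^2 \delta_\ell)$ error since the bilinear form $x \mapsto (\w{Q}{\ell} x)^\top(\w{K}{\ell} y)$ is Lipschitz in each argument with constant at most $\beta^2 \sqrt{\alpha}$. Combining (i) and (ii), the attention-score ratios still satisfy $a_{ij}^{(\ell)}/\ah_{ij}^{(\ell)} = 1 + O(\eps + \delta_\ell)$. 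Propagating through the value aggregation $\sum_j a_{ij}\V_j$ costs a factor of $O(\beta\sqrt{\alpha})$ (since values have bounded norm and attention weights form a probability distribution), and the downstream $\W_2 \sigma(\W_1 \cdot)$ is $\beta^2$-Lipschitz because $\sigma$ is $1$-Lipschitz. This yields a recursion $\delta_{\ell+1} \le C(\eps + \delta_\ell)$ for a constant $C$ depending only on $\alpha, \beta$.

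Unrolling the recursion over $L = O(1)$ layers produces $\delta_L = O(C^L \eps) = O(\eps)$, which establishes the bound $\|\mathcal{T}(X)_i - \widehat{\mathcal{T}}(X)_i\|_2 = O(\eps)$ stated in the theorem. The attention-score claim for arbitrary layer $\ell$ then follows by applying the softmax conversion argument at layer $\ell$ using the inductive hypothesis $\delta_\ell = O(\eps)$. The subtle point — and the one deserving the most care — is ensuring that the same sketch $\M^{(\ell)}$ suffices simultaneously for the $O(n)$ key vectors it must approximate; this is exactly what the union bound over $m$ pairs following Corollary~\ref{cor:jltdot} delivers, which is why $d$ only needs to scale as $\log n / \eps^2$ rather than $n/\eps^2$.
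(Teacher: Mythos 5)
Your proposal is correct and follows essentially the same route as the paper's proof: sketch only $\W_Q$ and $\W_K$ with a per-layer JL map (union-bounded over all $m \le n^2$ attention pairs via Corollary~\ref{cor:jltdot}), split the logit error into a sketching term plus an input-perturbation term controlled by the Lipschitzness of the bilinear form, convert additive logit error to a multiplicative $e^{\pm 2\eta}$ bound on the softmax scores, and propagate through the value aggregation and the $\beta^2$-Lipschitz feed-forward to obtain the recursion $\delta_{\ell+1} \le C(\eps + \delta_\ell)$ unrolled over $L = O(1)$ layers. The only cosmetic difference is that the paper tracks the slightly inflated norm bound $\norm{\bar q_i} \le \beta(\eps^{(\ell)} + \sqrt{\alpha})$ explicitly before absorbing it into constants, which your argument implicitly accommodates through the inductive hypothesis.
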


\begin{proof}
In the proof we use hat notation, $\widehat{\square}$, for the vectors and matrices from $\widehat{\mathcal{T}}$, for example, $\hat{h}^{(\ell)}$ are the outputs of layer $\ell$, and $\Wh_\cdot$ are the weight matrices for this network. In all layers for both networks $\W_V, \W_1$, and $\W_2$, are of the same size, so we set $\Wh_V = \W_V$, $\Wh_1 = \W_1$, and $\Wh_2 = \W_2$.

For the proof, we want to find $\eps^{(0)}, \cdots, \eps^{(L)}$ in a way that for any $v$ in layer $\ell$, $\abs{h_v^{(\ell)} - \hat{h}_v^{(\ell)}} < \eps^{(\ell)}$. We will find these bounds inductively, starting from the first layer. We have $\eps^{(0)} = 0$, as both networks have the same input, and we want to bound $\eps^{(\ell+1)}$ based on $\eps^{(\ell)}$. 

We have $\Q\llo = \W_Q\llo \Ho\llo$, $\K\llo = \W_K\llo \Ho\llo$ and assume 
$\Bar{\Q}\llo = \W_Q\llo \Hh\llo$, $\Bar{\K}\llo = \W_K\llo \Hh\llo$. 
Because of the operator norm of matrices $\W_Q$ and $\W_K$, for each $i$ we have $\norm{q_i\llo - \Bar{q}_i\llo} \leq \eps\llo \beta$ and $\norm{k_i\llo - \Bar{k}_i\llo} \leq \eps\llo \beta$. Also, we have $\norm{q\llo_i}, \norm{k\llo_i} \leq \beta\sqrt{\alpha}$, thus $\norm{\Bar{q}_i\llo}, \norm{\Bar{k}_i\llo} \leq \beta(\eps\llo + \sqrt{\alpha})$. Now, for each pair of $i$ and $j$, we have:

\begin{align*}
    \abs{q_i\llo\cdot k_j\llo - \Bar{q}_i\llo \cdot \Bar{k}_j\llo} &= \abs{q_i\llo\cdot k_j\llo - \Bar{q}_i\llo \cdot k_j\llo + \Bar{q}_i\llo \cdot k_j\llo - \Bar{q}_i\llo \cdot \Bar{k}_j\llo} \\
    & \leq \abs{q_i\llo\cdot k_j\llo - \Bar{q}_i\llo \cdot k_j\llo} + \abs{\Bar{q}_i\llo \cdot k_j\llo - \Bar{q}_i\llo \cdot \Bar{k}_j\llo} \\
    & = \abs{(q_i\llo - \Bar{q}_i\llo) \cdot k_j\llo} + \abs{\Bar{q}_i\llo \cdot (k_j\llo - \Bar{k}_j\llo)} \\
   & \leq \norm{q_i\llo - \Bar{q}_i\llo} \norm{k_j\llo} + \norm{\Bar{q}_i\llo} \norm{k_j\llo - \Bar{k}_j\llo} \\
    & \leq \sqrt{\alpha}\beta\eps\llo + (\sqrt{\alpha}+\beta\eps\llo)\beta\eps\llo \\
    & = 2\sqrt{\alpha}\beta\eps\llo + (\beta\eps\llo)^2
\end{align*}

On the other hand, according to the ~\ref{cor:jltdot}, for a $0 < \eps < 1/2$ and $d=\mathcal{O}(\frac{\log(n)}{\eps^2})$ there exists a matrix $\M_{QK} \in \R^{d \times D}$, such that if we define $\Qh\llo = \M_{QK}\Bar{\Q}\llo$ and $\Kh\llo = \M_{QK}\Bar{\K}\llo$, $\abs{\Bar{q}_i\llo \cdot \Bar{k}_j\llo - \qh_i\llo \cdot \kh_j\llo} < \beta^2(\alpha+(\eps\llo)^2+2\sqrt{\alpha}\eps\llo)\eps$ for all $(i, j)$ pairs in the attention pattern. Note that we can define $\Wh_Q\llo = \M_{QK}\llo\W_Q\llo$, and $\Wh_K\llo = \M_{QK}\llo\W_K\llo$, both in $\R^{d \times D}$, as weights for the narrow attention score estimator network. With a triangle inequality we have $$\abs{q_i\llo \cdot k_i\llo - \qh_i\llo \cdot \kh_i\llo} < \beta^2(\alpha+(\eps\llo)^2+2\sqrt{\alpha}\eps\llo)\eps + 2\sqrt{\alpha}\beta\eps\llo + (\beta\eps\llo)^2.$$

By setting $\eps\llo \leq 1$, we have $$\abs{q_i\llo \cdot k_i\llo - \qh_i\llo \cdot \kh_i\llo} < \beta^2(\alpha+1+2\sqrt{\alpha})\eps + \beta(2\sqrt{\alpha}+\beta)\eps\llo.$$
Let us define $\eps_a = \beta^2(\alpha+1+2\sqrt{\alpha})\eps + \beta(2\sqrt{\alpha}+\beta)\eps\llo$, we have:

\begin{gather*}
\ah_{ij}^{(\ell)} = \frac{\exp(\qh_i\llo\cdot \kh_j\llo)}{\sum_{u \in \mathcal{N}_H(i)} \exp(\qh_i\llo \cdot\kh_u\llo)} \leq \frac{\exp(q_i\llo\cdot k_j\llo + \eps_a)}{\sum_{u \in \mathcal{N}_H(i)} \exp(q_i\llo\cdot k_j\llo -\eps_a)} \leq a_{ij}^{(\ell)}\exp(2\eps_a)
\\
\ah_{ij}^{(\ell)} = \frac{\exp(\qh_i\llo\cdot \kh_j\llo)}{\sum_{u \in \mathcal{N}_H(i)} \exp(\qh_i\llo\cdot \kh_u\llo)} \geq \frac{\exp (q_i^{(\ell)}\cdot k_j^{(\ell)} -\eps_a)}{\sum_{u \in \mathcal{N}_H(i)} \exp(q_i^{(\ell)} \cdot k_u^{(\ell)} +\eps_a)} \geq a_{ij}^{(\ell)}\exp(-2\eps_a)
\end{gather*}

Now we bound $\norm{h_i\llh - \hh_i\llh}$: 

\begin{align*}
   \norm{h_i\llh - \hh_i\llh} &= \norm{\sum_{j\in Nei(i)} a_{ij}\llo v_j\llo - \ah_{ij}\vh_j\llh} \\
    & = \norm{\sum_{j\in Nei(i)} a_{ij}\llo v_j\llo - \ah_{ij}\llo v_j\llo + \ah_{ij}\llo v_j\llo - \ah_{ij}\vh_j\llo} \\
    &= \norm{\sum_{j\in Nei(i)} (a_{ij}\llo - \ah_{ij}\llo) v_j\llo + \ah_{ij}\llo(v_j\llo - \vh_j\llo)} \\
    &= \norm{(v_j\llo - \vh_j\llo) + v_j\llo\sum_{j\in Nei(i)} (a_{ij}\llo - \ah_{ij}\llo)} \\
    & \leq \norm{v_j\llo - \vh_j\llo} + \norm{v_j\llo}\sum \abs{a_{ij}\llo - \ah_{ij}\llo} \\
    & \leq \eps\llo\beta + \sqrt{\alpha} \sum \max (1-\exp(-2\eps_a), \exp(2\eps_a) -1) a_{ij}\llo \\
    & \leq \eps\llo\beta + \sqrt{\alpha} (\exp(2\eps_a) -1),
\end{align*} 

and since $1+x < \exp(x) < 1+2x$ for $0<x<1$, if we have $\eps_a < 1$, we have 
\begin{equation}
    \norm{h_i\llh - \hh_i\llh} \leq \beta\eps\llo + 4 \sqrt{\alpha} \eps_a
\end{equation}

For the feed-forward network part, we know that this network is $\beta^2$-Lipschitz because $\W_1\llo$ and $\W_2\llo$ have maximum operator norm $\beta$ and $\sigma$ is a 1-Lipschitz activation function. Thus we have
\begin{equation*}
    \norm{h_i\lln - \hh_i\lln} \leq \beta^2(\beta\eps\llo + 4 \sqrt{\alpha} \eps_a) =
    (\beta^3 + 8\beta\alpha+4\beta^2\sqrt{\alpha})\eps\llo + 4\beta^2(\alpha\sqrt{\alpha} + 2\alpha + \sqrt{\alpha}) \eps.
\end{equation*}

Both $\beta^3 + 8\beta\alpha+4\beta^2\sqrt{\alpha}$ and $4\beta^2(\alpha\sqrt{\alpha} + 2\alpha + \sqrt{\alpha})$ are constants, and if we define them as $c_1$ and $c_2$, we have 

\begin{equation*}
    \eps\lln \leq c_1\eps\llo + c_2\eps
\end{equation*}

Given $\eps^{(0)} = 0$, as both networks get the same input, we have
\begin{align*}
    \eps^{(L)} &\leq c_1\eps^{(L-1)} + c_2\eps \\
    & \leq c_1(c_1\eps^{(L-2)} + c_2\eps) + c_2\eps \\
    & \cdots \\
    & \leq c_2\eps (c_1^{L-1} + \cdots + c_1) \\
    & = \frac{c_1(c_2^L-1)}{c_2-1}\eps
\end{align*}

While the error increases exponentially with the number of layers, when we have $L = O(1)$, then the error is bounded by a constant factor of chosen $\eps$. Now, we know that $\norm{\mathcal{T}(X)_i - \widehat{\mathcal{T}}(X)_i}_2 \leq \eps^{(L)} = \mathcal{O(\eps)}$. 

\end{proof}

While from the theorem it seems that the error is increasing exponentially by the layers, in practice the maximum number of layers used in this work is four with most large graph experiments using just two layers. Thus the constant factor will not be as large as it might look. Also, in real-world graphs usually, the columns of $X$ are not quite $n$ distinct vectors and many vectors would be equal or very similar to each other if we have $\kappa$ unique vectors in the first layer the complexity for the $d$ can be reduced to $\mathcal{O}(\frac{\log \kappa}{\eps^2})$. In the homophily graphs the representations $h\llo$ tend to converge to each other and thus again the number of unique vectors will be reduced letting us have smaller $d$, but these assumptions are not considered in the proof as we keep it general.

Although we have proved the existence of the $\widehat{\mathcal{T}}$, this does not mean that training with a gradient-based algorithm will necessarily lead to the introduced weights, but this gives at least the guarantee that such a network exists. However, on the other hand, it is also possible that the training process finds a set of weights that work better than the weights constructed in this proof.

Theorem~\ref{thrm:narrow_attention}, by narrowing the attention score calculation part, reduced the complexity from $\mathcal{O}(mD + nD^2)$ to $\mathcal{O}(md + nD^2)$, and for dense graphs or in scenarios we add denser expander graphs, where $m \gg n$, already the introduced network has a much lower complexity. However, our narrow network uses narrow hidden dimensions in all steps and has complexity $\mathcal{O}(md + nd^2)$. Proving the same guarantee along the whole network is not easy, if not impossible, without any further assumptions on $X$ and the large network.
\citet{shirzad2024compression} explores these settings further, in the presence of various additional assumptions.

\subsection{Analysis of the Sampling Process}
After training a network with a smaller width $d$, we sample the edges from the original graph and use them in the second-phase training with a large hidden width $D$. In this section, we shall analyze our sampling process. Formally, we model our process as follows. Suppose that $A$ is the attention score matrix with hidden width $D$, then we sample and rescale $s$ entries of $A$ to form a sparse matrix $B$ where the goal is the matrix $B$ can approximate $A$ well, i.e., $\norm{A - B}_2 \le \eps \norm{A}_2$. However, recall that we can not access the entries of $A$ precisely. Instead, we consider another attention score matrix $A'$, which corresponds to hidden width $d$. 

The first question is how many samples we indeed need to form the matrix $B$ that approximates $A$ well? To answer this, we have the following lemma for the attention score matrix $A$.

\begin{theorem}%
\label{thrm:sampling}
    Suppose that an $n \times n$ matrix $A$ satisfies the following conditions: 
    \begin{enumerate}
        \item For each $i$, we have %
        $\norm{A_{(i)}}_1 = 1$.
        \item $\max_j \norm{A^{(j)}}_1 = K$%
        \item Each column $A^{(j)}$ is $\ell$-sparse.
    \end{enumerate}
    Then, consider the sampling procedure that samples $s \ge s_0 = \mathcal{O}(n K \log n /(\eps^2 \|A\|_2^2)) = \mathcal{O}(n\ell\log n / (\eps^2 K))$ entries of $A$ with replacement:%
    \begin{enumerate}
        \item For each sample $B_t$, the probability that $B_t$ samples entry $A_{ij}$ is $p_{ij} = \frac{1}{n}\cdot \frac{|A_{ij}|}{\|A_{(i)}\|_1} = \frac{1}{n} |A_{ij}|$ (with a rescale factor $1/p_{ij}$, i.e., $B_{t}[i, j] = A_{ij} / p_{ij}$), and each $B_t$ only samples one entry of $A$. 
        \item Form the matrix $B = (B_1 + B_2 + \dots + B_s)/s$.
    \end{enumerate}
    Then, we have that with probability at least $9/10$, %
    \[
    \norm{A - B}_2 \le \eps \norm{A}_2.
    \]

\end{theorem}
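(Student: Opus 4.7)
My plan is to reduce the statement to a matrix Bernstein inequality applied to the centered increments $Y_t := B_t - A$. The sampling rule gives $\E[B_t] = \sum_{i,j} p_{ij}(A_{ij}/p_{ij})\,e_i e_j\tp = A$, so $Y_1,\ldots,Y_s$ are i.i.d., mean-zero, and $A - B = -\tfrac{1}{s}\sum_t Y_t$. Thus the target bound $\norm{A-B}_2 \le \eps\norm{A}_2$ is equivalent to showing $\norm{Z}_2 \le \eps s\norm{A}_2$ with probability at least $9/10$, where $Z := \sum_{t=1}^{s} Y_t$. Matrix Bernstein then requires only an almost-sure bound $\norm{Y_t}_2 \le R$ and a variance proxy $\sigma^2 \ge s\cdot\max\bigl(\norm{\E[Y_1 Y_1\tp]}_2,\,\norm{\E[Y_1\tp Y_1]}_2\bigr)$.

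\textbf{Parameter computation.} Condition (1) forces $p_{ij} = |A_{ij}|/n$, so each rank-one sample has $\norm{B_t}_2 = |A_{ij}|/p_{ij} = n$ and hence $R \le n + \norm{A}_2$. Since $B_t B_t\tp = (A_{ij}/p_{ij})^2\, e_i e_i\tp$, a direct calculation gives
\[
\E[B_t B_t\tp] \;=\; \sum_{i,j} \frac{A_{ij}^2}{p_{ij}}\, e_i e_i\tp \;=\; \sum_i n\norm{A_{(i)}}_1\, e_i e_i\tp \;=\; n\,I_n,
\]
and analogously $\E[B_t\tp B_t] = n\cdot\operatorname{diag}\bigl(\norm{A^{(j)}}_1\bigr)_j$, whose operator norm equals $nK$ by condition (2). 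Using $\E[Y_t Y_t\tp] = \E[B_t B_t\tp] - AA\tp$ and the interpolation bound $\norm{A}_2^2 \le \norm{A}_1\norm{A}_\infty = K$, both variance terms are controlled by $2snK$.

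\textbf{Finishing and the main obstacle.} Substituting $R$ and $\sigma^2$ into matrix Bernstein at deviation $t_0 := \eps s\norm{A}_2$,
\[
\Pr\bigl[\norm{Z}_2 \ge \eps s \norm{A}_2\bigr] \;\le\; 2n\exp\!\Bigl(-\frac{\eps^2 s^2 \norm{A}_2^2}{4snK + \tfrac{2}{3}R\eps s\norm{A}_2}\Bigr),
\]
and demanding the right-hand side be $\le 1/10$ yields $s \gtrsim nK\log n/(\eps^2 \norm{A}_2^2)$ from the variance term. The second, equivalent form follows from condition (3): choosing $j^\star$ with $\norm{A^{(j^\star)}}_1 = K$, Cauchy--Schwarz on its $\ell$-element support gives $\norm{A}_2 \ge \norm{A^{(j^\star)}}_2 \ge K/\sqrt{\ell}$, hence $nK\log n/(\eps^2\norm{A}_2^2) \le n\ell\log n/(\eps^2 K)$. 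The delicate point I expect to wrestle with is controlling the additive Bernstein term $\tfrac{2}{3}R t_0$: the ratio $R/\norm{A}_2 \approx n/\norm{A}_2$ can blow up when $\norm{A}_2$ is small, so the sparsity-derived lower bound $\norm{A}_2 \ge K/\sqrt{\ell}$ is essential to keep the chosen $s_0$ within the sub-Gaussian regime where the variance term binds rather than the linear term.
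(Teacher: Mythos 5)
Your proof is correct and follows essentially the same route as the paper's: both apply the matrix Bernstein inequality to the i.i.d.\ rank-one samples with variance proxy of order $nK$ and uniform bound $R$ of order $n$, set the deviation to $\eps\norm{A}_2$, and obtain the second form of $s_0$ via Cauchy--Schwarz on an $\ell$-sparse column of maximal $\ell_1$-norm. The only cosmetic difference is that you compute $\E[B_tB_t\tp]$ and $\E[B_t\tp B_t]$ directly, whereas the paper bounds $\sigma^2$ and $R$ through the surrogate quantities $\tilde\sigma^2$ and $\tilde R$ by citing a lemma of Achlioptas et al.
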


To prove this lemma, we need the following matrix Bernstein inequality.

\begin{lemma} [Matrix Bernstein inequality]
    Consider a finite sequence $X_i$ of i.i.d.\ random $m \times n$ matrices, with $\mathbb{E}[X_i] = 0$ and $\Pr(\norm{X_i}_2 \le R) = 1$. Let $\sigma^2 = \max\{\norm{\mathbb{E}[X_i X_i^T]}_2,  \norm{\mathbb{E}[X_i^T X_i]}_2\}$. For some fixed $s \ge 1$, let $X = (X_1 + X_2 + \cdots + X_s) / s$, then we have that 
    \[
    \Pr [\|X\|_2 \ge \eps] \le (m + n) \cdot \exp\left(\frac{s \varepsilon^2}{-\sigma^2 + R\eps/3}\right)
    .\]
\end{lemma}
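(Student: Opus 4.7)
The statement is (a version of) the matrix Bernstein inequality for sums of i.i.d.\ rectangular matrices, which I would establish by the Laplace-transform framework of Ahlswede--Winter as refined by Tropp. The first step is to reduce from rectangular matrices to Hermitian ones via the self-adjoint dilation $\mathcal{H}(M) := \bigl(\begin{smallmatrix} 0 & M \\ M^T & 0 \end{smallmatrix}\bigr)$. Dilation is linear, preserves spectral norm, and satisfies $\mathcal{H}(X_i)^2 = \operatorname{diag}(X_iX_i^T,\,X_i^TX_i)$, so if I set $Y_i = \mathcal{H}(X_i)$, the $Y_i$ are i.i.d.\ zero-mean self-adjoint $(m+n)\times(m+n)$ matrices with $\|Y_i\|\le R$ almost surely and $\|\mathbb{E}[Y_i^2]\|=\sigma^2$. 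After rescaling, $\Pr[\|X\|\ge \eps]=\Pr[\|\sum_i Y_i\|\ge s\eps]$, and $\|\sum_i Y_i\|=\max\{\lambda_{\max}(\sum Y_i),\lambda_{\max}(-\sum Y_i)\}$, so a union bound reduces the task to bounding $\Pr[\lambda_{\max}(\sum_i Y_i)\ge s\eps]$.

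Next, I would invoke the matrix Markov / Laplace transform trick: for any $\theta>0$,
\[
\Pr\bigl[\lambda_{\max}(\textstyle\sum_i Y_i)\ge s\eps\bigr]\le e^{-\theta s\eps}\,\mathbb{E}\bigl[\operatorname{tr}\exp(\theta\textstyle\sum_i Y_i)\bigr].
\]
The central non-trivial ingredient is Lieb's concavity theorem, which implies the subadditivity of cumulant generating functions in the matrix setting; applying it inductively over the i.i.d.\ summands yields
\[
\mathbb{E}\bigl[\operatorname{tr}\exp(\theta\textstyle\sum_i Y_i)\bigr]\le \operatorname{tr}\exp\!\Bigl(\textstyle\sum_i \log\mathbb{E}[\exp(\theta Y_i)]\Bigr).
\]

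To bound a single matrix mgf, I would use the scalar inequality $e^x \le 1 + x + \tfrac{x^2/2}{1-|x|/3}$ valid for $|x|<3$, transferred to the operator level via the spectral calculus: since $\|Y_i\|\le R$ and $\mathbb{E}[Y_i]=0$, one obtains the PSD inequality $\mathbb{E}[\exp(\theta Y_i)] \preceq \exp\!\bigl(\tfrac{\theta^2/2}{1-\theta R/3}\,\mathbb{E}[Y_i^2]\bigr)$ for $0<\theta<3/R$. Taking logarithms (operator-monotone on positive matrices), summing over the $s$ i.i.d.\ copies, and using $\operatorname{tr}\exp$ monotonicity together with $\|\mathbb{E}[Y_i^2]\|\le \sigma^2$ gives
\[
\mathbb{E}\bigl[\operatorname{tr}\exp(\theta\textstyle\sum_i Y_i)\bigr]\le (m+n)\exp\!\Bigl(\frac{s\theta^2\sigma^2/2}{1-\theta R/3}\Bigr).
\]

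Finally, I would optimize in $\theta$. Choosing $\theta = \eps/(\sigma^2+R\eps/3)$ (which lies in $(0,3/R)$) produces a bound of the form $(m+n)\exp\!\bigl(-\tfrac{s\eps^2/2}{\sigma^2+R\eps/3}\bigr)$, matching the displayed inequality (the sign in the denominator of the exponent as written appears to be a typographical slip). The main technical obstacle is Lieb's concavity theorem itself, which is deep; but it is a standard black-box in Tropp's framework and no new ideas are needed for the rest. Everything else---the dilation identities, the scalar-to-operator mgf transfer, and the final scalar optimization---is routine verification.
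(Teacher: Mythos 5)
The paper does not actually prove this lemma: it is imported as a standard black-box result, and the \emph{proof} environment that immediately follows it in the appendix is the proof of \cref{thrm:sampling} (the sampling guarantee), which merely \emph{applies} the inequality. So there is no in-paper argument to compare yours against; what you have written is the standard Tropp/Ahlswede--Winter proof of the rectangular matrix Bernstein inequality, and your outline is correct. All the ingredients are as you describe: the self-adjoint dilation (one small refinement --- the dilation has a symmetric spectrum, so $\lambda_{\max}(\mathcal{H}(Z)) = \norm{Z}_2$ exactly and no union bound over $\pm$ is needed; this is precisely what gives the dimension factor $m+n$ rather than $2(m+n)$), the trace-exponential Markov bound, subadditivity of the matrix cumulant generating function via Lieb's concavity theorem, the operator lift of $e^x \le 1 + x + \tfrac{x^2/2}{1-\abs{x}/3}$, and the usual optimization over $\theta$. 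You are also right to flag the displayed tail bound: the correct exponent is $-\tfrac{s\eps^2/2}{\sigma^2 + R\eps/3}$, whereas the statement as printed attaches the minus sign to $\sigma^2$ inside the denominator and drops the factor of $\tfrac12$; as literally written the bound even fails to be a valid probability bound once $R\eps/3 > \sigma^2$. Treating Lieb's theorem as a black box is standard practice here, so your proposal is complete at the intended level of detail.
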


\begin{proof}
    We follow a similar proof strategy to that of~\citet{achlioptas2013near}. At a high level, the work of~\citet{achlioptas2013near} considers the matrix Bernstein inequality, whose tail bound 
    is dependent on the following two quantities:
    \begin{gather*}
    \sigma^2 = \max\{\norm{\mathbb{E}[(A - B_1)(A - B_1)^T]}, \norm{\mathbb{E}[(A - B_1)^T(A - B_1)]}\}
     \text{ and}\\
    R = \max \norm{A - B_1} \ \ \text{over all possible realizations of $B_1$}.
    \end{gather*}
    Here $B_1$ is the matrix that only samples one entry, and the final output is $B = (B_1 + B_2 + \cdots + B_s)/s$. 
    Instead, we consider the following quantities, 
    \[
    \tilde{\sigma}^2 = \max\left\{\max_i \sum_j A_{ij}^2/p_{ij}, \max_j \sum_i A_{ij}^2/p_{ij}\right\}
    \]
    \[
    \tilde{R} = \max_{ij} |A_{ij}|/p_{ij}.
    \] 
    It is shown in Lemma~A.2 of~\cite{achlioptas2013near} that $|\sigma/\tilde{\sigma} - 1| \le \frac{\|A\|_2^2}{\sum_i \|A_{(i)}\|_1^2} $ and $|R/\tilde{R} - 1| \le \frac{\|A\|_2}{\|A\|_1} $. From our condition on the matrix $A$, both of the upper bounds are at most $1$. %
    Hence, we only need to consider $\tilde{\sigma}$ and $\tilde{R}$. Back to our case, we have that $p_{ij} = \frac{1}{n} \cdot \frac{|A_{ij}|}{\|A_{(1)}\|_1} = \frac{1}{n} \cdot |A_{ij}|$, from this and the assumption of $A$ we have 
    \[
    \tilde{\sigma}^2 = n\cdot \max\left\{\max_i \sum_j |A_{ij}|, \max_j \sum_i |A_{ij}|\right\} \le n \cdot K %
    \]
    \[
    \tilde{R} = \max_{ij} |A_{ij}|/p_{ij} = n.
    \] 
    Hence, to make $\delta \le 0.1
    $, we only need to set $\eps' = \eps \|A\|_2$ in the Matrix Bernstein inequality and then we have $s \ge O(n K \log n / (\eps^2 \|A\|_2^2))$. Finally, note that if $\norm{A^{(j)}}_1 = K$, then we have $\norm{A}_2 \ge \norm{A e_j}_2 = \norm{A^{(j)}}_2 \ge K /\sqrt{\ell}$, which means that $n K \log n / (\eps^2 \|A\|_2^2) \le n \ell \log n / (\eps^2 K)$. 
\end{proof}

However, as mentioned, we can not access the value of the entries of $A$ but the entries of $A'$ (which corresponds to the trained network with a small hidden width $d$). We next show that even in the case where we sample the entries of $A$ from $A'$, we can still get the same order of the bound if the entries of $A$ are not under-estimated seriously in $A'$.

\begin{proposition}
\label{prop:sampling_noisy}
    Suppose that the matrices $A$ and $A'$ satisfy the condition in Theorem~\ref{thrm:sampling} and for every $i,j $ we have 
    \[
    |A'_{ij}| \ge \frac{1}{\alpha} |A_{ij}|
    \]
    for some sufficiently large constant $\alpha$. Then consider the same sampling procedure in Theorem~\ref{thrm:sampling} but sampling the entries of $A$ from the value of $A'$. Then, the guarantee in Theorem~\ref{thrm:sampling} still holds.
\end{proposition}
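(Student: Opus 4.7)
The plan is to re-run the matrix Bernstein argument used in the proof of Theorem~\ref{thrm:sampling} with the noisy sampling distribution induced by $A'$, while continuing to rescale by the actual sampling probability so that unbiasedness is preserved. Concretely, I would define the new per-sample process: each $B_t$ picks index $(i,j)$ with probability $p'_{ij} = \tfrac{1}{n}\lvert A'_{ij}\rvert$ (using the row-normalization $\norm{A'_{(i)}}_1 = 1$ from condition~1) and sets $B_t[i,j] = A_{ij}/p'_{ij}$. Then $\mathbb{E}[B_t] = A$ and $\mathbb{E}[B] = A$ exactly as in the exact-sampling case.

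Next, I would re-bound the two quantities that drive the matrix Bernstein bound,
\[
\tilde{\sigma}^2 = \max\!\left\{\max_i \sum_j \frac{A_{ij}^2}{p'_{ij}},\ \max_j \sum_i \frac{A_{ij}^2}{p'_{ij}}\right\}, \qquad \tilde{R} = \max_{i,j} \frac{\lvert A_{ij}\rvert}{p'_{ij}}.
\]
The hypothesis $\lvert A'_{ij}\rvert \ge \lvert A_{ij}\rvert/\alpha$ yields $p'_{ij} \ge p_{ij}/\alpha$, where $p_{ij} = \tfrac{1}{n}\lvert A_{ij}\rvert$ is the ideal probability from Theorem~\ref{thrm:sampling}. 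Hence every summand in $\tilde{\sigma}^2$ and the maximum in $\tilde{R}$ inflate by at most a factor $\alpha$, so $\tilde{\sigma}^2 \le \alpha n K$ and $\tilde{R} \le \alpha n$. Plugging into matrix Bernstein exactly as before gives the sample complexity $s = \bigO(\alpha \cdot n K \log n / (\eps^2 \norm{A}_2^2))$, which is of the same order as in Theorem~\ref{thrm:sampling} because $\alpha = \bigO(1)$, yielding $\norm{A-B}_2 \le \eps \norm{A}_2$ with probability at least $9/10$.

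The main obstacle is the bookkeeping between the ``true'' Bernstein parameters $\sigma, R$ (for the centered variables $B_t - A$) and the simpler upper bounds $\tilde{\sigma}, \tilde{R}$. In the exact-sampling proof this gap was closed via Lemma~A.2 of \cite{achlioptas2013near}, which controls $\lvert \sigma/\tilde{\sigma} - 1\rvert$ and $\lvert R/\tilde{R}-1\rvert$ by quantities depending only on $A$ (namely $\norm{A}_2^2/\sum_i \norm{A_{(i)}}_1^2$ and $\norm{A}_2/\norm{A}_1$). Since these quantities are intrinsic to $A$ and do not depend on the sampling distribution, the argument carries over to our noisy sampling verbatim. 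A small consistency check is that the row-stochasticity of $A'$ together with the entrywise lower bound $\lvert A'_{ij}\rvert \ge \lvert A_{ij}\rvert/\alpha$ is compatible iff $\alpha \ge 1$, and introduces no hidden $n$-dependence; modulo this, the entire proof of Theorem~\ref{thrm:sampling} goes through after the single substitution $p_{ij} \mapsto p'_{ij}$.
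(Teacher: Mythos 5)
Your proposal is correct and follows essentially the same route as the paper: observe that $|A'_{ij}| \ge |A_{ij}|/\alpha$ gives $p'_{ij} \ge p_{ij}/\alpha$, so $\tilde{\sigma}^2$ and $\tilde{R}$ inflate by at most a factor of $\alpha$, and absorbing this constant into $s$ preserves the guarantee of Theorem~\ref{thrm:sampling}. The paper's own argument is a one-line version of exactly this; your additional remarks on unbiasedness and on the Lemma~A.2 bookkeeping are consistent elaborations rather than a different approach.
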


\begin{proof}
    We only need to note that from the assumption, the actual sampling probability $p'_{ij} \ge \frac{1}{\alpha} \cdot p_{ij}$ in Theorem~\ref{thrm:sampling}, hence it will increase the $\tilde{\sigma}^2$ and $\tilde{R}$ by at most $\alpha$ times, which means that we can increase $s$ by an $\alpha$ factor to make the error probability at most $0.1$.
\end{proof}

\section{Attention Score Analysis}
\label{sec:attention_score_analysis_appendix}
In \cref{fig:energy_dists}, we observed that the attention scores are relatively close to the reference attention scores. In this section, we provide further details on these experiments and offer additional analysis of the attention scores.  

For our experiments, we used an implementation of the Exphormer model with normalization on $V$ mappings and temperature adjustment for the attention scores. For each random seed, we selected the model with the best result on the validation set. We used an expander degree of $10$ for the Actor dataset and $30$ for Amazon-Photos. The difference in expander degrees is due to the significant variation in the average degree of nodes across the datasets. We aimed to balance the number of expander edges and graph edges since it has an impact on some of the experiments. In addition to the expander edges, we also included self-loops, which are necessary for the universal approximation theorem outlined by \cite{shirzad2023exphormer}.  

All networks in these experiments were trained with four layers. For each hidden dimension, we ran $100$ experiments with different initializations. The learning rate was adjusted for each hidden dimension to ensure more stable convergence. However, for smaller hidden dimensions, some experiments led to drastically lower accuracy results, which we did not exclude from the analysis. All results, including those with lower accuracy, were considered in our analysis.

\subsection{Preliminaries}  
Before presenting further experimental results, we provide a brief introduction to the metrics used.

For two random variables $X \sim \mathcal{P}$ and $Y \sim \mathcal{Q}$, both defined in $\mathbb{R}^d$ (or equivalently, defined by their cumulative distribution functions (CDFs) $F$ and $G$), we can define the following metrics:

\paragraph{Energy Distance}  
Energy distance is a metric used to measure the distance between two distributions \citep{energy-distance,energy-rkhs,rizzo2016energy}, and is defined as:
$$D^2(F, G) = 2\mathbb{E}[X - Y] - \mathbb{E}[X - X'] - \mathbb{E}[Y - Y'],$$  
where $X, X' \sim \mathcal{P}$ and $Y, Y' \sim \mathcal{Q}$, with all variables being independent of each other. This value is shown to be twice the Harald Cramer's distance \citep{cramer1928composition}, which is defined as:
$$\int (F(x) - G(x))^2 \, dx.$$

This metric is non-negative; however, an unbiased estimator based on samples may yield negative results.

Although the energy distance is a useful metric for identifying the distance between two probability distributions, it may not fully capture the variations between them. This issue becomes particularly relevant when measuring the performance of generative models, as it helps assess whether the generative model correctly approximates the real distribution. The following pairs of metrics provide finer-grained understanding of two different types of approximation.

\paragraph{Precision \& Recall \citep{sajjadi2018assessing}}  These metrics assess generative models by constructing a manifold for both real and generated data. This is done by forming a hypersphere around each data point, extending its radius to the $k$-th nearest neighbor, and then aggregating these hyperspheres. Precision measures the proportion of generated samples that fall within the real data manifold, while recall quantifies the fraction of real samples covered by the generated data manifold. These metrics correlate well with human judgments in the visual domain and are effective in detecting issues like mode collapse and mode dropping.

\paragraph{Density \& Coverage \citep{naeem2020reliable}}  
These metrics, similar to Precision and Recall, evaluate generative models by considering individual hyperspheres rather than aggregating them into a manifold. Density measures the average number of real hyperspheres that each generated sample falls into, while Coverage quantifies the fraction of real samples that fall into at least one generated hypersphere. These metrics have been shown to be more robust than the Precision and Recall metrics in certain scenarios.

For all these metrics, we first consider the distribution of attention scores for each individual node's neighborhood in a single layer, trained with a specific hidden dimension, represented as a vector. We then compare these distributions across different hidden dimensions or among different layers. Finally, we average the results over all nodes.

\begin{figure*}[]
\captionsetup[subfloat]{farskip=-1pt,captionskip=-1pt}
\centering
\subfloat[][]{\includegraphics[width = 2.7in]{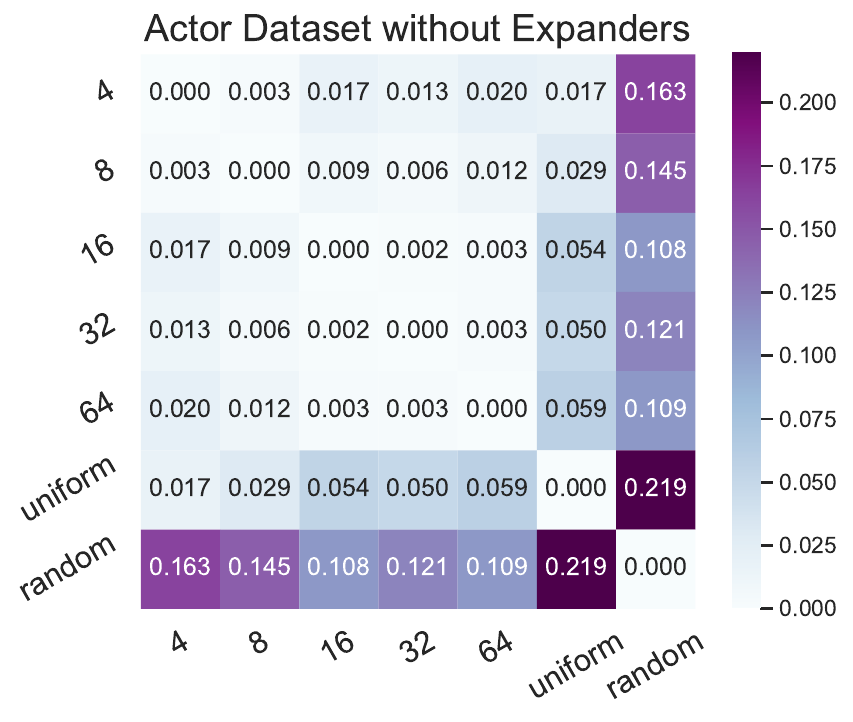}} 
\subfloat[][]{\includegraphics[width = 2.7in]{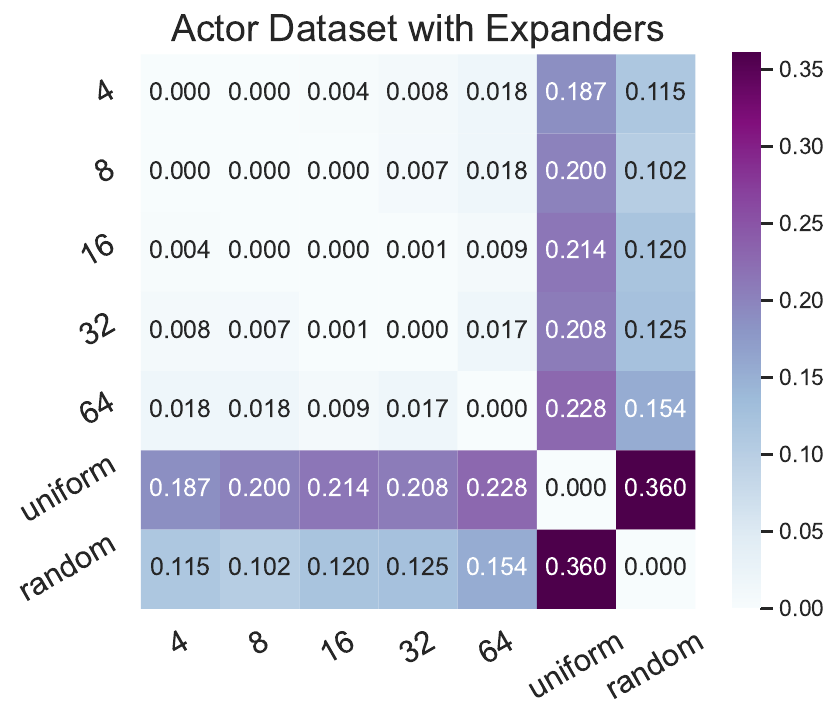}}\\
\subfloat[][]{\includegraphics[width = 2.7in]{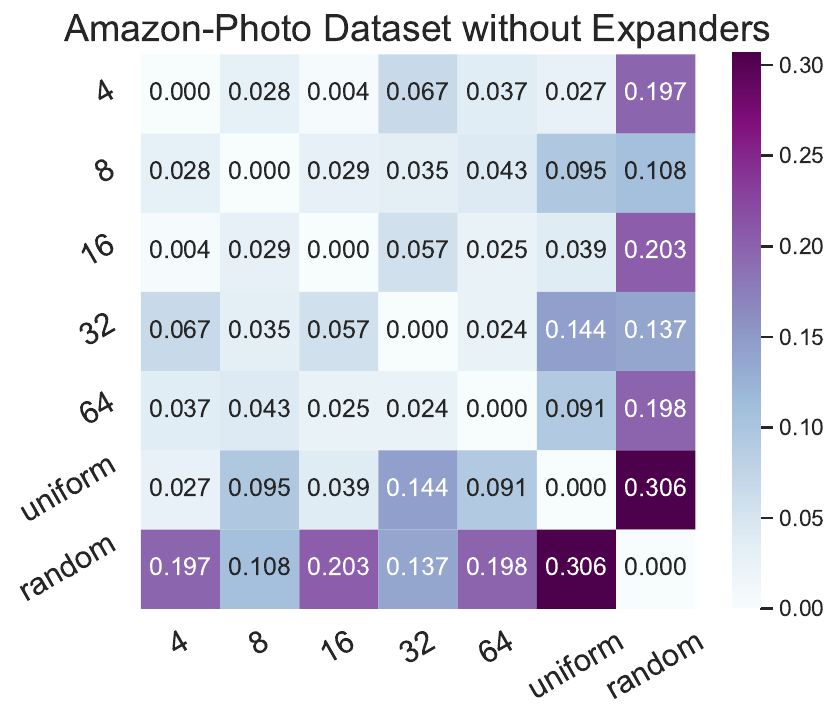}} 
\subfloat[][]{\includegraphics[width = 2.7in]{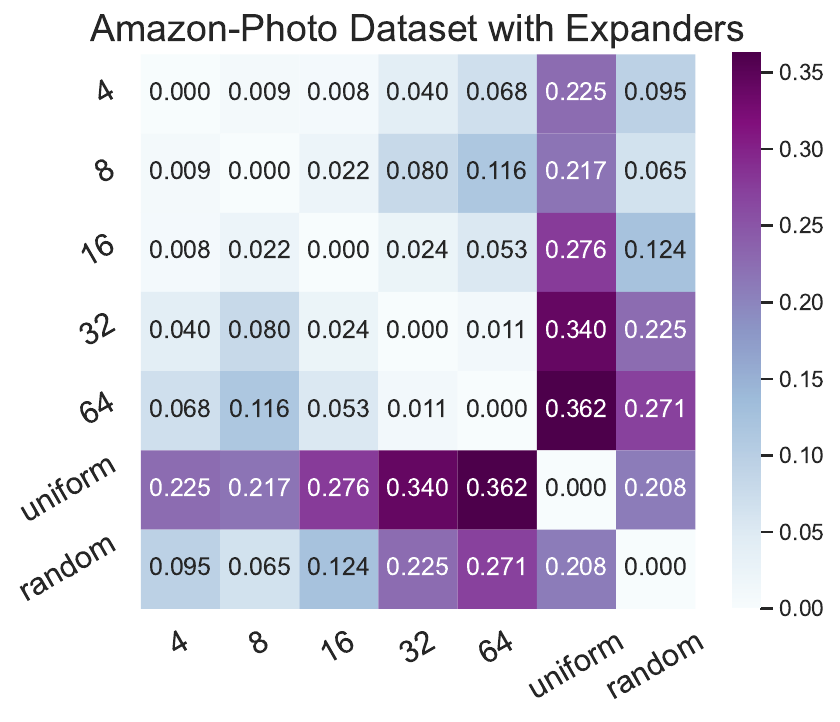}}

\caption{Pairwise energy distance across networks with different hidden dimensions, uniform distribution, and randomly generated attention scores.
}
\label{fig:pairwise_attention_score_edistance}
\end{figure*}

\begin{figure*}[]
\captionsetup[subfloat]{farskip=-1pt,captionskip=-1pt}
\centering
\subfloat[][]{\includegraphics[width = 2.7in]{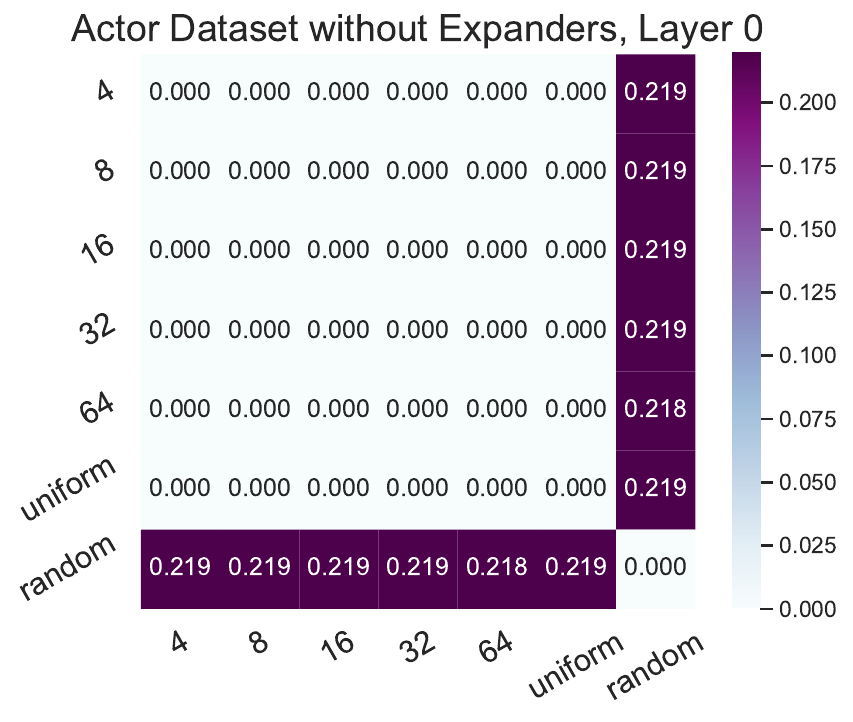}} 
\subfloat[][]{\includegraphics[width = 2.7in]{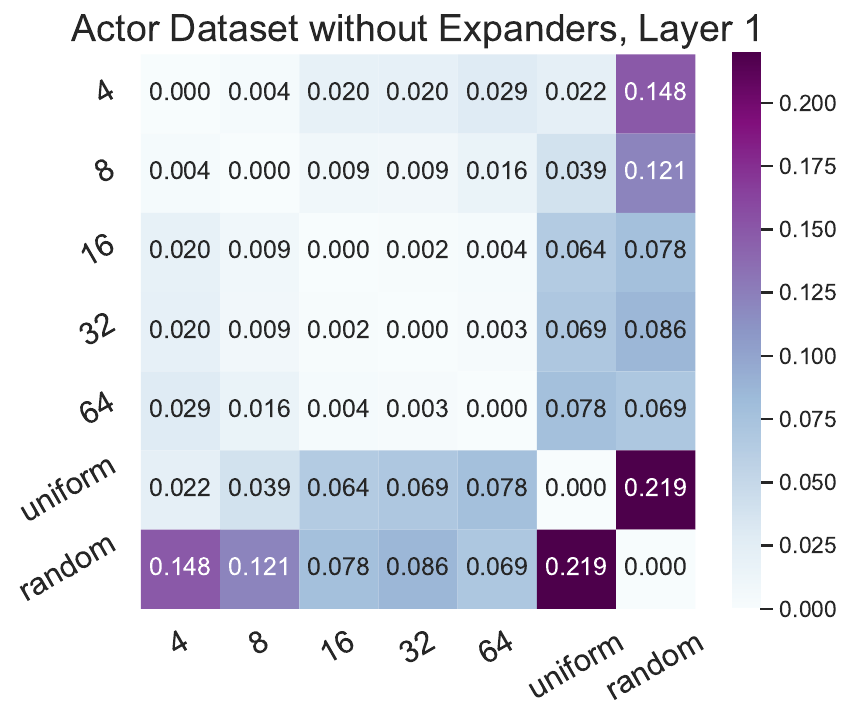}}\\
\subfloat[][]{\includegraphics[width = 2.7in]{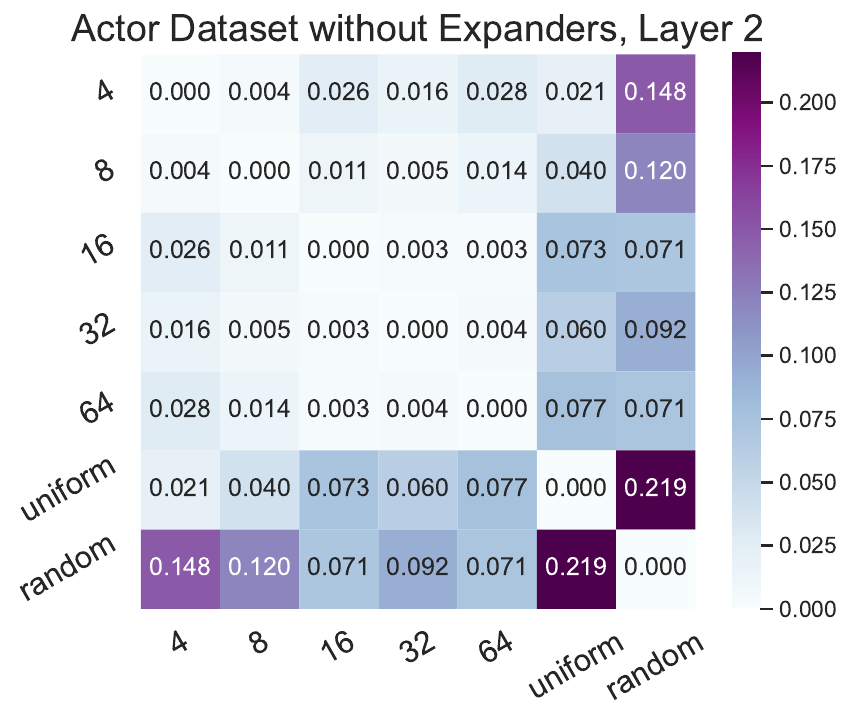}} 
\subfloat[][]{\includegraphics[width = 2.7in]{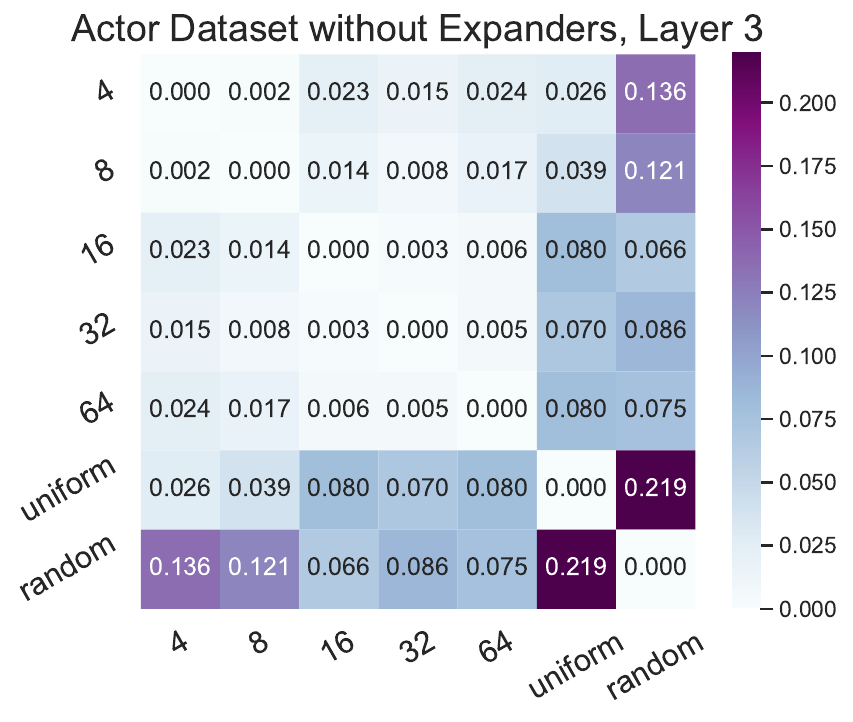}}
\caption{Pairwise energy distance between networks with different hidden dimensions, uniform distribution, and random attention scores for the Actor dataset, without the expander graph, on individual layers. This dataset has a very low average degree and it appears that almost always the first layer's attention scores are very similar to the uniform distribution.
}
\label{fig:pairwise_attention_score_edistances_w_layer_Actor_wo_exp}
\end{figure*}

\begin{figure*}[]
\captionsetup[subfloat]{farskip=-1pt,captionskip=-1pt}
\centering
\subfloat[][]{\includegraphics[width = 2.7in]{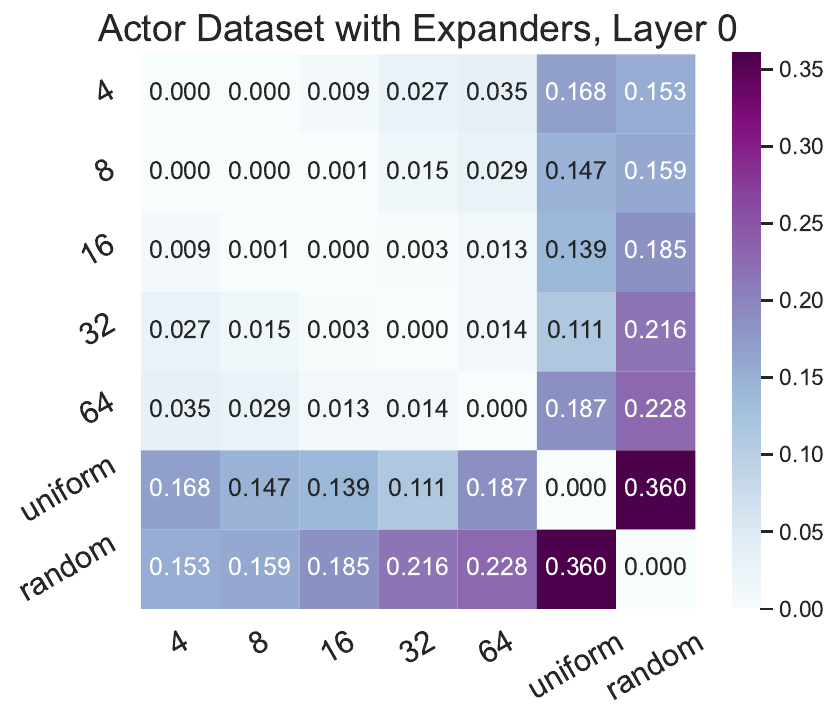}}
\subfloat[][]{\includegraphics[width = 2.7in]{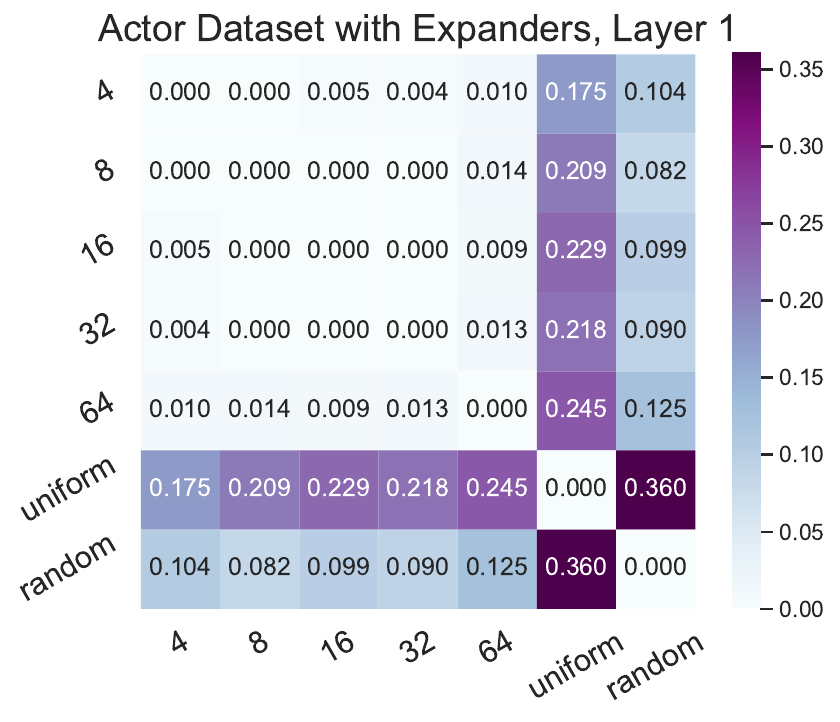}}\\
\subfloat[][]{\includegraphics[width = 2.7in]{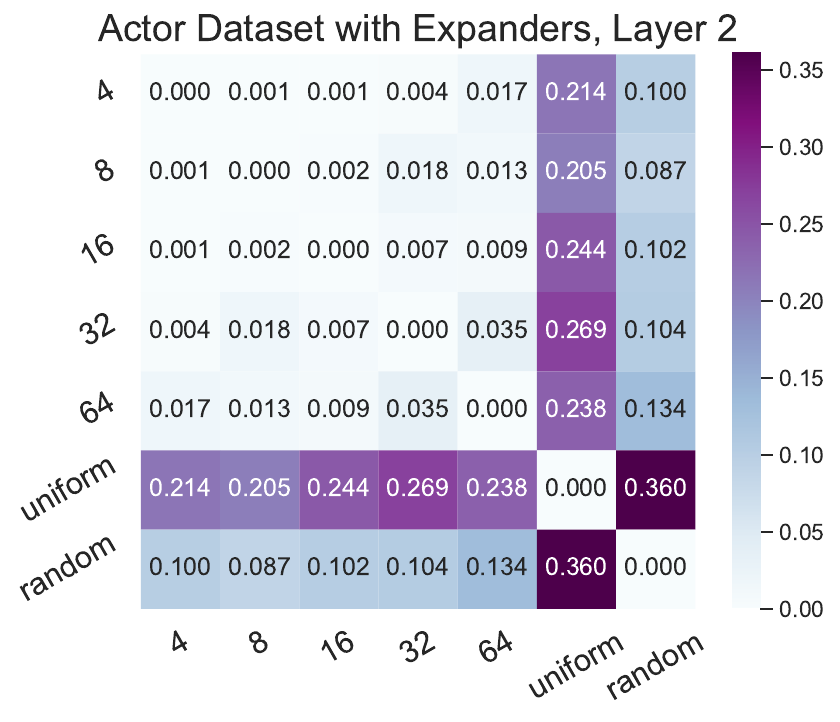}}
\subfloat[][]{\includegraphics[width = 2.7in]{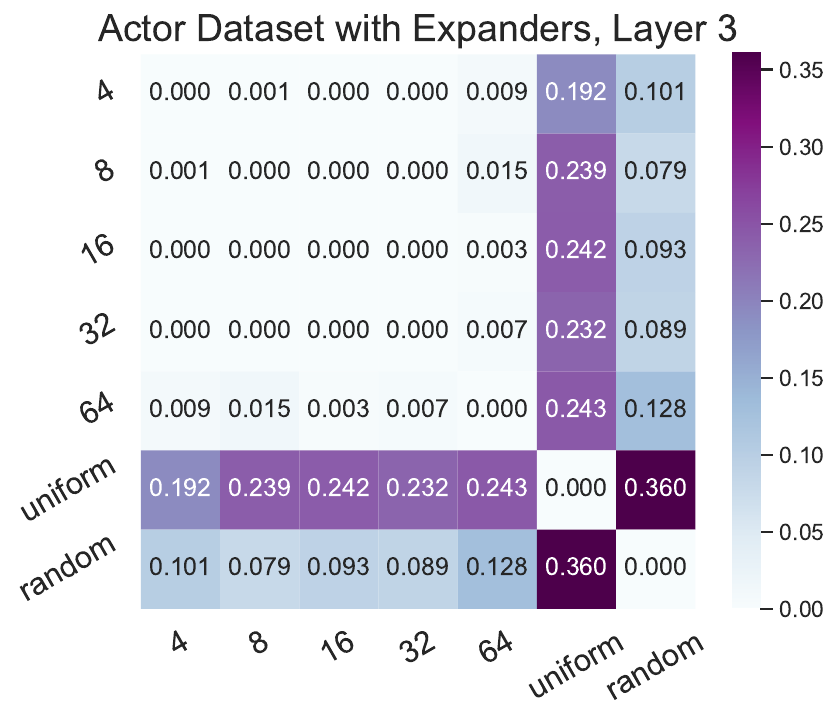}}

\caption{Pairwise energy distance between networks with different hidden dimensions, uniform distribution, and random attention scores for the Actor dataset, with the expander graph, on individual layers.}
\label{fig:pairwise_attention_score_edistances_w_layer_Actor_w_exp}
\end{figure*}

\begin{figure*}[]
\captionsetup[subfloat]{farskip=-1pt,captionskip=-1pt}
\centering
\subfloat[][]{\includegraphics[width = 2.7in]{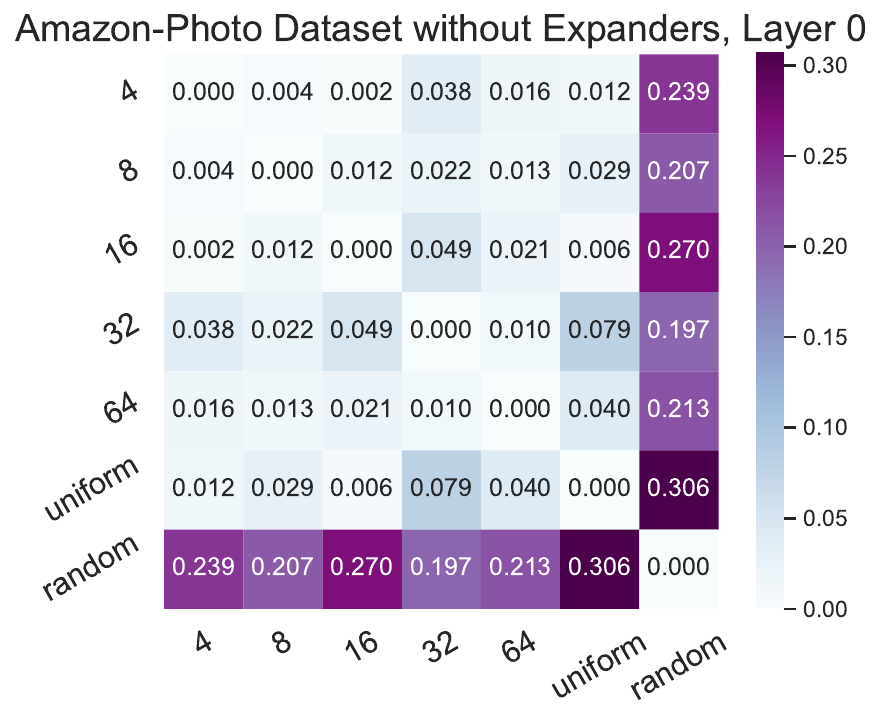}} 
\subfloat[][]{\includegraphics[width = 2.7in]{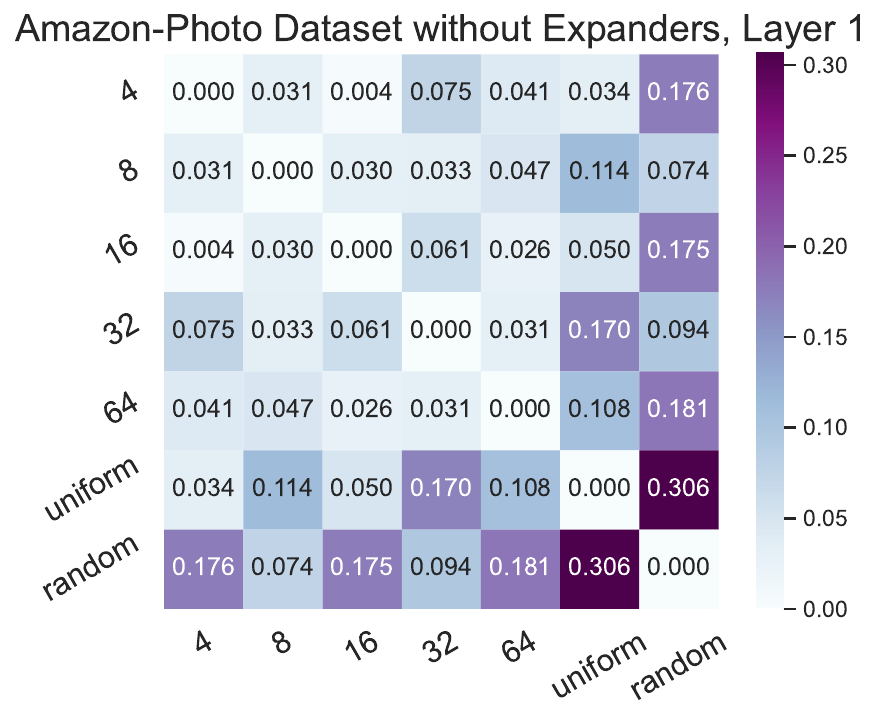}}\\
\subfloat[][]{\includegraphics[width = 2.7in]{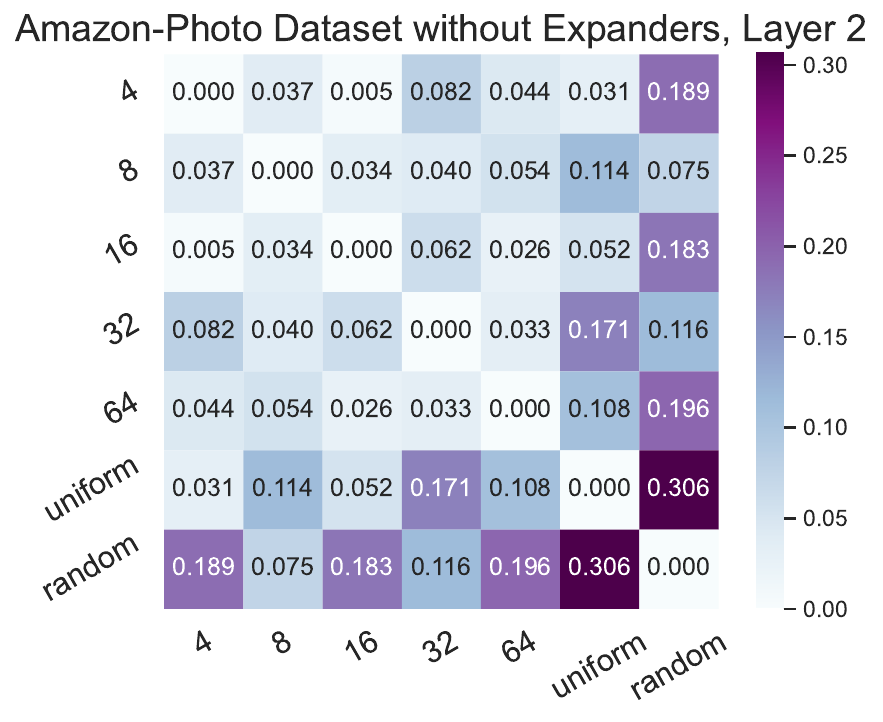}} 
\subfloat[][]{\includegraphics[width = 2.7in]{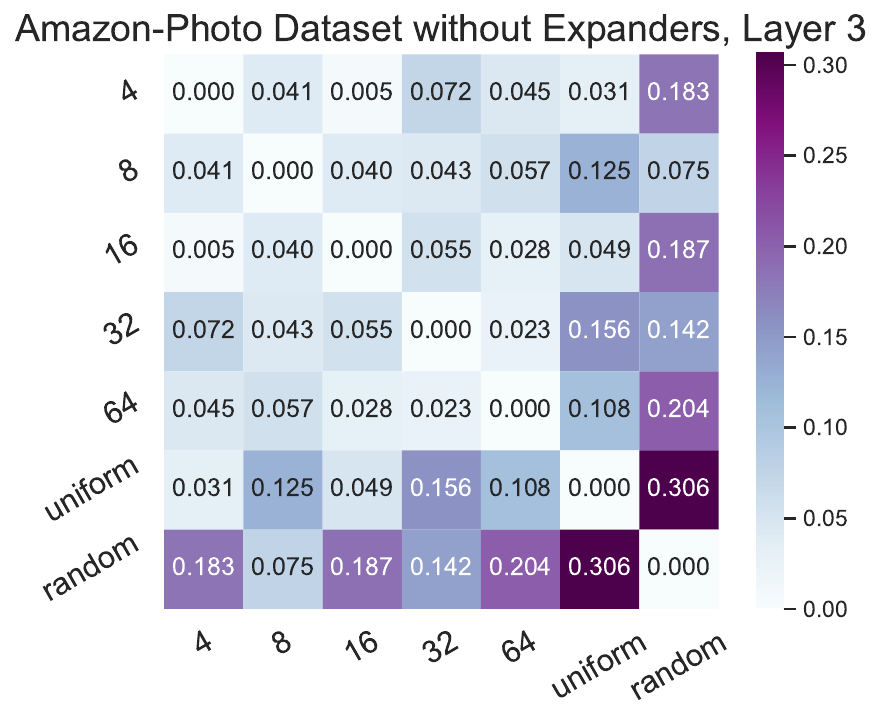}}
\caption{Pairwise energy distance between networks with different hidden dimensions, uniform distribution, and random attention scores for the Amazon-Photo dataset, without the expander graph, on individual layers.}
\label{fig:pairwise_attention_score_edistances_w_layer_photo_wo_exp}
\end{figure*}

\begin{figure*}[]
\captionsetup[subfloat]{farskip=-1pt,captionskip=-1pt}
\centering
\subfloat[][]{\includegraphics[width = 2.7in]{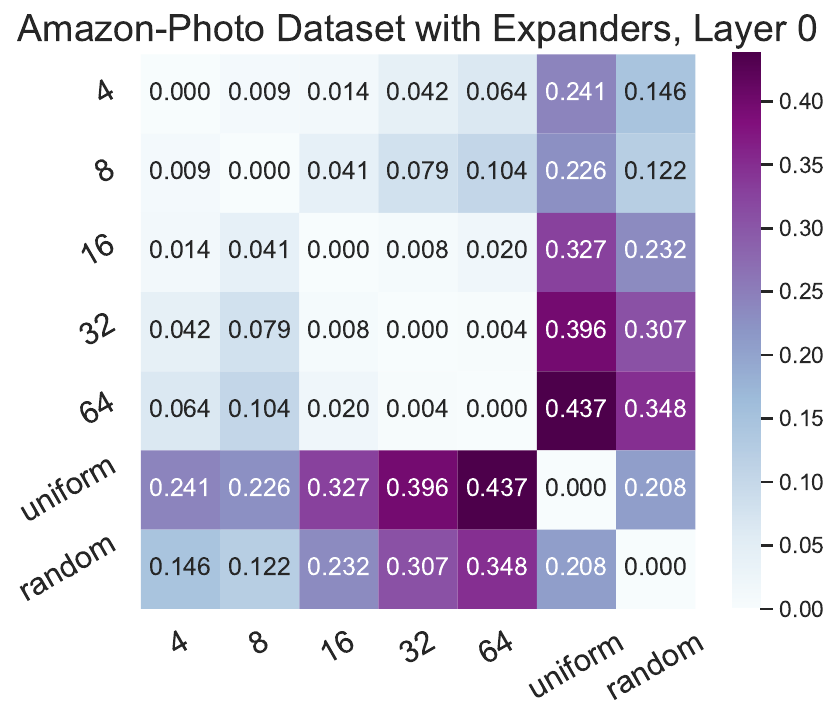}}
\subfloat[][]{\includegraphics[width = 2.7in]{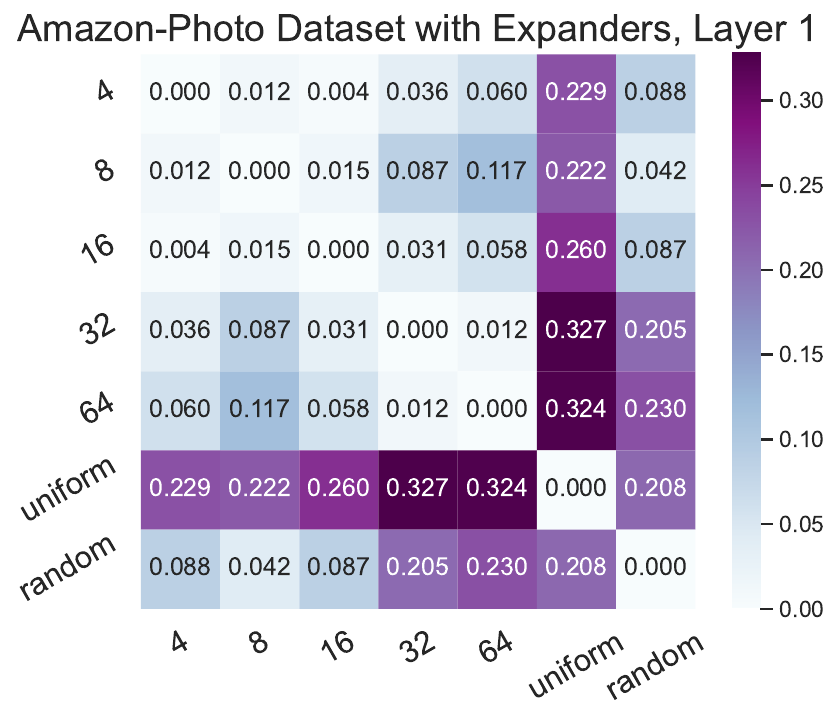}}\\
\subfloat[][]{\includegraphics[width = 2.7in]{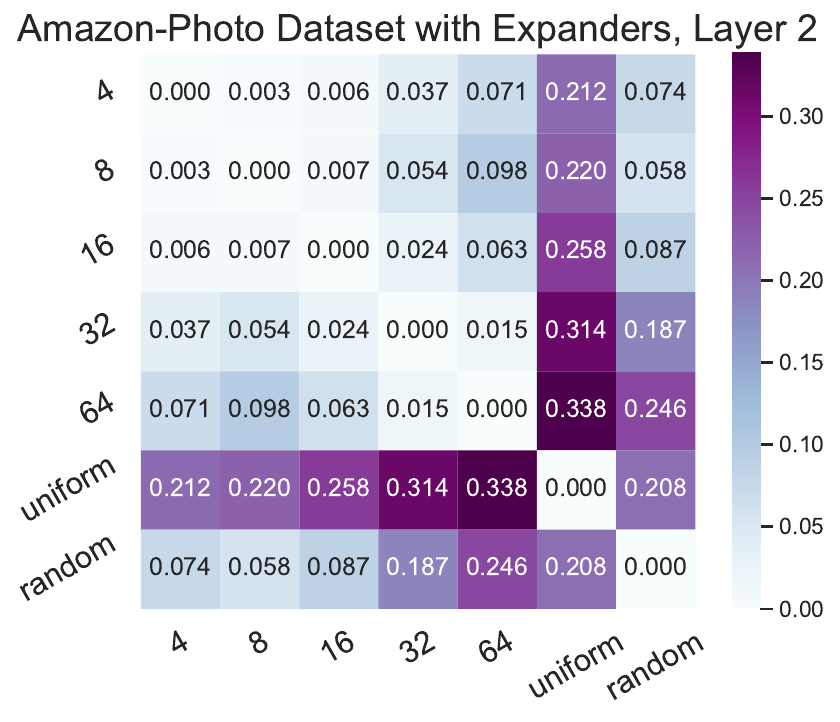}}
\subfloat[][]{\includegraphics[width = 2.7in]{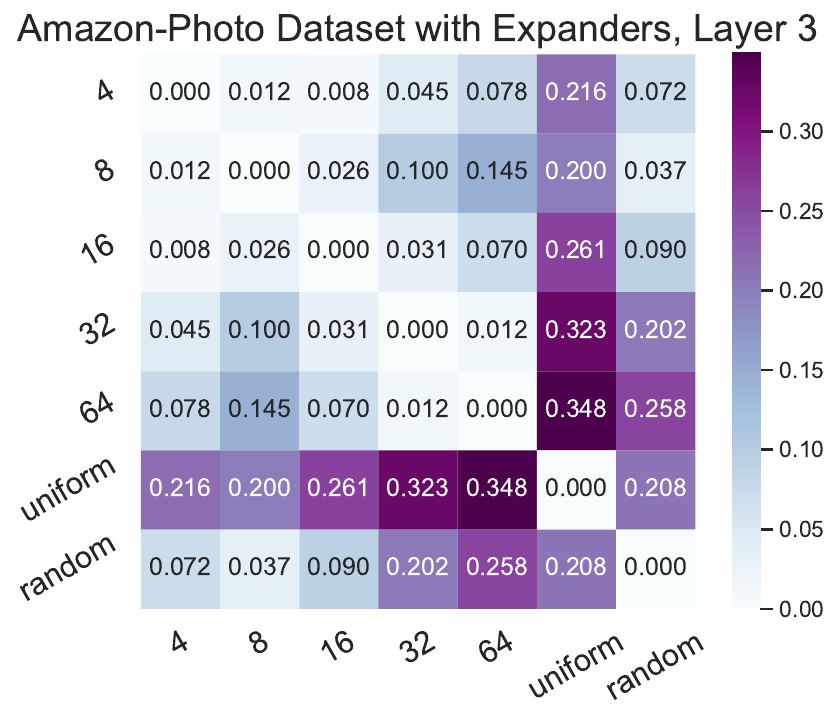}}

\caption{Pairwise energy distance between networks with different hidden dimensions, uniform distribution, and random attention scores for the Amazon-Photo dataset, with the expander graph, on individual layers.}
\label{fig:pairwise_attention_score_edistances_w_layer_photo_w_exp}
\end{figure*}

\subsection{Pairwise Distances}
\label{sec:pairwise_dists}
While we demonstrated the energy distances from the reference hidden dimension of $64$ in \cref{fig:energy_dists}, it is also valuable to examine all pairwise distances. We present these pairwise distances in \cref{fig:pairwise_attention_score_edistance}. Additionally, these distances may vary layer by layer, so it is insightful to explore how these distances change across different layers of the network. To this end, we provide the results in \cref{fig:pairwise_attention_score_edistances_w_layer_Actor_wo_exp,fig:pairwise_attention_score_edistances_w_layer_Actor_w_exp,fig:pairwise_attention_score_edistances_w_layer_photo_wo_exp,fig:pairwise_attention_score_edistances_w_layer_photo_w_exp}.  

These experiments consistently show that attention scores obtained from different hidden dimension sizes are close to each other. In contrast to uniform sampling, or randomly generated attention scores, this distribution provides a much better reference for drawing neighborhood samples when the goal is to select nodes based on their attention score importance.

\begin{tcolorbox}[colback=yellow!10!white, colframe=yellow!50!black, title=\faLightbulb\ Insight 1]
Attention scores from a network with a smaller hidden dimension serve as a good estimator for the attention scores in a network with a higher hidden dimension.
\end{tcolorbox}

\begin{figure*}[]
\captionsetup[subfloat]{farskip=-1pt,captionskip=-1pt}
\centering
\subfloat[][]{\includegraphics[width = 2.5in]{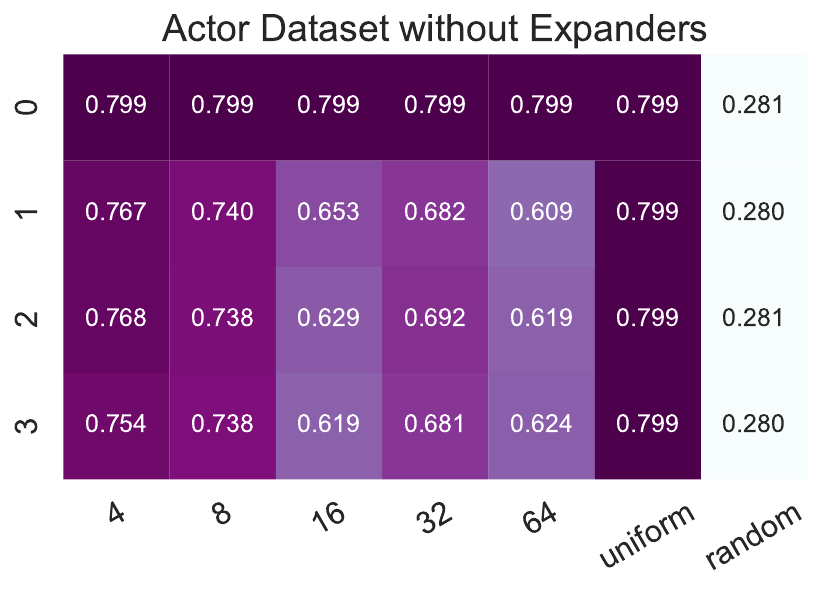}}
\hspace{1em}\subfloat[][]{\includegraphics[width = 2.5in]{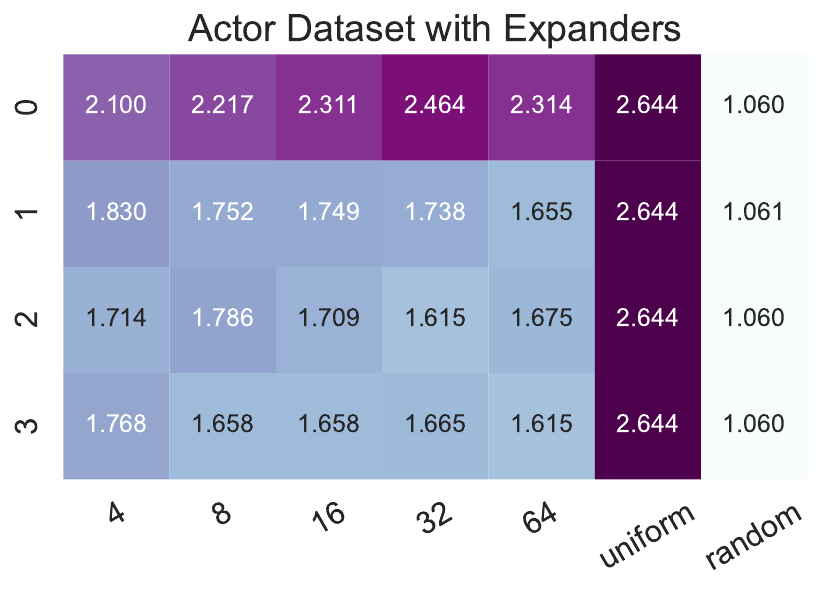}}\\
\subfloat[][]{\includegraphics[width = 2.5in]{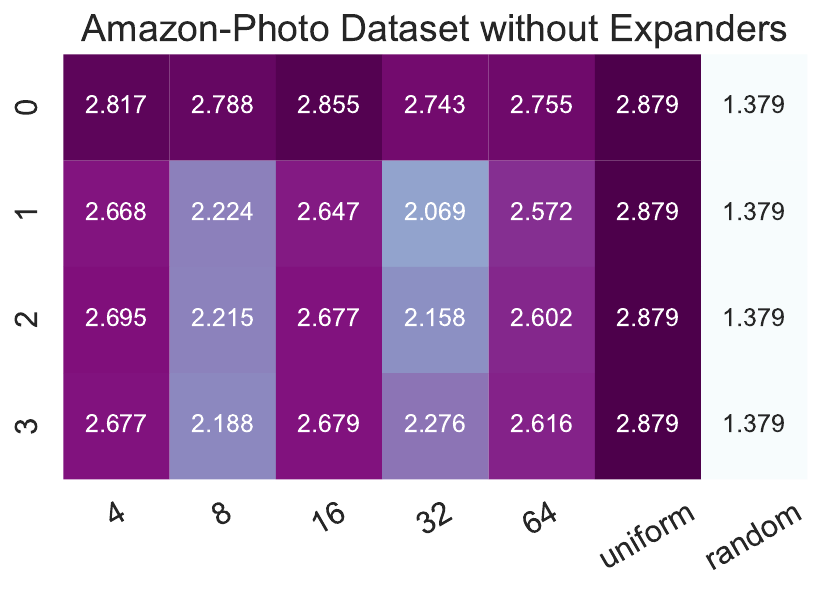}}
\hspace{1em}
\subfloat[][]{\includegraphics[width = 2.5in]{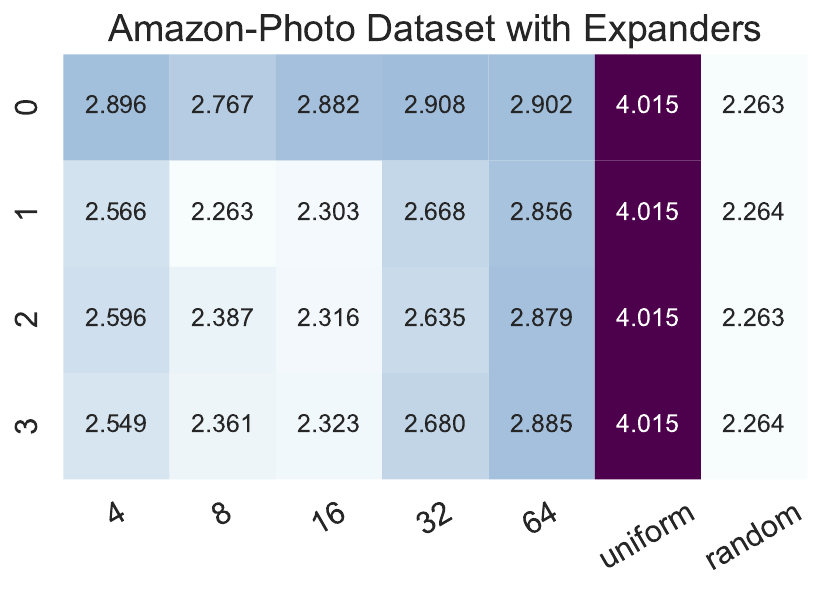}}

\caption{Average entropy of attention scores for nodes across different layers.}
\label{fig:entropy}
\end{figure*}

\subsection{Entropy of Attention Scores}  
\label{sec:entropy}
In this experiment, we analyze the entropy of the attention scores to examine how they change across layers. When the scores approach a one-hot vector, we refer to them as sharp attentions, while more smooth scores resemble a uniform distribution over the neighbors. The goal is to assess how sharp or smooth the attention scores are, on average, across the nodes. To achieve this, we use the entropy metric. Higher entropy indicates more smooth attention scores, while entropy is zero for one-hot vectors. We calculate the entropy for each node's neighborhood and then average the entropies across all nodes and all random seeds in the layer. The results are presented in \cref{fig:entropy}.

An insightful observation from this experiment is that the first layer, across all four datasets, consistently exhibits smoother attention scores, while the scores become sharper in subsequent layers. Generally, however, the attention scores are not very sharp in experiments without expander graphs, suggesting that all neighbors are likely similarly informative. This does not necessarily imply that all these nodes are equally important. If identical nodes with the same neighborhoods surround a node, all of them will receive equal attention scores, which indicates no selection in this case. Thus, this does not contradict the idea that a sparse matrix can estimate the same results.

Sharpness varies across different hidden dimensions, which may be due to factors such as training dynamics, learning rate, and the varying temperature setup for different hidden dimensions. Regardless, in all datasets and across all hidden dimensions, the first layer consistently has higher entropy. This suggests that for sampling, larger neighborhood sizes may be needed in the first layer, while smaller neighborhood sizes could suffice in the subsequent layers.

\begin{tcolorbox}[colback=yellow!10!white, colframe=yellow!50!black, title=\faLightbulb\ Insight 2]
Attention scores are smoother in the first layer, and become sharper in subsequent layers.
\end{tcolorbox}

\begin{figure*}[]
\captionsetup[subfloat]{farskip=10pt,captionskip=2pt}
\centering
\subfloat[][Actor Dataset without Expander]{\includegraphics[width = \textwidth]{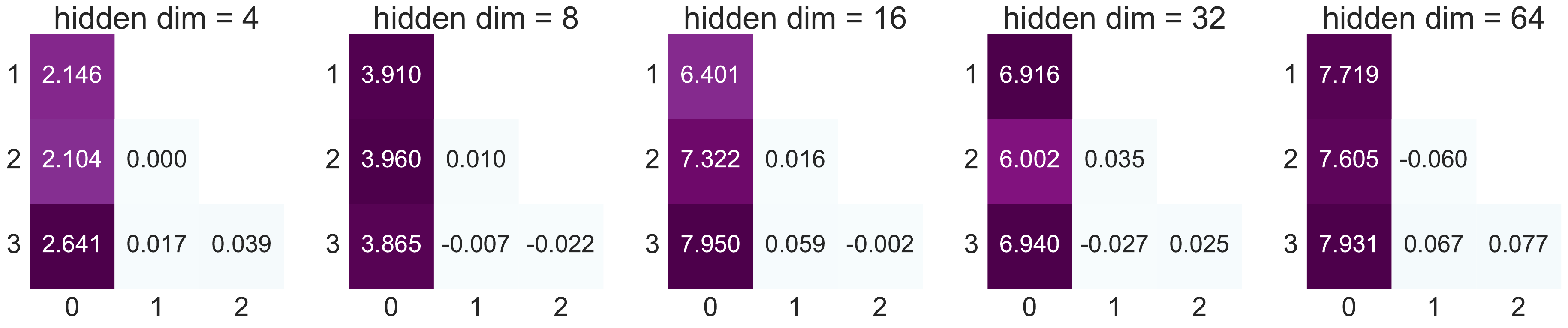}}\\
\subfloat[][Actor Dataset with Expander]{\includegraphics[width = \textwidth]{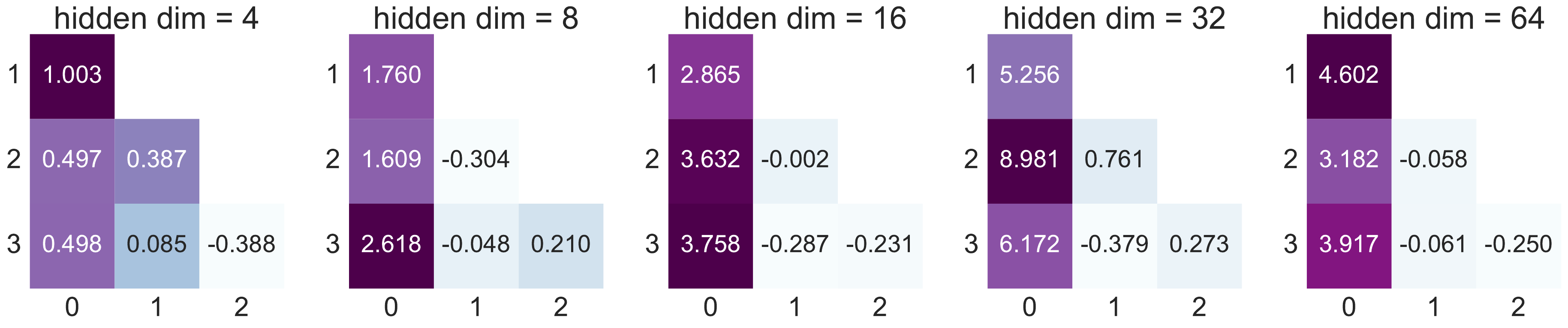}}\\
\subfloat[][Amazon-Photo Dataset without Expander]{\includegraphics[width = \textwidth]{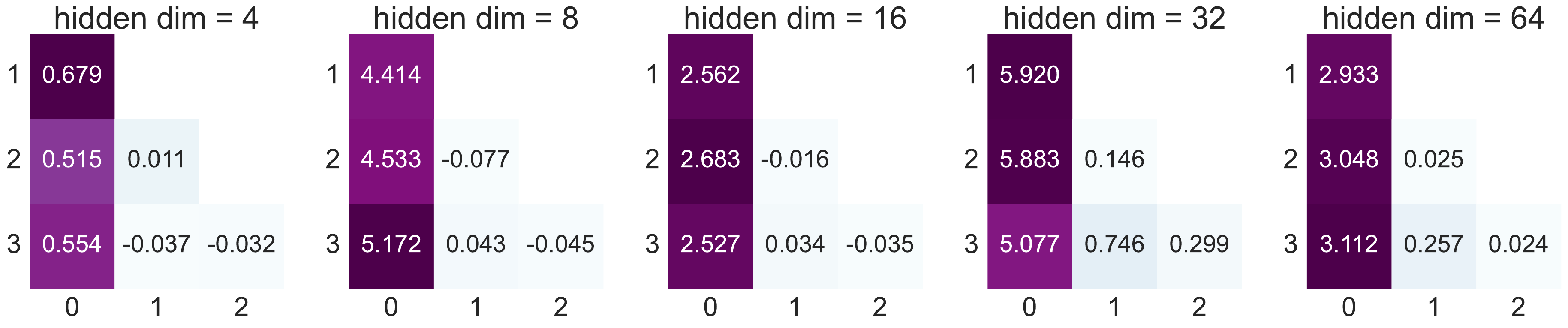}}\\
\subfloat[][Amazon-Photo Dataset with Expander]{\includegraphics[width = \textwidth]{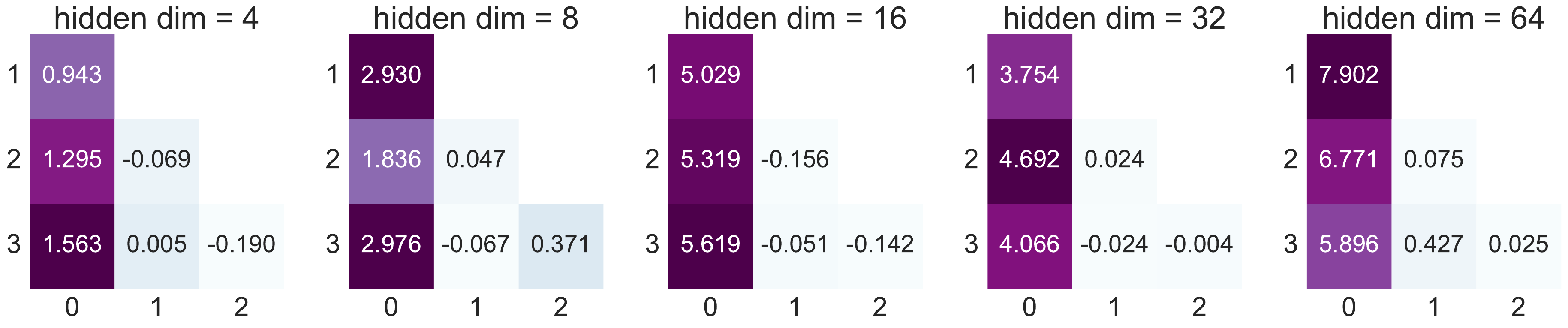}}
\caption{Inter-layer energy distances ($\times 100$) for different hidden dimensions.}
\label{fig:interlayer_edists}
\end{figure*}

\subsection{Inter-layer Attention Scores Similarity}  
\label{sec:inter_layer}
After observing that the entropy is higher in the first layer and similar across the subsequent layers, it is worth examining the distance between the attention scores of each pair of layers. The experimental results are presented in \cref{fig:interlayer_edists}. All values are relatively small compared to the previous ones, so they are multiplied by $100$ for better presentation. The results show that, consistently, the first layer has some distance from all other layers, but the layers following it exhibit very similar attention scores. This suggests that the initial network may be trained using fewer layers, and further layer sampling could be achieved by repeating the attention scores from the final layer to train a deeper Spexphormer model.

\begin{tcolorbox}[colback=yellow!10!white, colframe=yellow!50!black, title=\faLightbulb\ Insight 3]
The attention scores in the layers after the first are consistently very similar to one another, but distinct from the attention scores in the first layer.
\end{tcolorbox}

\begin{figure}
    \centering
    \includegraphics[width=\linewidth]{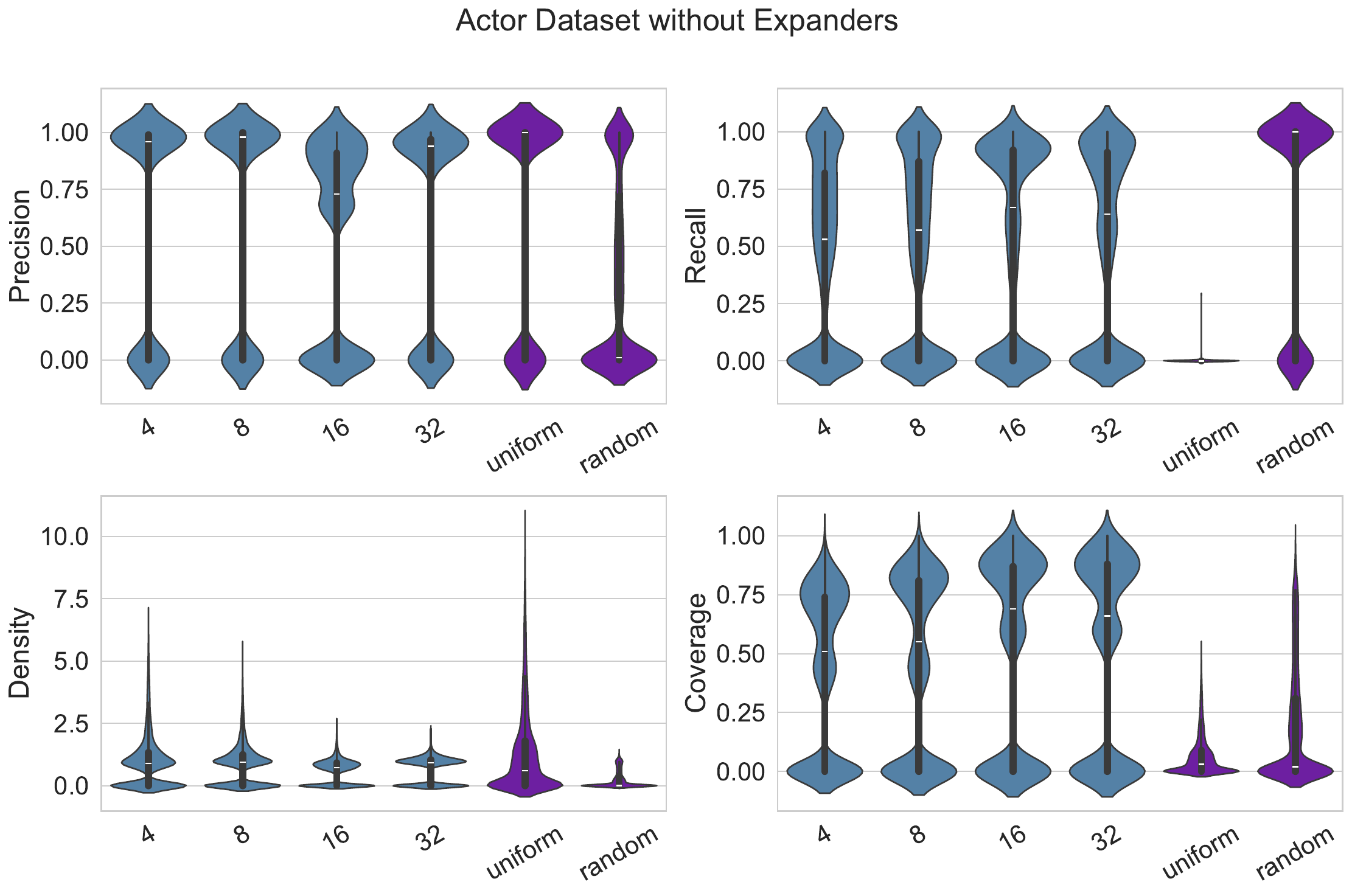}
    \caption{Violin plots of Precision, Recall, Density, and Coverage metrics for the Actor dataset without expander graphs.}
    \label{fig:prdc_actor_wo_exp}
\end{figure}

\begin{figure}
    \centering
    \includegraphics[width=\linewidth]{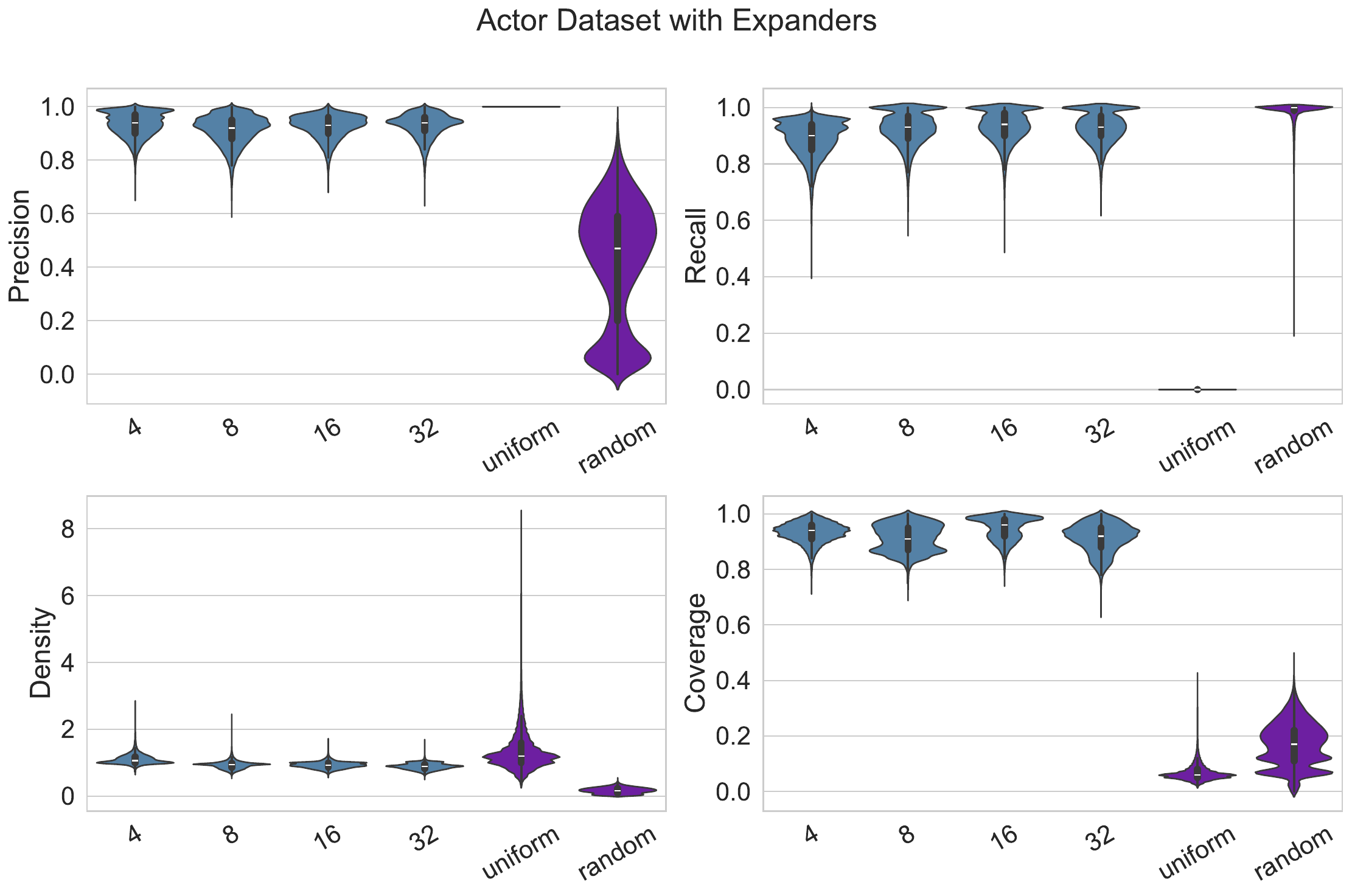}
    \caption{Violin plots of Precision, Recall, Density, and Coverage metrics for the Actor dataset with expander graphs.}
    \label{fig:prdc_actor_w_exp}
\end{figure}

\begin{figure}
    \centering
    \includegraphics[width=\linewidth]{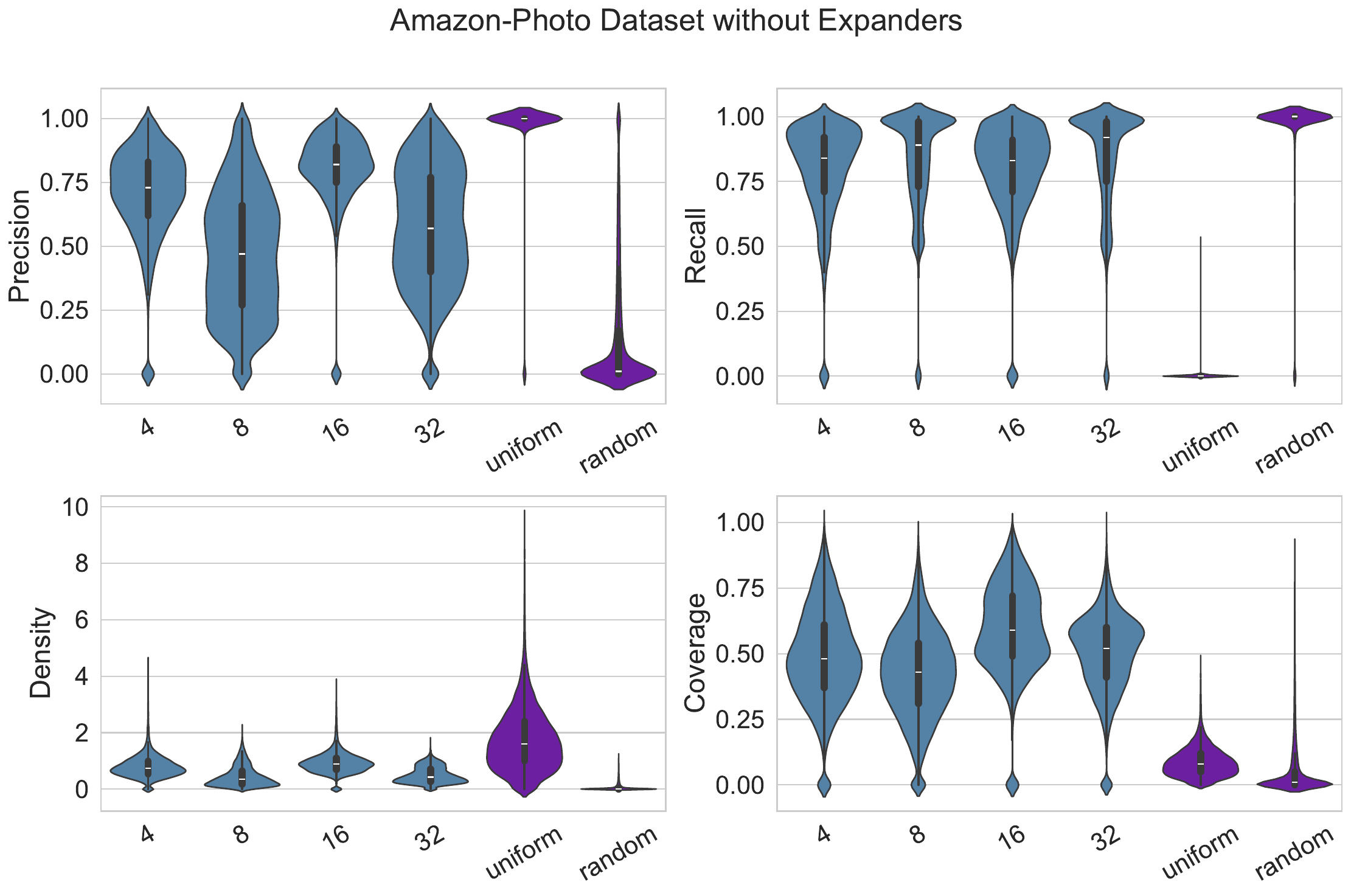}
    \caption{Violin plots of Precision, Recall, Density, and Coverage metrics for the Amazon-Photo dataset without expander graphs.}
    \label{fig:prdc_photo_wo_exp}
\end{figure}

\begin{figure}
    \centering
    \includegraphics[width=\linewidth]{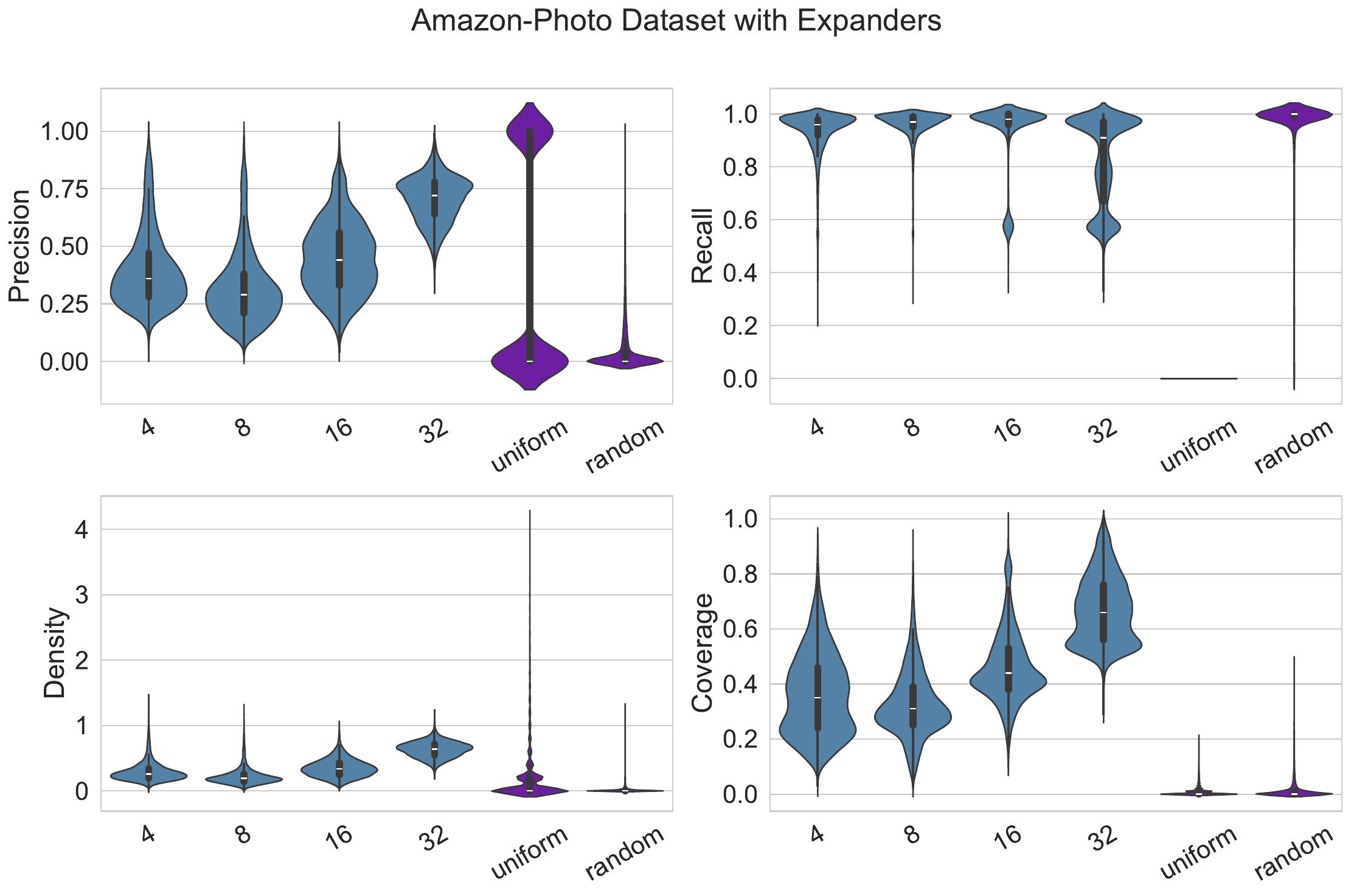}
    \caption{Violin plots of Precision, Recall, Density, and Coverage metrics for the Amazon-Photo dataset with expander graphs.}
    \label{fig:prdc_photo_w_exp}
\end{figure}

\subsection{Precision, Recall, Density, \& Coverage (PRDC)}  
\label{sec:prdc}
Since the energy distance may not fully capture how distributions match in some cases, alternative metrics have been proposed, primarily for assessing the performance of generative models \citep{sajjadi2018assessing, naeem2020reliable}. In this work, we apply these metrics by considering the attention scores from the network with a hidden dimension of $64$ as the reference distribution, assuming that all other dimensions aim to generate the same distribution. We use violin plots to illustrate the distribution of PRDC values across the nodes in each layer. The results are presented in \cref{fig:prdc_actor_wo_exp,fig:prdc_actor_w_exp,fig:prdc_photo_wo_exp,fig:prdc_photo_w_exp}. The plots show the kernel density estimate of the corresponding metrics across all nodes, layers, and random initializations.

Precision \& Recall and Density \& Coverage are pairs of metrics that together describe how well the distribution has been learned. Excelling in just one of these metrics does not necessarily imply that the samples are close to each other. As shown in the results, attention scores from other hidden dimensions consistently achieve high values across all metrics, while uniform distribution and random attention scores fall short in at least one of the metrics from each pair.

\begin{figure}
    \centering
    \includegraphics[width=\linewidth]{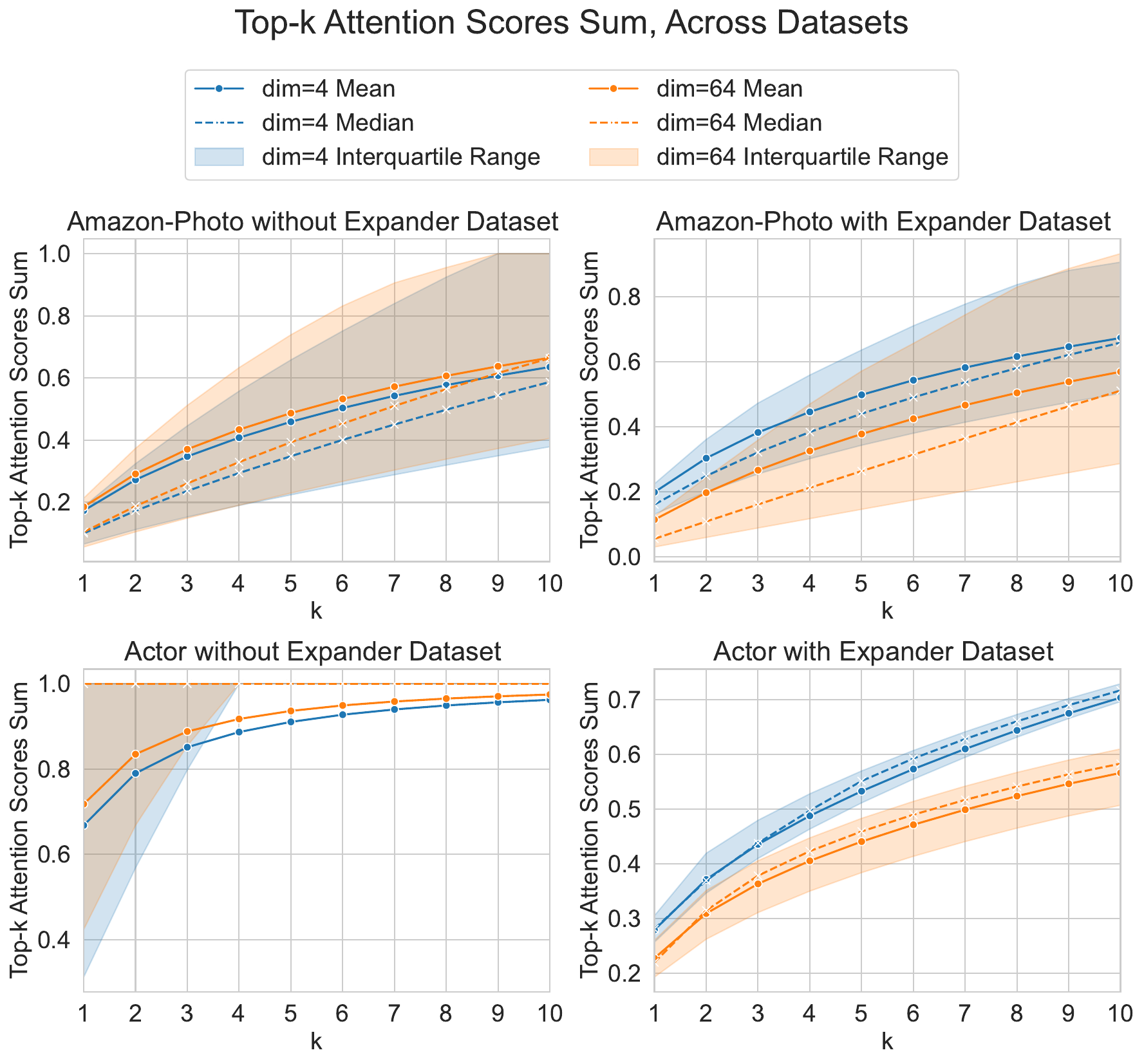}
    \caption{Top-k attention scores sum for k values between $1$ to $10$.}
    \label{fig:topk-attention-sum}
\end{figure}

\subsection{Top-k Attention Sum}
\label{sec:top-k}
Another way to assess the sharpness of the attention scores is by examining the sum of the top-$k$ attention scores. If the top-$k$ attention scores for a small $k$ almost sum to one for all nodes, then using the top-$k$ scores can closely approximate the representations of the larger network. However, this is not always the case. In this experiment, we analyze the sum of the top-$k$ attention scores for $k$ ranging from one to ten, across all nodes for hidden dimensions of 64 and 4. While the top-$k$ attention score distributions are similar, the assumption that the sum will be close to one is unrealistic and does not occur frequently. The results, shown in \cref{fig:topk-attention-sum}, include mean, median, and interquartile range, which indicate the spread of the middle $50\%$ of the results. These results suggest that top-$k$ attention selection may not be fully representative in transductive learning on graphs. This could be due to the presence of many similar nodes, causing the attention to be distributed across these nodes rather than being concentrated on a small subset, which affects the ability to approximate the larger network effectively using just the top-$k$ scores.

\begin{tcolorbox}[colback=yellow!10!white, colframe=yellow!50!black, title=\faLightbulb\ Insight 4]
The attention scores in the layers after the first are consistently very similar to one another, but distinct from the attention scores in the first layer.
\end{tcolorbox}

\begin{figure}
    \centering
    \includegraphics[width=\linewidth]{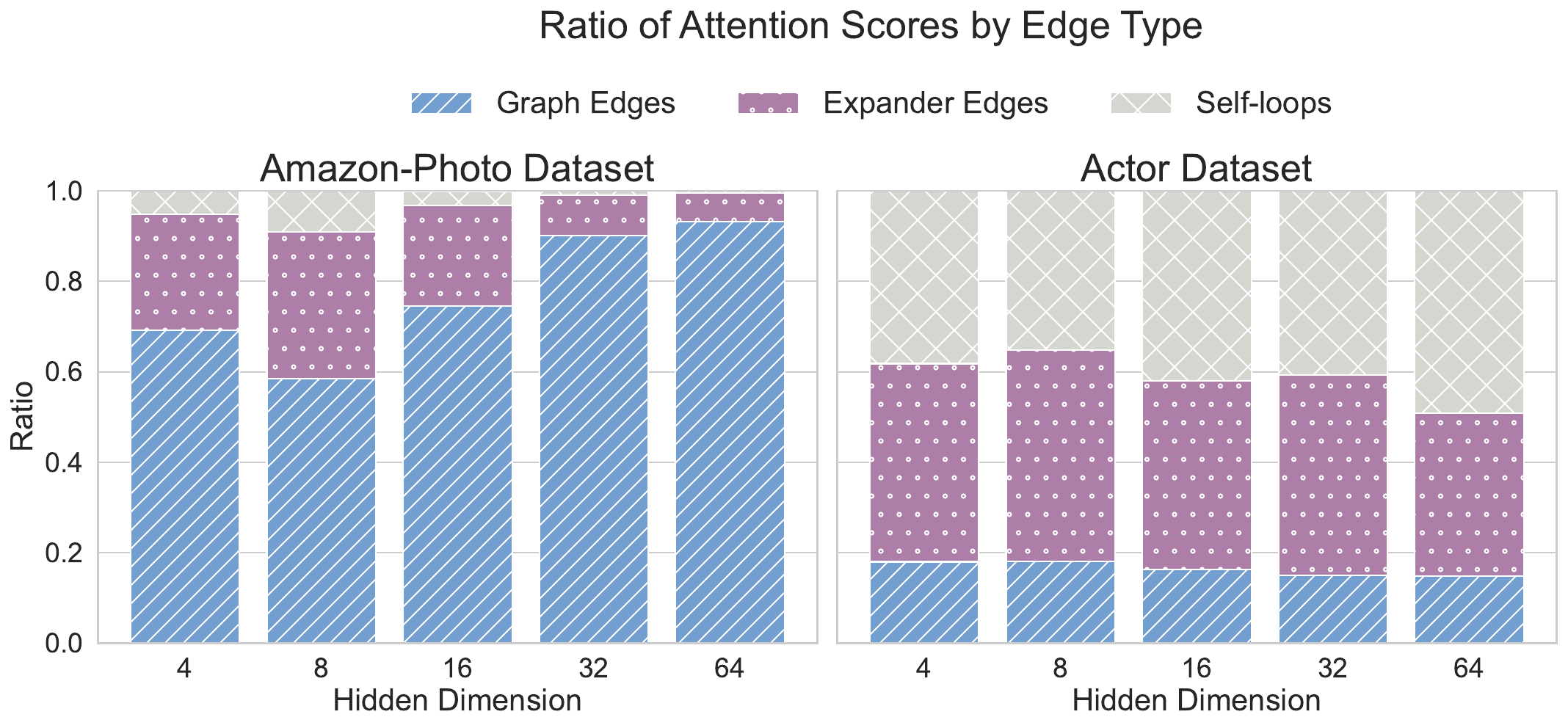}
    \caption{Average sum of attention scores for different edge types—graph edges, expander edges, and self-loops—per node neighborhood. The total sum of attention scores per node is one. }
    \label{fig:edge_type_sum}
\end{figure}

\begin{figure}
    \centering
    \includegraphics[width=\linewidth]{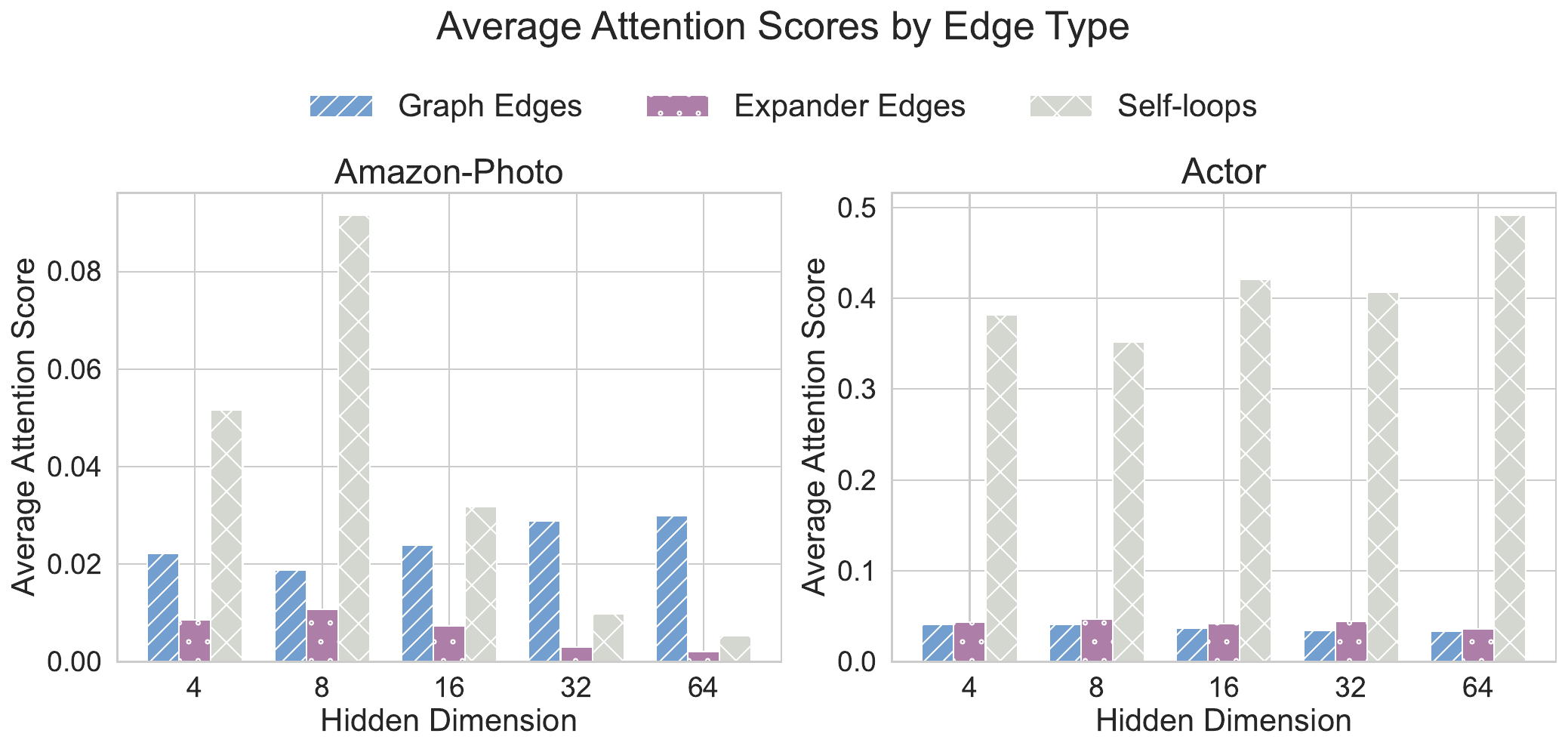}
    \caption{Average attention scores for different edge types across two datasets and for different hidden dimensions.}
    \label{fig:edge_type_normalized}
\end{figure}

\subsection{Attention Scores by Edge Type}
\label{sec:edge_type}
An interesting question is to examine the ratio of attention scores coming from graph edges versus expander edges and self-loops. \cref{fig:edge_type_sum} illustrates how much of the attention score, on average, is attributed to each edge type across different hidden dimensions. We average the values over all nodes in all layers and random initializations. As expected, for a homophilic dataset like Amazon-Photo, the graph edges are more important. As the model's hidden dimension increases, the model learns to place more attention on these edges. However, for a heterophilic dataset like Actor, the story is different, with graph edges playing a lower role. In \cref{fig:edge_type_normalized}, we present a normalized version showing the average attention score by edge type.

\section{Discussion}
\label{sec:discussion}
Graph datasets arise from various domains, meaning that they might have differing inductive biases. More expressive methods may not necessarily yield better results on all datasets \citep{franks2024weisfeiler}. Depending on the architecture and the task, more complex models can even lead to poorer results. Here, we discuss possible scenarios in which our model can be a good fit as well as the shortcomings of other classes of models that are overcome by our model.

\paragraph{Graph Structure} The relevance of the structure of the graph to the task can vary. For the simple synthetic task introduced in \ref{fig:synthetic_task}, the structure of the graph does not matter. So Transformers without inductive biases of the graphs are expressive enough to solve this problem; however message-passing networks will be restricted to the graph edges and rely on enough number of layers and may be challenged by oversquashing and oversmoothing problems. On the other hand, if the structure of the graph matters, such as counting the number of neighbor nodes with the same color for each node, the structure and the edges will be an important part. Transformers without expressive enough encodings to identify the graph edges will fail in this task. On the other hand, MPNNs even with one layer can easily solve this problem. Our approach enables solving problems in either case, by having both expander graphs for universal information propagation and the actual graph edges for inductive bias, allowing the model to decide the subset of edges that suit the task better --- only graph edges, only expander edges or a combination of both.

\paragraph{Short-range Vs. Long-range Dependencies} If the neighboring nodes tend to be from the same class, i.e., {\em high homophily}, MPNNs and methods such as NAGphormer \citep{nagphormer22}, which summarize the neighborhood have good inductive biases; whereas Transformers without proper identification for the neighborhoods may not be as fit for this task. Heterophily may not necessarily mean long-range dependencies, label of each node may just depend on the neighbor nodes, but still label of the neighbor nodes may be different most of the time. For example, for finding the grammatical function of the words in a sentence from a very long text, neighboring words are usually enough for this identification, and nearby words would be from different classes. On the other hand, some tasks may require long-range dependencies --- identifying if there are other people in a social network with similar interests or the synthetic task introduced in \ref{fig:synthetic_task} are some examples. Local models such as MPNNs would require deeper networks for modeling long-range dependencies that makes them prone to common problems such as oversquashing and oversmoothing \citep{topping2021understanding, di2023does, di2023over, rusch2023survey}. Our approach can be reduced to MPNN by giving lower attention scores to the expander edges, for learning on the tasks with short-range dependencies only. And also lets the long-range dependency modeling using expander edges. While models designed specifically for some of these tasks may have the advantage of reduced complexity. But our approach lets learning without concern about the nature of the problem or having domain knowledge for the task or graph.

\paragraph{Subsampling Graphs} Many approaches break the graph into sections or subsample nodes or neighbors for training. This approach has shown promising results in many works such as \citep{ZengZSKP20, hamilton2017inductive, liu2021sampling}. However, there are many cases in which these approaches are not expressive enough. Clustering the nodes or batching and subsampling based on the neighborhood will not have the required inductive biases to solve the tasks with long-range dependencies. Approaches such as neighbor sampling or connected-subgraph sampling not only inherit the limits of the MPNN networks, but may even miss short-range dependencies. For example, Example (c) in \ref{fig:synthetic_task} by merely random selection of the neighbors or subgraphs without considering the task. Random subset of node selection that has been used in several promising papers such as \cite{wu2022nodeformer, wu2023difformer, wu2024simplifying} gives a chance for nodes from the same label to appear in the same batch, but the batch-size should increase with the graph size accordingly. Very small ratio of batch size to graph size would mean many edges or possible pair of nodes will never be appear in any batch and depending on the task this can limit the power of these models. Also, these models are usually not memory efficient, as graph size grows, they can not keep the batches small, and the required memory grows accordingly. On the other hand, our approach (1) makes smarter selection of neighbors based on the small network's attention scores; (2) our sampling allows making k-hop neighborhood subgraphs from the extended graph connectivity, and (3) allows the training by trading off memory and time, without critical harm to the model's expressive power. Unline the GraphSAGE and SGFormer, which use the full graph for the inference time our model uses the same sampling and batching techniques, letting efficient inference beside the efficient training.

\end{document}